\newcommand{\x}{\mathbf{x}}
\newcommand{\s}{\mathbf{s}}
\newcommand{\z}{\mathbf{z}}
\newcommand{\E}{\mathbb{E}}
\def\indep{\perp\!\!\!\perp}
\theoremstyle{plain}
\newtheorem{theorem}{Theorem}
\newtheorem{proposition}[theorem]{Proposition}
\newtheorem{assumption}[theorem]{Assumption}
\theoremstyle{remark}
\DeclareMathOperator*{\KL}{\mathrm{KL}}
\newcommand{\kbet}{\text{kBET}}
\newcommand{\mkbet}{\text{m-kBET}}
\newcommand{\tcgaccle}{Tumour / Cell Line}
\mathchardef\mhyphen="2D
\icmltitlerunning{Contrastive Mixture of Posteriors for Counterfactual Inference, Data Integration and Fairness}
\begin{document}

\twocolumn[
\icmltitle{Contrastive Mixture of Posteriors for Counterfactual Inference, Data Integration and Fairness}

\icmlsetsymbol{equal}{*}

\begin{icmlauthorlist}
\icmlauthor{Adam Foster}{oxford}
\icmlauthor{Árpi Vezér}{bai}
\icmlauthor{Craig A. Glastonbury}{bai,techno}
\icmlauthor{Páidí Creed}{bai}
\icmlauthor{Sam Abujudeh}{bai}
\icmlauthor{Aaron Sim}{bai}
\end{icmlauthorlist}

\icmlaffiliation{oxford}{Microsoft Research, Cambridge, UK. This work was completed when AF was an intern at BenevolentAI and a DPhil
student at the Department of Statistics, University of Oxford.
}
\icmlaffiliation{bai}{BenevolentAI, 4--8 Maple Street, London, UK}
\icmlaffiliation{techno}{Human Technopole, V.le Rita Levi-Montalcini, 1, 20157 Milano, Italy}

\icmlcorrespondingauthor{Adam Foster}{adam.e.foster@microsoft.com}

\icmlkeywords{Machine Learning, ICML, Counterfactual, Data integration, Fairness, VAE, CVAE, Contrastive, Cell alignment}

\vskip 0.3in
]

\printAffiliationsAndNotice{} %

\begin{abstract}
Learning meaningful representations of data that can address challenges such as batch effect correction and counterfactual inference is a central problem in many domains including computational biology. Adopting a Conditional VAE framework, we show that marginal independence between the representation and a condition variable plays a key role in both of these challenges. We propose the Contrastive Mixture of Posteriors (CoMP) method that uses a novel misalignment penalty defined in terms of mixtures of the variational posteriors to enforce this independence in latent space. We show that CoMP has attractive theoretical properties compared to previous approaches, and we prove counterfactual identifiability of CoMP under additional assumptions. We demonstrate state-of-the-art performance on a set of challenging tasks including aligning human tumour samples with cancer cell-lines, predicting transcriptome-level perturbation responses, and batch correction on single-cell RNA sequencing data. We also find parallels to fair representation learning and demonstrate that CoMP is competitive on a common task in the field.
\end{abstract}
\section{Introduction}
Large scale datasets describing the molecular properties of cells, tissues and organs in a state of health and disease are commonplace in computational biology. Referred to collectively as `omics data, thousands of features are measured per sample and, as single-cell methodologies have developed, it is now typical to measure such features across $10^5$--$10^6$ observations \cite{svensson2018exponential, regev2017science}.
Given these two properties of `omics data, the need for scalable algorithms to learn meaningful low-dimensional representations that capture the variability of the data has grown.
As such, Variational Autoencoders (VAEs)~\cite{kingma2013auto,Rezende2014StochasticBA} have become an important tool for solving a range of modelling problems in the biological sciences \cite{lopez2018deep, way2018extracting, wang2018vasc, gronbech2020scvae, lotfollahi2019conditional, lotfollahi2019scgen}.
One such problem is utilising representations for counterfactual inference, e.g.~predicting how a certain cell or cell-type, observed in the control group, would have behaved when exposed to a drug \citep{lotfollahi2019conditional, lotfollahi2019scgen, Amodio2018OutofSampleEW}.
Another key problem is removing batch effects---spurious shifts in observations due to differing experimental conditions---from data in order to integrate or compare multiple datasets \cite{lopez2018deep, Johnson2007AdjustingBE, Leek2007CapturingHI, Haghverdi2018BatchEI, Celligner}.

Our approach to these problems is to learn a VAE representation that is marginally independent of a condition variable (e.g.~experimental batch, stimulated vs. control).
Figure~\ref{fig:kang_umap_intro} [CoMP] illustrates what this looks like in practice: the complete overlap of the cell populations from different conditions in the latent space.
For data integration, the resulting representation perfectly integrates distinct batches, assuming there are no population-level differences between them.
To predict the effects of interventions, following \citet{lotfollahi2019conditional}, we encode control data to representation space, and decode it back to the original space under the stimulated condition.
Alignment of control and stimulated cells in representation space isolates the effects of interventions to the decoder network, and is a necessary condition for the encode--swap--decode algorithm to provide correct predictions.
This same independence constraint also occurs in fair representation learning, where we seek a representation that cannot be used to recover a sensitive attribute \citep{zemel2013learning,louizos2015variational}.

Neither the standard VAE nor the conditional VAE (CVAE)~\cite{sohn2015learning} are typically successful at learning representations that achieve this desired independence, as shown in Figure~\ref{fig:kang_umap_intro}.
Existing methods use a penalty to encourage the CVAE to learn representations that overlap correctly in latent space, with Maximum Mean Discrepancy (MMD)~\cite{gretton2012kernel} being the most common, applied in the VFAE~\cite{louizos2015variational} and trVAE \cite{lotfollahi2019conditional}.
These methods, however, suffer from a number of drawbacks: conceptually, they introduce an extraneous discrepancy measure that is not a part of the variational inference framework; practically, they require the choice of, and  hyperparameter tuning for, an MMD kernel; empirically, whilst trVAE is a significant improvement over an unconstrained CVAE, Figure~\ref{fig:kang_umap_intro} [trVAE] shows that it may fail to exactly align different conditions.

To overcome these difficulties, we introduce \emph{Contrastive Mixture of Posteriors (CoMP)}, a new method for learning aligned representations in a CVAE framework.
Our method features the novel CoMP misalignment penalty that compels the CVAE to remove batch effects.
Inspired by contrastive learning \cite{oord2018representation,chen2020simple}, the penalty encourages representations from different conditions to be close, whilst representations from the same condition are spread out.
To achieve this, we approximate the requisite marginal distributions using mixtures of the variational posteriors themselves, leading to a penalty that does not require an extraneous discrepancy measure or a separately tuned kernel.
We prove that the CoMP penalty is a stochastic upper bound on a weighted sum of KL divergences, so minimising the penalty minimises a well-established statistical divergence measure.
As shown in Figure~\ref{fig:kang_umap_intro} [CoMP], our method can achieve visually perfect alignment on a number of real-world biological datasets.

Theoretically, counterfactual inference provides the formal framework to discuss data integration \citep{bareinboim2016causal}, perturbation response prediction, and fairness \citep{kusner2017counterfactual}.
We demonstrate that the constrained CVAE approach is \emph{not} always able to compute counterfactuals, even with infinite data. However, introducing additional assumptions, including non-Gaussianity of the latent distribution, we prove counterfactual identifiability and model consistency in our framework.
This begins to provide theoretical grounding, not only for CoMP, but for related methods \citep{louizos2015variational,lotfollahi2019conditional}.

We apply CoMP to three challenging biological problems\footnote{Source code for the experiments is provided at \url{https://github.com/BenevolentAI/CoMP}.}: 1) aligning gene expression profiles between tumours and their corresponding cell-lines \cite{Celligner}, 2) estimating the gene expression profile of an unperturbed cell as if it \textit{had} been treated with a chemical perturbation \cite{lotfollahi2019scgen}, 3) data integration with single-cell RNA-seq \citep{Harmony}. We show that CoMP outperforms existing methods, achieving state-of-the-art performance on these tasks.
We also show that CoMP can learn a representation that is fully independent of a protected attribute (gender) whilst maintaining useful information for other prediction tasks on the UCI Adult Income dataset \citep{Dua2019UCIIncome}. CoMP represents a conceptually simple and empirically powerful method for learning aligned representation, opening the door to answering high-value questions in biology and beyond.

\begin{figure}[t]
  \centering
  \includegraphics[width=0.99\columnwidth]{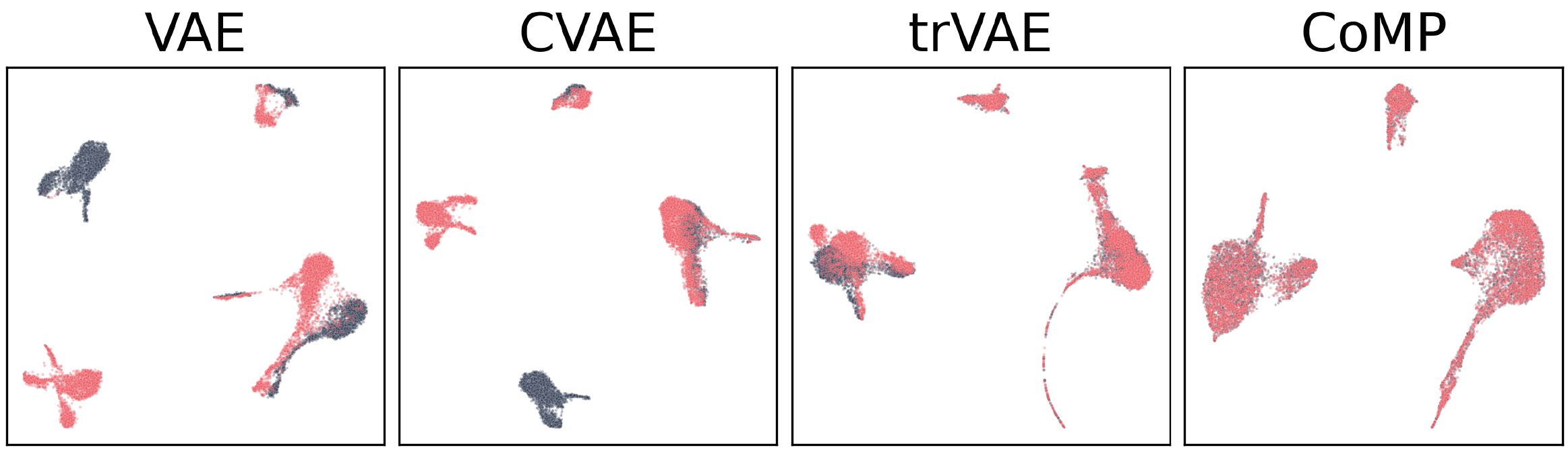}
  \caption{Latent representations of a single-cell gene expression dataset under two conditions: stimulated ({\color{red} red}) and not stimulated (black). Full details in Section \ref{sec:kang}. From fully disjointed (VAE) to a well-mixed pair of distributions (CoMP).}
  \label{fig:kang_umap_intro}
\end{figure}
\section{Background}
\subsection{Variational Autoencoders and extensions}
We begin by assuming that we have $n$ observations $\x_1,\dotsc,\x_n$ of an underlying data distribution.
For example, $\x_i$ may represent the gene expression profiles of $n$ cells.
Variational autoencoders (VAEs) \cite{kingma2013auto,Rezende2014StochasticBA} explain the high-dimensional observations $\x_i$ using low dimensional representations $\z_i$. The standard VAE places a Gaussian prior $\z\sim p(\z)$ on the latent variable, and learns a generative model $p_\theta(\x|\z)$ that reconstructs $\x$ using $\z$, alongside an inference network $q_\phi(\z|\x)$ that encodes $\x$ to $\z$. Both $\theta$ and $\phi$ are trained jointly by maximising the ELBO, a lower bound on marginal likelihood given by $\log p_{\theta}(\x) \ge \E_{q_\phi(\z|\x)}\left[\log p_\theta(\x|\z)\right] - \text{KL}\left[q_\phi(\z|\x) \| p(\z)\right]$. This can be maximised using stochastic optimisers \cite{robbins1951stochastic,kingma2014adam}.

So far, we have assumed that the only data available are the observations $\x_1,\dotsc,\x_n$, but in many practical applications we may have additional information such as a condition label for each observation.
For example, in gene knock-out studies, we have information about which gene was targeted for deletion in each cell; in multi-batch experiments we have information about which experimental batch each samples was collected in.
Thus, we augment our data by considering data pairs $(\x_1,c_1),\dotsc,(\x_n,c_n)$ where $\x$ is a high-dimensional observation, and $c$ is a label indicating the condition or experimental batch that $\x$ was collected under.

Whilst VAEs are theoretically able to model the pairs $(\x_i,c_i)$, it makes sense to build a model that explicitly distinguishes between the $\x$ and $c$. 
The simplest model is the Conditional VAE (CVAE) \cite{sohn2015learning}. In this model, a conditional generative model $p_\theta(\x|\z,c)$ and a conditional inference network $q_\phi(\z|\x,c)$ are trained using a modified ELBO.
A key observation for our work is that the CVAE has many different ways to model the data. For example, it can completely ignore the condition $c$ in $p_\theta$ and $q_\phi$, reducing to the original VAE.
Assuming that $\x$ is not independent of $c$, this failure mode of the CVAE would be apparent on a visualisation of the representations. For example, different values of $c$ might be visible as separate latent clusters, as shown in Figure~\ref{fig:kang_umap_intro} [CVAE].

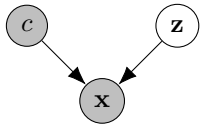
\begin{figure}[t]
	\begin{center}
		\begin{tikzpicture}
		\node (x) [draw, circle, fill=lightgray] at (0, 0) {$\x$};
		\node (y) [draw, circle, fill=lightgray] at (-1, 1) {$c$};
		\node (z) [draw, circle] at (1, 1) {$\z$};
		\draw[-{Latex[length=2.5mm]}] (z) -> (x);
		\draw[-{Latex[length=2.5mm]}] (y) -> (x);
		\end{tikzpicture}
	\end{center}
	\caption{Structural Equation Model for observation $\x$ under known condition $c$ with unobserved latent variable $\z$. In this model, $\z$ and $c$ are independent in the prior.}
	\label{fig:pgm}
\end{figure}

\subsection{Counterfactual inference} \label{sec: cf}
If $(\x_i,c_i)$ represents an RNA transcript and the gene knock-out applied to the cell, a natural question to ask is ``How would the transcript have differed if a different knock-out $c'$ had been applied?''
In general, \emph{counterfactual inference}  is necessary to answer questions of the form ``How would the data have changed if $c_i$ had been replaced by $c'$?''
In this paper, we assume access to unpaired data, meaning that each cell $\x_i$ is observed only in one condition $c_i$.
Answering counterfactual questions with such data is a notoriously difficult task, because they naturally refer to unobservable data \cite{pearl2009causality}.
A principled approach to such questions is to adopt the framework of Structural Equation Models \cite{bollen2005structural,pearl2009causality}. In this paper, we assume that the data generating process is given as in Figure~\ref{fig:pgm}.
If this model is correct, counterfactual inference in the Pearl framework \cite{pearl2009causality} can then be performed by: 1) \emph{abduction}: inferring the latent $\z$ from $\x$ and $c$ using $p(\z|\x,c)$, 2) \emph{action}: swap $c$ for $c'$, 3) \emph{prediction}: use $p(\x|\z,c')$ to obtain a predictive distribution for the counterfactual.
Thus, the counterfactual distribution of $\x_i$ observed with condition $c_i$ but predicted for condition $c'$ is given by
\begin{equation}
\label{eq:counter}
    p\left(\x_{c=c'}|\x_i,c_i\right) = \int p(\z|\x_i,c_i)p(\x|\z,c')\ d\z.
\end{equation}

\section{The constraint $\z \indep c$}
\label{sec:unify}

Our high-level approach is to learn a CVAE model with the constraint that $\z \indep c$ under the encoder distribution $q_\phi$. Visually, this means that latent representations from different conditions are aligned.
To achieve this, we introduce a novel training penalty that penalises misalignment between different conditions (Section~\ref{sec:contrastive}). We first discuss how this general approach applies to data integration, fair representation learning and counterfactual inference.

For data integration where $c_i$ indicates experimental batch, spurious shifts may be present in the distribution of $\x$ due to differing experimental conditions, as opposed to true changes in the underlying biology.
Using our approach, the latent $\z$ can be used in place of $\x$ for downstream tasks, thereby integrating data from different batches. Intuitively, by enforcing $\z \indep c$ we `subtract' batch effects, leaving a representation that has the same marginal distribution between batches.
Alternatively, we can reconstruct the $\x_i$ as if they arose under a single experimental batch, performing batch correction in the original space. By enforcing $\z \indep c$ in our encoder, we are assuming that there are no population-level differences between batches. This assumption bears a close resemblance to the assumptions used (sometimes implicitly) in latent factor \citep{Leek2007CapturingHI,stegle2012using} and CVAE \citep{zuo2021deep,lotfollahi2019conditional} models for data integration. We discuss our assumptions and ways to relax them in Section~\ref{sec:summary}.

In fair representation learning, the notion of building a representation that cannot be used to recover $c$ has been studied widely in recent literature \citep{zemel2013learning,louizos2015variational,kusner2017counterfactual, glastonbury2018adjusting}. 
In particular, if we wish to make a predictive rule based on $\x$ that does not discriminate between individuals in different conditions $c$, we can use a fair representation $\z$, one which contains information from $\x$ but cannot be used to recover $c$, as an intermediate feature and train our model using $\z$. 
Being unable to recover $c$ from $\z$ is equivalent to our constraint $\z \indep c$ (see Appendix~\ref{sec:characterising}). 

For counterfactual inference, we can estimate equation~\eqref{eq:counter} by replacing the true data generating distributions with model-based estimates $q_\phi(\z|\x_i,c_i)$ and $p_\theta(\x|\z,c')$, giving
\begin{equation}
    \label{eq:approx_counter}
    \hat{p}(\x_{c=c'}|\x_i,c_i) = \int q_\phi(\z|\x_i,c_i)p_\theta(\x|\z,c')\ d\z.
\end{equation}
The failure mode in which different values of $c$ form separate latent clusters,  as in Figure~\ref{fig:kang_umap_intro} [CVAE], can be catastrophic for this application, because it violates assumptions of Figure~\ref{fig:pgm}.
However, it is not true that the constraint $\z \indep c$ alone is sufficient to guarantee the correct estimation of counterfactuals using \eqref{eq:approx_counter}. We discuss this is Section~\ref{sec:theory}, and prove that, under additional assumptions, it becomes possible to identify counterfactuals using our CVAE approach with $\z \indep c$.
Counterfactual inference also provides a more rigorous foundation to discuss both fairness \citep{chiappa2019path,kusner2017counterfactual,zhang2016causal,NIPS2017_f5f8590c,NEURIPS2018_ff1418e8} and data integration \citep{bareinboim2016causal}.
As such, these three problems have a deep underlying connection.

\section{Contrastive Mixture of Posteriors}
\label{sec:contrastive}
Our approach to counterfactual inference, data integration and fair representation learning centres on learning a representation such that the latent variable $\z$ is independent of the condition $c$ under the distribution\footnote{We drop the $\theta,\phi$ subscripts on $p_\theta$ and $q_\phi$ in this section for conciseness and legibility.} $q$, so that the latent clusters with different values of $c$ are perfectly aligned.
Building off the CVAE, which rarely achieves this in practice, a number of authors have attempted to use a penalty term to reduce the dependence of $\z$ upon $c$ during training.
The most successful methods, such as trVAE \cite{lotfollahi2019conditional}, are based on MMD \cite{gretton2012kernel}. 
Whilst trVAE and related methods can work well, they require an MMD kernel, not a part of the original model, to be specified and its parameters to be carefully tuned. 
Experimentally, we observe that MMD-based methods can often struggle when there is complex global structure in the latent space.
We also analyse the gradients of MMD penalties, showing that they have some undesirable properties.

We propose a novel method to enforce $\z \indep c$ in a CVAE model.
Our penalty is based on posterior distributions obtained from the model encoder itself. That is, we do not introduce any external discrepancy measure, rather we propose a penalty term that arises naturally from the model itself.
Our penalty enforces the equality of the marginal distribution $q(\z|c)$ and $q(\z|\neg\,c)$ for each $c\in\mathcal{C}$, where $q(\z|c) = \E_{p(\x|c)}\left[ q(\z|\x,c) \right]$ represents the marginal distribution of $\z$ over all points within condition $c$ and
\begin{equation}
    q(\z|\neg\,c) = \frac{\sum_{c'\in\mathcal{C}, c'\ne c}p(c')q(\z|c')}{\sum_{c'\in\mathcal{C},c'\ne c} p(c')}
\end{equation}
represents the marginal distribution of $\z$ over all conditions not equal to $c$.
In Appendix~\ref{sec:characterising}, we show that the statement `$q(\z|c) = q(\z|\neg\,c)$ for each $c$' is equivalent to the statement `$\z \indep c$ under the distribution $p(\x,c)q(\z|\x,c)$'.
Therefore, enforcing $\z \indep c$ is the same as enforcing $q(\z|c)$ and $q(\z | \neg,c)$ to be equal for every $c$.

To encourage greater overlap between $q(\z|c)$ and $q(\z|\neg\,c)$, we can encourage points with the condition $c$ to be in areas of high density under the representation distribution for \emph{other} conditions, i.e.~areas in which $q(\z|\neg\,c)$ is also high. 
To encourage this, we can add the penalty term $\mathcal{P}_0(\z_i,c_i) = -\log q(\z_i|\neg\,c_i)$ to the objective for the data pair $(\x_i,c_i)$.
When we minimise $\mathcal{P}_0$, this brings the representations of samples under condition $c_i$ towards regions of high  density under $q(\z|\neg\,c_i)$.

Since the density $q(\z|\neg\,c)$ is not known in closed form, we approximate $q(\z|\neg\,c)$ using other points in the same training batch as $(\x_i,c_i)$. 
Indeed, suppose we have a batch $(\x_1,c_1),...,(\x_B,c_B)$. We let $I_c$ denote the subset of indices for which $c_j=c$ and $I_{\neg c}$ denote its complement. We use the approximation
\begin{equation}
    \label{eq:batch_mix}
    \log q(\z_i|\neg\,c_i) \approx \log\biggl(\frac{1}{|I_{\neg c_i}|} \sum_{j\in I_{\neg c_i}} q(\z_i|\x_j, c_j) \biggr)
\end{equation}
and we will show in Theorem~\ref{thm:kl} that this approximation in fact leads to a valid stochastic bound.

It may happen that the penalty $\mathcal{P}_0$ causes points to become too tightly clustered. 
Inspired by contrastive learning \citep{oord2018representation}, we include a second term which promotes higher entropy of the marginal, thereby avoiding tight clusters of points. Combined with $\mathcal{P}_0$, this leads us to a second penalty $\mathcal{P}_1(\z_i,c_i) = \log{q(\z_i|c_i)}-\log {q(\z_i|\neg\, c_i)}$.
Again, the density $q(\z|c)$ is not known in closed form, but we can approximate it using points within the same training batch in a similar fashion to \eqref{eq:batch_mix}.
Combining both approximations to estimate $\mathcal{P}_1$ and then taking the mean of the penalty over the batch gives our \emph{Contrastive Mixture of Posteriors (CoMP) misalignment penalty}
\begin{equation}
\begin{split}
    \begin{matrix}
    \text{CoMP} \\
    \text{penalty}
    \end{matrix} =  
    \frac{1}{B}\sum_{i=1}^B \log\biggl(\frac{\frac{1}{|I_{c_i}|}\sum_{j\in I_{c_i}} q(\z_i|\x_j,c_i)}{\frac{1}{|I_{\neg c_i}|} \sum_{j\in I_{\neg c_i}} q(\z_i|\x_j, c_j)} \biggr) %
\end{split}
\end{equation}
where $\x_{1:B},c_{1:B},\z_{1:B} \sim \prod_{i=1}^B p(\x_i,c_i)q(\z_i|\x_i,c_i)$ is a random training batch of size $B$, $I_c$ denotes the subset of $\{1,\dots,B\}$ with condition $c$ and $I_{\neg c}=\{1,\dots,B\} \setminus I_c$.
Our method therefore utilises a training penalty for CVAE-type models that encourages the constraint $\z \indep c$ to hold by using mixtures of the variational posteriors themselves to approximate $q(\z|c)$ and $q(\z|\neg\, c)$.

As hinted at by the definition of $\mathcal{P}_1$, CoMP can be seen as approximating a symmetrised KL-divergence between the distributions $q(\z|c)$ and $q(\z|\neg\,c)$.
In fact, the following theorem shows that the CoMP misalignment penalty is a \emph{stochastic upper bound on a weighted sum of KL-divergences}.
\begin{restatable}{theorem}{kl}
\label{thm:kl}
The CoMP misalignment penalty satisfies
\begin{align*}
\begin{split}
    &\E\Biggl[\frac{1}{B}\sum_{i=1}^B \log\biggl(\frac{\frac{1}{|I_{c_i}|}\sum_{j\in I_{c_i}} q(\z_i|\x_j,c_i)}{\frac{1}{|I_{\neg c_i}|} \sum_{j\in I_{\neg c_i}} q(\z_i|\x_j, c_j)} \biggr) \Biggr] \\ & \qquad \qquad \ge \sum_{c\in\mathcal{C}} p(c) \KL\left[q(\z|c) || q(\z|\neg\,c) \right]
\end{split}
\end{align*}
where the expectation is over $\prod_{i=1}^B p(\x_i,c_i)q(\z_i|\x_i,c_i)$. The bound becomes tight as $B\to\infty$.
\end{restatable}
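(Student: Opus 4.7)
The plan is to exploit batch-symmetry to reduce to a single-index statement, then bound $\log A_i$ from below and $\log B_i$ from above by separate arguments, where $A_i = \frac{1}{|I_{c_i}|}\sum_{j\in I_{c_i}} q(\z_i|\x_j,c_i)$ and $B_i = \frac{1}{|I_{\neg c_i}|} \sum_{j\in I_{\neg c_i}} q(\z_i|\x_j, c_j)$. By linearity and exchangeability across $i$, the left-hand side equals $\E[\log A_1] - \E[\log B_1]$; rewriting the expectation under the marginal law $(c_1,\z_1)\sim p(c)q(\z|c)$ turns the right-hand side into $\E[\log q(\z_1|c_1)] - \E[\log q(\z_1|\neg c_1)]$. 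So it suffices to prove $\E[\log A_1] \ge \E[\log q(\z_1|c_1)]$ and $\E[\log B_1] \le \E[\log q(\z_1|\neg c_1)]$.

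The $B_1$ bound is the easy direction. Conditional on $c_{1:B}$ and $\z_1$, the variables $\x_j$ for $j\in I_{\neg c_1}$ are independent of $\z_1$ with $\x_j\sim p(\x|c_j)$, so $\E[q(\z_1|\x_j,c_j)\mid c_{1:B},\z_1]=q(\z_1|c_j)$. Concavity of $\log$ yields $\E[\log B_1 \mid c_{1:B},\z_1]\le \log\bigl(\frac{1}{|I_{\neg c_1}|}\sum_{j\in I_{\neg c_1}} q(\z_1|c_j)\bigr)$, and a second Jensen over $c_{2:B}\mid c_1$ collapses this to $\log q(\z_1|\neg c_1)$.

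The $A_1$ bound is the main obstacle, because $\x_1$ conditional on $(\z_1,c_1)$ is drawn from the posterior, not the prior, making the $j=1$ term in $A_1$ upward-biased and pointing the naive Jensen the wrong way. My plan is a change-of-measure plus symmetrisation argument: conditional on $c_{1:B}$ and $\z_1$, replace the law of $\x_1$ by the unconditional $p(\x|c_1)$ at the cost of the importance weight $w(\x_1)=q(\z_1|\x_1,c_1)/q(\z_1|c_1)$. Under this new measure all $\x_j\sim p(\x|c_j)$ are independent, the $\x_j$ for $j\in I_{c_1}$ are i.i.d., and $A_1$ is a symmetric function of them. Exchangeability then lets me replace the single-index weight $w(\x_1)$ by the symmetric average $\bar w = \frac{1}{|I_{c_1}|}\sum_{k\in I_{c_1}} w(\x_k) = A_1/q(\z_1|c_1)$ without changing the expectation. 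Writing $Y = A_1/q(\z_1|c_1)$, this gives $\E[\log A_1 \mid c_{1:B},\z_1] = \log q(\z_1|c_1) + \E_{\text{new}}[Y\log Y]$, and since $\E_{\text{new}}[Y]=1$ and $y\log y\ge y-1$ (with equality only at $y=1$), the remainder is non-negative.

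Combining the two bounds and taking outer expectations gives the claimed inequality. For tightness as $B\to\infty$, the law of large numbers forces $A_1\to q(\z_1|c_1)$ and $B_1\to q(\z_1|\neg c_1)$ almost surely, so $Y\to 1$ and both Jensen gaps close; I would justify passing the limit through $\log$ via uniform integrability or a mild boundedness assumption on the posterior densities. The edge case $|I_{\neg c_1}|=0$ is handled by restricting to the natural regime in which every condition is represented in a batch (e.g.\ stratified minibatches), without which the penalty itself is ill-defined.
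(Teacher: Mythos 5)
Your proposal is correct and follows essentially the same route as the paper's proof: reduce to the $i=1$ term by symmetry, bound the denominator term with Jensen's inequality, and handle the numerator term by symmetrising the posterior-weighted sample over the batch so that the Jensen gap becomes $\E[Y\log Y]$ with $\E[Y]=1$ for $Y=A_1/q(\z_1|c_1)$ --- which is exactly the expected KL divergence between the mixture-proposal joint and the product-marginal joint that the paper exhibits, with tightness following from the law of large numbers in both cases.
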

The proof is given in Appendix~\ref{sec:proofs}.
Our result shows that our new penalty directly reduces the KL divergence between each pair $q(\z|c)$, $q(\z|\neg\,c)$ weighted by $p(c)$.
As with standard contrastive learning, our method benefits from larger batch sizes.
We add the CoMP misalignment penalty to the familiar CVAE objective to give our \emph{complete training objective} for a batch of size $B$ as
\begin{align}\begin{split}
    \label{eq:total_training_obj}
    \mathcal{L}^\text{CoMP}_B&(\theta,\phi) =  \frac{1}{B}\sum_{i=1}^B \Biggl[ \log\left(\frac{p_\theta(\x_i|\z_i,c_i)p(\z_i)}{q_\phi(\z_i|\x_i,c_i)}\right) \\ &-\gamma \log\left(\frac{\frac{1}{|I_{c_i}|}\sum_{j\in I_{c_i}} q_\phi(\z_i|\x_j,c_i)}{\frac{1}{|I_{\neg c_i}|} \sum_{j\in I_{\neg c_i}} q_\phi(\z_i|\x_j, c_j)} \right)\Biggr]
\end{split}\end{align}
with hyperparameter $\gamma$ that controls the strength of the regularisation we apply to enforce the constraint $\z \indep c$.

In Appendix~\ref{sec:app:gradients}, we analyse the training gradient of the CoMP penalty, contrasting it with MMD. We show that, unlike MMD, CoMP gradients have a self-normalising property, allowing one to obtain strong gradients for distant points in a latent space with complex global structure.

\section{Theory}
\label{sec:theory}
\subsection{Counterfactual identifiability in CVAE framework}
When estimating the counterfactual predictions of equation \eqref{eq:counter}, we replace the true data generating distributions with model-based estimates $q_\phi(\z|\x_i,c_i)$ and $p_\theta(\x|\z,c')$, giving equation~\eqref{eq:approx_counter}.
Unfortunately, it is possible for a model to fit the training data arbitrarily well, achieving large $p_\theta(\x|c)$, and yet give incorrect counterfactual predictions (see \citet{pearl2000models}, \citet{bareinboim2020pearl}).

In Proposition~\ref{prop:negative} in Appendix~\ref{sec:app:theory}, we show that this issue is present in the CVAE set-up when the true data generating distribution has $\z \sim N(0, I)$. In this example, the \emph{non-identifiability} arises because we can apply a rotation in the latent space for condition $c=1$, but not for $c=0$, leading to different counterfactual predictions.
At a more fundamental level, for a correctly specified model, symmetries of the true latent space distribution (the existence of a transformation $R$ such that $R\z \overset{d}{=} \z$) make counterfactuals non-identifiable. 

Empirically though, we find that the latent space distribution $q(\z)$ is not, in fact, a Gaussian (Sec.~\ref{sec:experiments}) and has no apparent global symmetries. We also find in these experiments that counterfactual inference is stable between different training seeds, and that it accords extremely well with counterfactuals that are estimated using held-out cell type information.
To explain this phenomenon theoretically, we prove that the non-Gaussianity of the true distribution for $\z$ leads, with additional assumptions, to counterfactual identifiability. 
Our assumptions include explicitly disallowing linear symmetries of the true latent distribution.
See Appendix~\ref{sec:app:theory} for formal specification of our assumptions and the proof.
\begin{restatable}{theorem}{counterfactualidentifiability}
\label{thm:counterfactualidentifiability}
Suppose the true data generating distribution has $\z \sim r(\z)$ and linear decoders for each condition. Assume $r(\z)$ is non-Gaussian and that Assumption~\ref{assume:s} holds. Then counterfactuals are identifiable from unpaired data.
\end{restatable}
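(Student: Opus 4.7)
The plan is to exploit classical identifiability results for linear non-Gaussian models, in the spirit of Comon's theorem for ICA. The negative example in Proposition~\ref{prop:negative} fails precisely because a Gaussian latent admits rotational symmetries that can be applied under one condition but not another; once $r(\z)$ is non-Gaussian and Assumption~\ref{assume:s} codifies the absence of non-trivial linear symmetries of $r$, this degree of freedom should disappear and the counterfactual \eqref{eq:counter} should be pinned down.

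I would fix the true generative tuple $(r, A_c, b_c, \Sigma_c)_{c \in \mathcal{C}}$ with $\x \mid \z, c \sim N(A_c \z + b_c, \Sigma_c)$, and let $(\tilde r, \tilde A_c, \tilde b_c, \tilde \Sigma_c)$ be any alternative model satisfying $\z \indep c$ (so $\tilde r$ is shared across conditions) and producing the same observed marginals $p(\x \mid c)$ for every $c$. The goal is to show that both models yield the same counterfactual \eqref{eq:counter}, which I would reduce to showing that $(\tilde A_c, \tilde b_c, \tilde \Sigma_c, \tilde r)$ differs from $(A_c, b_c, \Sigma_c, r)$ only by a single global invertible change of coordinates on $\z$, which cancels out of the counterfactual.

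The argument would go via characteristic functions. For each $c$,
\begin{equation*}
\varphi_{\x \mid c}(t) = \varphi_r(A_c^\top t)\,\exp\!\bigl(i b_c^\top t - \tfrac12 t^\top \Sigma_c t\bigr),
\end{equation*}
and analogously for the tilded model. A Cram\'er-type decomposition of $p(\x \mid c)$ into its Gaussian and Gaussian-free components---well-defined because $r$ is non-Gaussian---lets me read off $b_c$ and $\Sigma_c$ individually, and match $\varphi_r(A_c^\top t) = \varphi_{\tilde r}(\tilde A_c^\top t)$ for every $t$ and $c$. Provided $A_c$ has full column rank (which I expect Assumption~\ref{assume:s} to guarantee), this forces $\tilde \z \overset{d}{=} M_c \z$ for some invertible $M_c$; comparing across two conditions $c \ne c'$ then yields $M_c M_{c'}^{-1} \z \overset{d}{=} \z$, i.e.\ $M_c M_{c'}^{-1}$ is a linear symmetry of $r$. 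Assumption~\ref{assume:s} together with non-Gaussianity forces all such symmetries to be trivial, so $M_c \equiv M$ is the same global change of coordinates for every $c$; substituting into \eqref{eq:counter} shows that both models induce the same counterfactual distribution.

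The main obstacle I foresee is the step from ``no linear symmetries of $r$'' to the conclusion that $M_c M_{c'}^{-1}$ is trivial. In the textbook ICA setting $r$ is a product of independent non-Gaussian components and this follows from Darmois--Skitovich / Kagan--Linnik--Rao; here $r$ is only assumed non-Gaussian, so the formal content of Assumption~\ref{assume:s} will have to do the heavy lifting. A secondary subtlety is ensuring the collection $\{A_c\}_c$ is rich enough that the linear decoder for an unobserved counterfactual condition $c'$ is actually identifiable from the observed conditions at all, which I also expect to be built into Assumption~\ref{assume:s}.
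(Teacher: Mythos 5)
Your proposal follows the same skeleton as the paper's proof, but packages the key step differently and aims at a more general model. The paper works with a noiseless decoder $\x = A_c\z$, reduces the counterfactual to identifiability of $A_1A_0^{-1}$, and uses the ICA decomposition $\z = B\s$ that is built into Assumption~\ref{assume:s}: writing $\x|c = D_c\s$ with $D_c = A_cB$, Comon's theorem identifies each $D_c$ up to a permutation--negation $P_cN_c$, the constraint $\s\indep c$ forces $(P_1N_1)(P_0N_0)^{-1}\s \overset{d}{=} \s$, and the second clause of Assumption~\ref{assume:s} then gives $(P_1N_1)(P_0N_0)^{-1}=I$, so the indeterminacies cancel in $D_1D_0^{-1}=A_1A_0^{-1}$. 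Your route via linear symmetries of $r$ is the same argument in different clothes: the ``main obstacle'' you flag---passing from $M_cM_{c'}^{-1}\z \overset{d}{=} \z$ to $M_cM_{c'}^{-1}=I$---is resolved precisely by the content of Assumption~\ref{assume:s}, which postulates $\z=B\s$ with independent, non-Gaussian, unit-variance sources; conjugating by $B$ and applying Comon's theorem shows any invertible linear symmetry of $r$ has the form $BPNB^{-1}$, and the assumption's exclusion of non-trivial $PN$ with $PN\s\overset{d}{=}\s$ kills it. So your instinct that Darmois--Skitovich/Comon must do the heavy lifting is correct, and the assumption is engineered to supply exactly that. Where you genuinely diverge is in allowing offsets $b_c$ and Gaussian noise $\Sigma_c$: the paper's proof does not cover this (it explicitly restricts to noiseless linear decoders and defers noisy ICA to an extensions remark), and your Cram\'er-decomposition step has a real soft spot---uniqueness of the factorisation of $\varphi_{\x|c}$ into a Gaussian factor and a ``Gaussian-free'' factor fails if the sources $s_i$ admit Gaussian divisors, which mere non-Gaussianity does not rule out, so Gaussian mass can be shifted between $A_c\z$ and the observation noise and $\Sigma_c$ is not separately readable from $p(\x|c)$ without a further indecomposability assumption. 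Restricted to the noiseless case that the paper actually proves, your argument is correct once the flagged symmetry step is filled in as above.
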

The theorem shows that when the true data generating distribution for $\z$ is non-Gaussian but the constraint $\z \indep c$ still holds, then a model which best fits the training data also makes correct counterfactual predictions. In other words, estimating~\eqref{eq:counter} by~\eqref{eq:approx_counter} is valid under these conditions.

\subsection{Consistency of CoMP under prior misspecification}
In the preceding section, we showed that the CVAE framework can identify counterfactuals when the true latent distribution $r(\z)$ is non-Gaussian.
However, our training objective still contains a Gaussian prior $p(\z)$.
Our empirical results (Fig.~\ref{composite}) indicate that this is not problematic, as CoMP is well able to learn non-Gaussian latent distributions.
To ground this in theory, we show that training with the CoMP objective can recover the true model \emph{provided that} the KL-divergence between the true prior and $N(0, I)$ is controlled.
\begin{restatable}{theorem}{consistency}
\label{thm:consistency}
Define $p_{r,\theta}(\x|c) = \E_{r(\z)}[p_\theta(\x|\z,c)]$.
There exists a constant $K_1$ such that, if $\KL[r(\z)\|p(\z)]\le K_1$ and if the encoder network is sufficiently flexible, then maximising the CoMP objective with infinite data generated under the misspecified model with $\z \sim r(\z)$ leads to a $\theta_\infty$ that is a maximum point of $\E_{\x,c}[\log p_{r,\theta}(\x|c)]$.
\end{restatable}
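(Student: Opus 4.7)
The plan is to pass to the infinite-data limit, identify the optimum at the true decoder, and show no competitor can do better. First, invoking Theorem~\ref{thm:kl} in the limit $B\to\infty$, the expected CoMP objective equals $\mathcal{L}(\theta,\phi) = \E_{p(\x,c)}[\mathrm{ELBO}_p(\theta,\phi;\x,c)] - \gamma\sum_c p(c)\KL[q_\phi(\z|c)\,\|\,q_\phi(\z|\neg c)]$, where $\mathrm{ELBO}_p$ is the standard CVAE ELBO using the misspecified Gaussian prior $p(\z)$ and $p(\x,c)$ is the data distribution generated from $r(\z)$ and some true decoder $\theta^\star$. The penalty is a sum of nonnegative KLs; since the data-generating process respects $\z\indep c$, a sufficiently flexible encoder can drive it to zero, and at any maximiser the induced marginal $q_\phi(\z|c) =: \tilde q(\z)$ is $c$-independent, by the characterisation in Appendix~\ref{sec:characterising}.

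\noindent Second, write $h(\theta) := \E_{\x,c}[\log p_{r,\theta}(\x|c)]$ for the functional to be maximised. Adding and subtracting $\log r(\z)$ inside $\mathrm{ELBO}_p$ yields the identity $\mathrm{ELBO}_p(\theta,\phi) = h(\theta) - \E_{\x,c}[\KL[q_\phi(\z|\x,c)\,\|\,p_{r,\theta}(\z|\x,c)]] + \KL[\tilde q\,\|\,r] - \KL[\tilde q\,\|\,p]$ under the marginal constraint, where $p_{r,\theta}(\z|\x,c)$ is the true-prior posterior of the model. Evaluating at $\theta = \theta^\star$ with amortised encoder $q_{\phi^\star}(\z|\x,c) = p_{r,\theta^\star}(\z|\x,c)$: the encoder marginal over the true data equals $r(\z)$ (so the constraint is satisfied), the middle term vanishes, and the last two KL terms combine to $-\KL[r\,\|\,p]$, giving $\mathcal{L}(\theta^\star,\phi^\star) = h(\theta^\star) - \KL[r\,\|\,p]$.

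\noindent Third, one must show no other feasible $(\theta,\phi)$ exceeds this value. Using the identity together with $h(\theta)\le h(\theta^\star)$ reduces the claim to $\E_{\x,c}[\KL[q_\phi\,\|\,p_{r,\theta}]] + \KL[\tilde q\,\|\,p] - \KL[\tilde q\,\|\,r] \ge \KL[r\,\|\,p] - (h(\theta^\star)-h(\theta))$. The data-processing inequality applied to marginalisation over $(\x,c)$ gives $\E_{\x,c}[\KL[q_\phi\,\|\,p_{r,\theta}]] \ge \KL[\tilde q\,\|\,\bar\rho_\theta]$, where $\bar\rho_\theta(\z):=\int p(\x,c)\,p_{r,\theta}(\z|\x,c)\,d\x\,dc$ and $\bar\rho_{\theta^\star}=r$; this reduces the proof to a geometric statement among the four densities $\tilde q,r,p,\bar\rho_\theta$ in a regime where $r\approx p$.

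\noindent The main obstacle is this last inequality: shifting $\tilde q$ towards $p$ can drop $\KL[\tilde q\,\|\,p]$ strictly below $\KL[r\,\|\,p]$, producing an apparent prior-alignment bonus, so one has to show that the forced growth in the encoder mismatch $\E_{\x,c}[\KL[q_\phi\,\|\,p_{r,\theta}]]$ strictly compensates. I would attack this by parameterising $\tilde q$ along a one-parameter family interpolating between $r$ and the Gaussian prior, establishing convexity of the combined cost, and bounding the bonus using a Donsker--Varadhan inequality applied to $\log(r/p)$. The constant $K_1$ emerges as the radius about the well-specified case $\KL[r\,\|\,p]=0$---where $\mathrm{ELBO}_p$ coincides with the true ELBO and standard variational-inference consistency applies directly---inside which the compensating mismatch term provably dominates the prior-alignment bonus uniformly in $\theta$, forcing $\theta_\infty\in\arg\max h$.
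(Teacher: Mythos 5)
Your first two steps reproduce the paper's central computation: the decomposition of the Gaussian-prior ELBO as $h(\theta)-\E_{\x,c}\bigl[\KL[q_\phi(\z|\x,c)\,\|\,p_{r,\theta}(\z|\x,c)]\bigr]-\E_{\tilde q}[\log(r/p)]$ (the paper writes the last term before splitting it into your two aggregate KLs) and the evaluation $\mathcal{L}(\theta^\star,\phi^\star)=h(\theta^\star)-\KL[r\|p]$ at the true decoder with the exact $r$-posterior encoder. Both of these are correct. But the proof is not complete: the entire difficulty sits in your third step, which you explicitly leave as a plan. The inequality you need --- that the growth of the encoder-mismatch term dominates the ``prior-alignment bonus'' obtained by dragging $\tilde q$ toward $p$, uniformly in $\theta$ --- is exactly the statement of the theorem in disguise, and the proposed attack (a one-parameter interpolation, a convexity claim, and a Donsker--Varadhan bound on $\log(r/p)$) is neither carried out nor accompanied by any argument that it closes. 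A secondary gap: your first step asserts that at any maximiser of the \emph{penalised} objective the aggregate posterior is exactly $c$-independent; that holds for a hard constraint but not automatically for a finite $\gamma$, where the optimum trades penalty against ELBO.

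The paper avoids your obstacle entirely by a different device. Rather than keeping the term $-\E_{q_\phi}[\log(r(\z)/p(\z))]$ inside the objective and fighting the bonus it can generate, the paper attaches a multiplier $\zeta$ to it, treats $\zeta$ and $\gamma$ as Lagrange multipliers, and passes to the equivalent constrained problem: maximise $\E_{\x,c}[\log p_{r,\theta}(\x|c)]-\E_{\x,c}[\KL[q_\phi\|r_\theta]]$ subject to $\E_{q_\phi}[\log(r/p)]\le K_\zeta$ and the misalignment penalty being at most $L_\gamma$. The unconstrained maximiser of the main part is $q_\phi=r_\theta$ with $\theta$ a maximiser of $h$; at that point the misalignment constraint is satisfied because the aggregate posterior equals $r(\z)$ for every $c$, and the first constraint is satisfied precisely when $\KL[r\|p]\le K_1$ (the hypothesis). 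A feasible unconstrained optimum is a constrained optimum, so no quantitative comparison between the bonus and the mismatch term is ever needed. If you want to complete your route instead, you would have to prove the compensation inequality you state, which is substantially harder than the theorem as the paper proves it; alternatively, adopt the constrained reformulation, which is where the constant $K_1$ actually comes from.
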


\subsection{Evaluating theoretical assumptions in practice}
\label{sec:theory:practical}
Inspecting the latent distribution $q_\phi(\z)$ provides insights on whether the conditions of the theorems hold in practice---if the latent distribution is different from $N(0,I)$, this hints that $r(\z)$ has the correct form to make counterfactuals identifiable (Theorem~\ref{thm:counterfactualidentifiability}) and that $\KL[r(\z)\|p(\z)]$ is sufficiently small for CoMP to find this non-Gaussian $\z$ through training (Theorem~\ref{thm:consistency}).
More formally, Normality testing \citep{razali2011power} could be employed to check non-Gaussianity.
Secondly, upon retraining the model with different random seeds, the latent distribution $q_\phi(\z)$ should remain the same up to a linear transformation, and counterfactual predictions should remain (approximately) the same. 

\section{Experiments}
\label{sec:experiments}
We perform experiments on four datasets: 1) \tcgaccle: bulk gene expression profiles of tumours and cancer cell-lines across 39 different cancer types \citep{Celligner}; 2) Stimulated / untreated single-cell PBMCs: single-cell gene expression (scRNA-seq) profiles of interferon (IFN)-$\beta$ stimulated and untreated peripheral blood mononuclear cells (PBMCs) \cite{kang2018multiplexed}; 3) Single-cell RNA-seq data integration: scRNA-seq profiles of PBMCs that were processed using different library preparation protocols \cite{Harmony}; 4) UCI Adult Income: personal information of census participants and a binary high / low income label \cite{Dua2019UCIIncome}.

The two broad objectives across our experiments are 1) to demonstrate the extent to which the two random variables $\mathbf{z}$ and $c$ are independent, and 2) to quantify useful information retained in $\mathbf{z}$. To benchmark CoMP on the first objective, we use the following pair of $k$ nearest-neighbour metrics: $\kbet_{k, \alpha}$ \cite{kbet}, the metric used to evaluate batch correction methods in biology, and a local Silhouette Coefficient  \cite{rousseeuw1987silhouettes} $s_{k, c}$. In both cases a low value close to zero indicates good local mixing of sample representations. As for the second objective, if we have access to a held-out discrete label $d_i$ that represents information one wishes to preserve---in the \tcgaccle\   case, $d_i$ is the cancer type, while for the scRNA-seq experiments, it refers to cell type---then we calculate $\kbet$ and $s$ separately for every fixed-$d_i$ subpopulation and take the mean. We refer to these as the \textit{mean Silhouette Coefficient} $\tilde{s}_{k, c}$ and the \textit{mean kBET} metric $\mkbet$ respectively.
These two novel metrics are designed to penalise algorithms that achieve global alignment but mix up different cell types from different conditions, e.g.~by aligning stimulated natural killer cells with unstimulated CD14 cells.
Low values of both metrics indicate correct alignment of each subpopulation.
Full details of the datasets and metrics, along with confidence intervals from multiple experiment runs, are presented in Appendix~\ref{sec:experiment_details}.

\subsection{Alignment of tumour and cell-line samples}
\label{sec:tcga_ccle}

Despite their widespread use in pre-clinical cancer studies, cancer cell-lines are known to have significantly different gene expression profiles compared to their corresponding tumour samples. Here we evaluate the ability of CoMP to subtract out the tumour / cell line condition. This can be seen as both a dataset integration and batch effect correction task. In addition to the set of $k$ nearest neighbour-based mixing evaluations, we train a Random Forest model on the representations of the tumour samples and their cancer-type labels and assess the prediction accuracy on held-out cell lines. To match the results from \citet{Celligner}, the evaluations are performed on the 2D UMAP projections. The results are presented in Table \ref{cancertable}. 

\begin{table}[t]
\small
\centering
\caption{\tcgaccle\ experiment results, with $k=100$, $c=\text{Cell Line}$, and parameter $\alpha=0.01$ for the kBET and m-kBET metrics. $s_{k,c}$ and $\tilde{s}_{k,c}$ are the two Silhouette Coefficient variants (see Section~\ref{sec:experiments}). The top scores are in \textbf{bold}.}
\label{cancertable}
\begin{tabular}{cccccc}
& Accuracy & $s$ & ${\kbet}$ & $\tilde{s}$ & $\mkbet$
\\ \toprule
VAE & 0.209 & 0.658 &  0.974 & 0.803 & 0.581\\ 
CVAE & 0.328 & 0.554 & 0.931 & 0.684 & 0.571 \\ 
VFAE & \textbf{0.585}  & 0.168 & 0.258 & 0.198 & 0.188 \\
trVAE  & \textbf{0.585} & 0.096 &  0.163 & 0.138 & 0.123 \\
Celligner & 0.578 & 0.082 & 0.525 & 0.568 & 0.226 \\ 
\textit{CoMP} & 0.579  & \textbf{0.023} & \textbf{0.160} & \textbf{0.094} & \textbf{0.101} \\
\bottomrule
\end{tabular}
\end{table}

As expected, both the VAE and CVAE baselines fail at the mixing task; the three explicitly penalised CVAE models and, to a lesser extent, the \textit{Cellinger} method have good mixing performances, with CoMP outperforming the benchmark models by a significant margin on the silhouette coefficient and $\kbet$ metric, while successfully maintaining a high accuracy in the cancer-type prediction task. We also see from Figure \ref{composite}A that CoMP representations have the fewest instances of isolated tumour-only clusters. Finally, from our evaluation on the $\tilde{s}$ and $\mkbet$ metrics, we can deduce that the occurrence of cell lines of one cancer type erroneously clustering around tumours of a different type is less frequent for CoMP compared to the other models.
In Appendix \ref{sec:experiment_details} we qualitatively validate this for several example clusters.
Overall, we see that CoMP learns to correctly align matching cell type clusters under different conditions \emph{without} any cell type labels being available during training.

\begin{figure}[t]
  \centering
  \includegraphics[width=1\columnwidth]{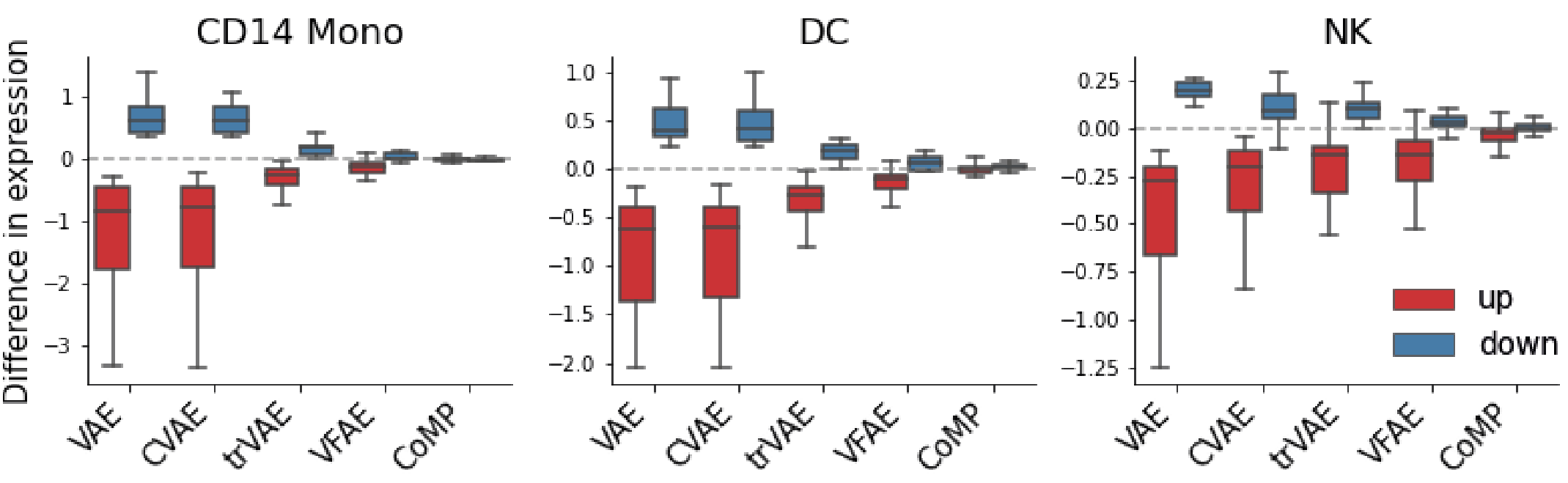}
  \caption{The difference in gene expression values for the top 50 differentially expressed genes (up-regulated: {\color{red}red}, down-regulated: {\color{blue}blue}) between IFN-$\beta$ stimulated cells and counterfactually stimulated control cells for CD14 monocytes, dendritic cells (DC) and natural killer (NK) cells. See Appendix~\ref{sec:experiment_details} for further details.}
  \label{fig:kang_cf}
\end{figure}

\definecolor{midblue}{RGB}{44, 127, 184} 
\definecolor{paleyellow}{RGB}{204, 190, 128} 
\begin{figure*}[t]
  \centering
  \includegraphics[width=2\columnwidth]{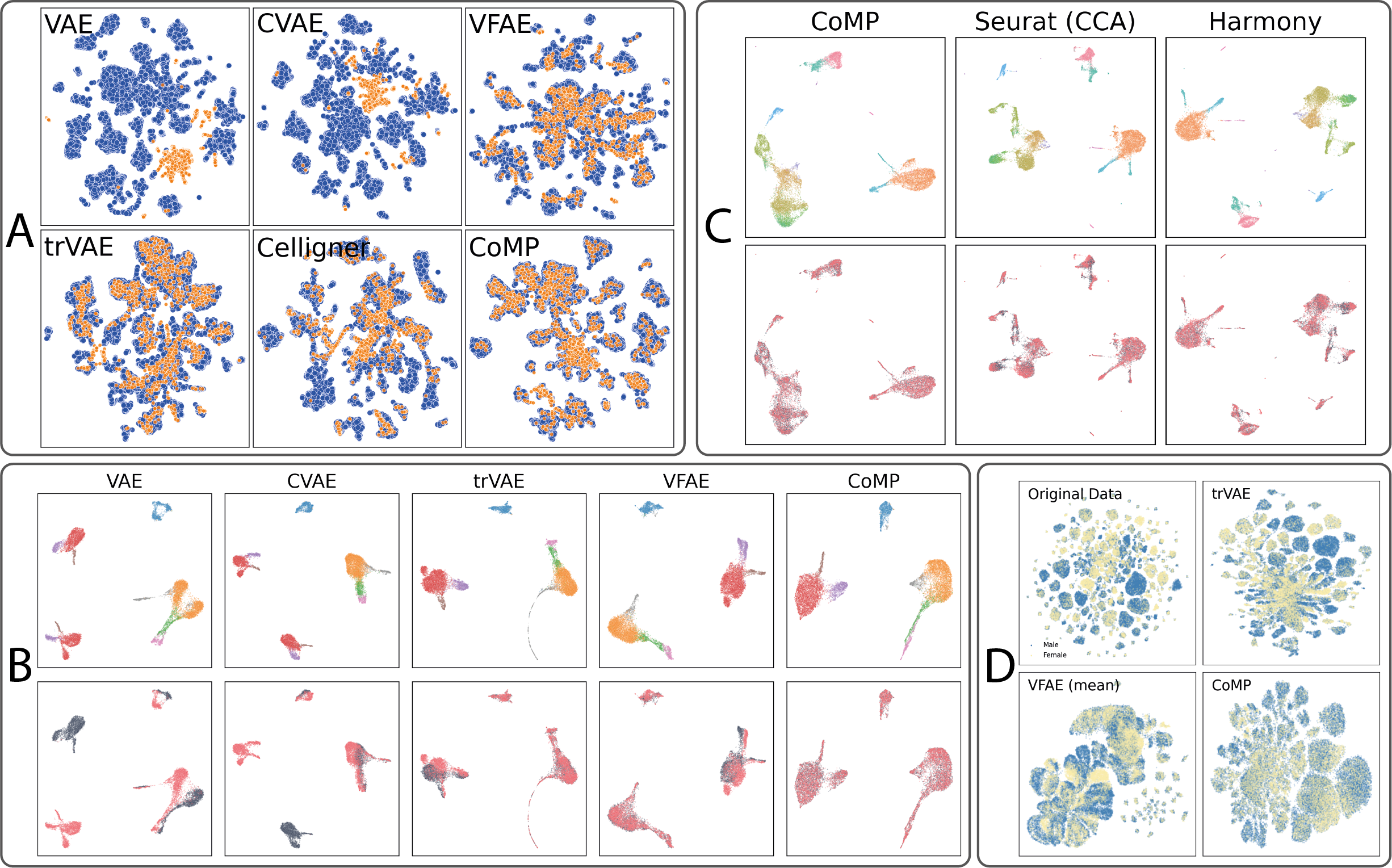}
  \caption{2D UMAP projection of posterior means of $\mathbf{z}_i$. \textbf{A}: \tcgaccle\ data. Tumours ({\color{blue} blue}) and cell lines ({\color{orange} orange}). \textbf{B}: Stimulated and control PBMC scRNA-seq data with colours highlighting immune cell types (top) and the IFN-$\beta$ condition (bottom). \textbf{C}: PBMC scRNA-seq data processed using different protocols, with colours indicating immune cell type (top) and 5-prime or 3-prime V2 library construction protocol (bottom); \textbf{D}: UCI Adult Income dataset, coloured by gender with male ({\color{midblue} blue}) and female ({\color{paleyellow} yellow}).}
  \label{composite}
\end{figure*}

\subsection{Interventions}
\label{sec:kang}

Obtaining molecular measurements from biological tissues typically requires destructive sampling,
meaning that we are unable to study the gene expression profile of the same cell under multiple experimental conditions. Counterfactual inference (Sec.~\ref{sec: cf}) can be used to predict how the molecular status of a destroyed biological sample would have differed if it were measured under different experimental conditions, such as applications of different drugs.

To assess CoMP’s utility in counterfactual inference, we trained it on scRNA-seq data from PBMCs that were either stimulated with IFN-$\beta$ or left untreated (control) \cite{kang2018multiplexed}. It is clear from Figure~\ref{composite}B that IFN-$\beta$ stimulation causes clear shifts in the latent space of a standard VAE between stimulated and control cells from the same cell type. Noticeably, the CD14 and CD16 monocyte and dendritic cell (DC) populations see greater shifts in their gene expression after stimulation. The CVAE fails to align these particular cell types in the latent space, while trVAE, VFAE and CoMP perform better. However, stimulated and control cells are best aligned in the latent space derived from CoMP (see metrics presented in Appendix~\ref{sec:experiment_details}).

CoMP suppresses latent space shifts caused by perturbations, but captures perturbation information in the decoder network. This can then be used
to predict gene expression levels of a cell type with and without the perturbation (counterfactual inference).
To validate this empirically, we perform a counterfactual prediction task for a IFN-$\beta$ control-to-stimulation variable swap, i.e.~the gene expression profiles for control cells were encoded to the latent space, then reconstructed through the decoder with the condition $c \mapsto \textrm{`stimulated'}$. This is a direct application of equation~\eqref{eq:approx_counter}.
We use held-out cell type labels to evaluate our predictions.
Figure~\ref{fig:kang_cf} shows how the profiles of (actual) stimulated cells differ from the counterfactual predictions for a selection of cell types (see Appendix~\ref{sec:experiment_details} for the complete set of results). We see that baseline models tend to systematically underestimate the expression of genes up-regulated by stimulation and overestimate those down-regulated. CoMP outperforms all other models by accurately predicting the expression alterations brought about by stimulation.

\subsection{Data integration of scRNA-seq data}
\label{sec:scrnaseq}

\begin{table}[t]
\small
\centering
\caption{scRNA-seq data integration experiment results, with $k=100$, $c=\text{Protocol}$, and $\alpha=0.05$ for kBET and m-kBET. $s_{k,c}$ and $\tilde{s}_{k,c}$ are the two Silhouette Coefficient variants (see Section~\ref{sec:experiments}). The top scores are in \textbf{bold}.}
\label{tab:scrna_batch_correction}
\begin{tabular}{cccccc}
& $s$ & ${\kbet}$ & $\tilde{s}$ & $\mkbet$
\\ \toprule
Seurat CCA & 0.0176 & 0.436 & 0.022 & 0.356 \\ 
Harmony & 0.0158 & 0.318 & 0.013 & 0.245 \\ 
\textit{CoMP} & $\bm{0.0004}$ & $\bm{0.164}$ & $\bm{0.0011}$ & $\bm{0.120}$ \\
\bottomrule
\end{tabular}
\end{table}

The scale and complexity of single-cell `omics datasets has increased rapidly in recent years \cite{mereu2020benchmarking, luecken2021benchmarking}. 
Efforts such as the Human Cell Atlas (HCA) \citep{regev2017science} require the collaboration of scientists from all around the world, each performing their own experiments and contributing their datasets to meet this goal. Processing these cells in different laboratories, with different protocols and technologies gives rise to distinct batch effects---unwanted technical variation observed in the data--that can obscure the biological variation that scientists seek to characterise and interpret. Being able to integrate diverse single-cell datasets, while accounting for unwanted technical variation, has been described as a major challenge in single-cell data science \cite{lahnemann2020eleven, eisenstein2020single, luecken2021benchmarking}.

To assess CoMP’s performance in integrating single-cell data with batch effects, we trained it on scRNA-seq data of PBMCs processed with different library preparation protocols \cite{Harmony}. Here, we compare CoMP to two widely used single-cell integration approaches: Seurat (CCA) \cite{stuart2019comprehensive} and Harmony \cite{Harmony}. As can be seen qualitatively in Figure~\ref{composite}C and quantitatively in Table~\ref{tab:scrna_batch_correction} (with further metrics presented in Appendix~\ref{sec:experiment_details}), CoMP provides improved mixing in the latent space for cells processed under different protocols when compared to Seurat and Harmony, while maintaining latent embeddings that represent distinct cell types.

\subsection{Fair classification}
\label{sec:uci_income}

\begin{table}[t]
\small
\centering
\caption{UCI Adult Income experiment results with $k = 1000$, $c = \text{Male}$ for silhouette score $s$, and $k=100$, $\alpha=0.01$ for $\kbet$. A lower gender prediction accuracy is better; $0.675$ is the lowest achievable. VFAE-s is VFAE version taken directly from \cite{louizos2015variational} with sampled latents, and VAE-m is our implementation where we take the posterior means.
}
\label{tab:income-results}
\begin{tabular}{ccccc}
& Gender Acc & Income Acc & $s$ & $\kbet$ \\ \toprule
Original data & 0.796 & \textbf{0.849} & 0.067 & 0.786\\ \cmidrule(r){1-5}
VAE & 0.764 & 0.812 & 0.054 & 0.748 \\
CVAE & 0.778 & 0.819 & 0.054 & 0.724 \\
VFAE-s & 0.680 & 0.815 & - & -\\
VFAE-m & 0.789 & 0.805 & 0.046 & 0.571 \\
trVAE & 0.698 & 0.808 & 0.066 & 0.731 \\
\textit{CoMP} & \textbf{0.679} & 0.805 & \textbf{0.011} & \textbf{0.451} \\
\bottomrule
\end{tabular}
\vspace{-5pt}
\end{table}

The goal for this fair classification task is to learn a representation on the Adult Income dataset that is not predictive of an individual's gender whilst still being predictive of their income. We compute a baseline by predicting gender and income labels directly from the input data and compare our method to the published results for the VFAE \cite{louizos2015variational} and the trVAE. We also include results for a standard VAE and CVAE. 

CoMP achieves a gender accuracy that is close to random (67.5\%), tying with the VFAE results from \cite{louizos2015variational} whilst also remaining competitive with the other methods on income accuracy (Table \ref{tab:income-results}). CoMP also outperforms all methods on the nearest neighbour and silhouette metrics. Latent space mixing between males and females can be seen qualitatively in the 2D UMAP projection (Figure \ref{composite}D).

\section{Discussion \& Related Work}
\label{sec:summary}

\paragraph{Data integration}
We have proposed CoMP as a new method for data integration and shown excellent performance on integrating bulk (Sec.~\ref{sec:tcga_ccle}) and single-cell (Sec.~\ref{sec:scrnaseq}) RNA-seq data.
CoMP uses the constraint $\z \indep c$ to perform data integration, assuming that different batches should be aligned in latent space.
Without additional prior knowledge, \emph{some} assumption is needed to conduct data integration.
Models such as PEER \citep{stegle2012using} or SVA \citep{Leek2007CapturingHI} estimate batch effects as latent factors. These models all assume that latent batch effects are independent of any biological variation as they are subsequently regressed out of the data.
Most similar to CoMP are generative models such as scVI \citep{lopez2018deep}, scMVAE \citep{zuo2021deep} and trVAE \citep{lotfollahi2019conditional}, the latter two use a CVAE framework and trVAE directly encourages $\z \indep c$ using an MMD penalty.
Methods that incorporate linear batch correction, such as ComBat \citep{Johnson2007AdjustingBE}, CCA \citep{correa2010canonical} and Harmony \citep{Harmony}, instead assume that batch effects can be isolated to linear components of the data, which can then be removed.
Other approaches \citep{Haghverdi2018BatchEI,Celligner} use matching of mutual nearest neighbours to reduce statistical dependency between different conditions, improving alignment.

If there are phenotypical differences between batches, it is possible that CoMP `over-corrects'.
One approach to alleviate this is to focus on the weight $\gamma$ in equation~\eqref{eq:total_training_obj} that scales the CoMP penalty. When this is small, the model will focus more on providing accurate representations, and the force to perfectly align representations will be smaller. More rigorously, $\gamma$ can be treated as a Lagrange multiplier, imposing a constraint $\sum_c p(c) \text{KL}\left[ q(\z|c) \| q(\z|\neg\, c) \right] \le L_\gamma$ for some constant $L_\gamma$. Thus, CoMP ensures that there is some level of alignment between different conditions, but may converge to a solution in which the constraint is not exactly zero.
Empirically, our mean-kBET and mean-silhouette scores are designed to check that CoMP correctly aligns matching cell types across batches; we find it does so in our experiments. More generally, these scores could be helpful to diagnose over-correction.

\paragraph{Counterfactual inference}
CoMP proved to be extremely successful at predicting the response of cells to IFN-$\beta$ stimulation. Predicting cell-level response to intervention has been previously tackled using VAEs by scGen \citep{lotfollahi2019scgen} and trVAE \citep{lotfollahi2019conditional}.
Although not always referred to as such, this problem is an example of counterfactual inference.
Both scGen and trVAE can be interpreted as approaching counterfactual inference by enforcing alignment between different conditions in latent space with a linear translation and MMD penalty respectively. \citet{johansson2016learning} estimated counterfactuals using representation learning, and used the discrepancy measure of \citet{mansour2009domain} to encourage latent alignment between conditions.

We showed that latent alignment, $\z \indep c$, is \emph{not} always sufficient to perform valid counterfactual inference with a CVAE. 
Our identifiability and consistency results (Sec.~\ref{sec:theory}) showed conditions under which counterfactual inference within the CVAE framework is valid. 
These results impact, not just CoMP, but other methods that use the CVAE framework for counterfactual inference, providing a blueprint to prove their counterfactual correctness.
Theorems~\ref{thm:counterfactualidentifiability} and~\ref{thm:consistency} do rest on assumptions; in Sec.~\ref{sec:theory:practical} we discussed practical methods to try and check them.
One particular limitation is that Theorem~\ref{thm:counterfactualidentifiability} assumes a linear decoder, whilst our experiments show that CoMP works well with non-linear decoders (shallow MLPs).
Further work may look to generalise our results and weaken the assumptions that they make.

\paragraph{Fair representations}
CoMP proved effective at creating representations that cannot be used to infer the protected condition (Sec.~\ref{sec:uci_income}).
The VFAE \citep{louizos2015variational} adopts a closely related approach to ours, using a CVAE and an MMD penalty to enforce $\z \indep c$.
Understanding fairness through counterfactuals \citep{kusner2017counterfactual} highlights the relevance of our theoretical analysis to fairness, showing that, under some conditions, CVAE approaches can go beyond alignment in latent space and directly target counterfactual notions of fairness.

\paragraph{Conclusion}
Marginal independence between the representation $\z$ and condition $c$ is a mathematical thread linking data integration, counterfactual inference and fairness.
We proposed CoMP, a novel method to enforce this independence requirement in practice.
We saw that CoMP has several attractive properties.
First, CoMP only uses the variational posteriors, requiring no additional discrepancy measures such as MMD.
Second, the CoMP penalty is an upper bound on a weighted sum of KL divergences, making the connection with a well-known divergence measure.
Third, we analysed CoMP theoretically as a means of performing counterfactual inference, showing identifiabilty and consistency under certain conditions.
Empirically, we demonstrated CoMP's performance when applied to three biological and one fair representation learning dataset. These biological datasets are of critical importance in drug discovery, for example matching cell-lines to tumours for effective pre-clinical assay development of anti-cancer compounds.
Overall, CoMP has the best in class performance on all tasks across a range of metrics and therefore broad utility across multiple challenging domains.

\section*{Acknowledgements}
AF gratefully acknowledges funding from EPSRC grant no.~EP/N509711/1. AF would like to thank Jake Fawkes for helpful comments on the counterfactual inference aspects of this paper. We would like to thank reviewers who reviewed this paper for NeurIPS 2021 and ICML 2022 and provided constructive feedback.

\bibliography{references}
\bibliographystyle{icml2022}

\newpage
\appendix
\onecolumn

\section{Additional background}

\subsection{Priors from posteriors}
Given the long-standing debates around the role, selection and treatment of the prior within Bayesian statistics, it is natural that the choice of $p(\mathbf{z})$ in VAEs has come under scrutiny.
The simplest alteration to dealing with the VAE prior is the $\beta$-VAE \cite{higgins2016beta}, which scales the KL term of the ELBO by a hyperparameter $\beta$.
While many of the traditional arguments concerning the prior revolve around principled points on objectivity, the primary issue for VAEs is the lack of expressiveness of the standard Normal distribution \cite{mathieu2019disentangling}. The shared concern is that the prior is often selected for practical but, ultimately, spurious reasons of technical convenience (e.g. conjugacy, reparametrisation trick). 

One solution is to simply replace the prior with the posterior. The apparent simplicity of this approach obscures the multiple issues that arise from double-dipping the data \cite{robert_how_2017, aitkin_posterior_1991}. Nevertheless the idea has endured: from the earlier proposal of posterior Bayes Factors as a solution to Lindley's paradox \cite{aitkin_posterior_1991}, modern Empirical Bayes methods, to likelihood-free models such as the calibration in approximate Bayesian computation models \cite{fearnhead_constructing_2012}, invoking the posterior `before its time' is increasingly performed to anchor statistical models to a more objective foundation. 

For VAEs, a well-known proposal is to replace the prior with a mixture of variational posteriors, formed using pseudo-observations $\mathbf{u}_1,...,\mathbf{u}_K$ \cite{tomczak2018vae}. This Variational Mixture of Posteriors (VaMP) prior is given by
\begin{equation}
    p^{\mathrm{VaMP}}(\z) = \frac{1}{K}\sum_{k=1}^K q_\phi(\z|\mathbf{u}_k).
\end{equation}
This results in a multi-modal prior, with the pseudo-observations learned by stochastic backpropagation along with the other parameters $\theta,\phi$. As we define in Section \ref{sec:contrastive}, the CoMP method adopts a similar non-parametric approach to defining a misalignment penalty.

\section{Characterising the constraint $\z \indep c$}
\label{sec:characterising}
To connect different notions of `alignment in representation space' we recall the key components of the CVAE---the encoder $q_\phi(\z|\x,c)$ and decoder $p_\theta(\x|\z,c)$---and we now drop the $\theta,\phi$ subscripts for conciseness.
Recall that the marginal distribution of representations within condition $c\in\mathcal{C}$ is $q(\z|c) = \E_{p(\x|c)}\left[ q(\z|\x,c) \right]$, 
and the marginal distribution of $\z$ over all conditions not equal to $c$ is
\begin{equation}
    q(\z|\neg\,c) = \frac{\sum_{c'\in\mathcal{C}, c'\ne c}p(c')q(\z|c')}{\sum_{c'\in\mathcal{C},c'\ne c} p(c')}
\end{equation}
in our notation.
The following proposition brings together several key notions of alignment in the latent space. 
\begin{restatable}{proposition}{counterfactual}
\label{thm:counterfactual}
The following are equivalent:
\begin{enumerate}
    \item $\z \indep c$ under distribution $q$,
    \item for every $c,c'\in\mathcal{C}$, $q(\z|c)=q(\z|c')$,
    \item for every $c\in\mathcal{C}$, $q(\z|c) = q(\z|\neg\,c)$,
    \item the mutual information $I(\z,c)=0$ under distribution $q$,
    \item $\z$ cannot predict $c$ better than random guessing.
\end{enumerate}
\end{restatable}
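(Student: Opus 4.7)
The plan is to establish the equivalence by a short cycle (1) $\Rightarrow$ (2) $\Rightarrow$ (3) $\Rightarrow$ (1), and then prove (1) $\Leftrightarrow$ (4) and (1) $\Leftrightarrow$ (5) separately. Each piece should reduce to either an elementary manipulation of the mixture defining $q(\z|\neg\,c)$ or an appeal to a standard fact about independence, so the proof is essentially a bookkeeping exercise; the only part requiring genuine thought is formalising statement (5).

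For the cycle, (1) $\Leftrightarrow$ (2) is immediate from the definition of marginal independence under $q$: $\z \indep c$ means $q(\z|c) = q(\z)$ for every $c$, which is equivalent to $q(\z|c)$ being constant in $c$. The direction (2) $\Rightarrow$ (3) is trivial, since $q(\z|\neg\,c)$ is a convex combination of $\{q(\z|c') : c' \neq c\}$; if these are all equal to $q(\z|c)$, their mixture is too. The substantive step is (3) $\Rightarrow$ (1). From the definition,
\begin{equation*}
q(\z|c) \;=\; q(\z|\neg\,c) \;=\; \frac{\sum_{c'\neq c} p(c')\, q(\z|c')}{1-p(c)},
\end{equation*}
so multiplying through by $1-p(c)$ and adding $p(c)\,q(\z|c)$ to both sides yields $q(\z|c) = \sum_{c'} p(c')\, q(\z|c') = q(\z)$. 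Since this holds for every $c$, we have $\z \indep c$ under $q$.

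The equivalence (1) $\Leftrightarrow$ (4) follows from the standard identity $I(\z,c) = \KL[q(\z,c)\,\|\,q(\z)q(c)]$, which vanishes if and only if the joint factorises. For (1) $\Leftrightarrow$ (5), I would interpret the informal statement as: the Bayes-optimal classifier $\hat{c}(\z) = \arg\max_c q(c|\z)$ achieves accuracy no greater than $\max_c q(c)$, the accuracy of the trivial predictor that ignores $\z$ and outputs the most probable class. Under (1), we have $q(c|\z) = q(c)$, so the optimal $\hat{c}(\z)$ is constant in $\z$ and the bound is attained with equality. Conversely, if no $\z$-based rule beats the trivial one, then $q(c|\z)$ must equal $q(c)$ almost surely under $q(\z)$; applying Bayes' rule gives $q(\z|c) = q(\z)$ for every $c$ in the support, which is (1).

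The main obstacle is formalising (5); the phrase ``cannot predict $c$ better than random guessing'' is ambiguous between the majority-class baseline and a randomised baseline that samples from $q(c)$, but the two agree in the relevant asymptotic sense and the argument above goes through in either interpretation. A second minor subtlety is that the equivalences implicitly assume $p(c) > 0$ for every $c \in \mathcal{C}$ (otherwise $q(\z|c)$ is undefined on a null set), which I would state as a standing assumption at the top of the proof.
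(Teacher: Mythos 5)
Your cycle $(1)\Rightarrow(2)\Rightarrow(3)\Rightarrow(1)$ and the equivalence $(1)\Leftrightarrow(4)$ are correct, and your direct step $(3)\Rightarrow(1)$ (clearing the denominator and adding back $p(c)q(\z|c)$) is arguably cleaner than the paper's route: the paper instead runs a single cycle $(1)\to(2)\to(3)\to(4)\to(5)\to(1)$, obtaining $(3)\Rightarrow(4)$ by substituting the mixture decomposition $q(\z)=p(c)q(\z|c)+(1-p(c))q(\z|\neg\,c)$ into the mutual information, and closing the loop through statement (5).

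The genuine gap is in your treatment of (5). You formalise ``cannot predict $c$ better than random guessing'' as: the Bayes classifier $\hat c(\z)=\arg\max_c q(c|\z)$ has $0$--$1$ accuracy at most $\max_c q(c)$. Under that reading, $(5)\Rightarrow(1)$ is false, so your asserted converse (``then $q(c|\z)$ must equal $q(c)$ almost surely'') does not follow. Concretely, take two conditions with $p(c=0)=0.9$ and a $\z$ that is informative but never informative enough to move the posterior mode, say $q(c=0|\z)$ ranging over $[0.8,1]$. Then $\max_c q(c|\z)=q(c=0|\z)$ pointwise, so the Bayes accuracy is $\E_{q(\z)}\left[q(c=0|\z)\right]=0.9=\max_c q(c)$ --- no rule beats majority voting --- yet $\z$ and $c$ are dependent. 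The ambiguity you flag (majority-class versus sampled baseline) is not the one that matters; what matters is the choice of loss. The paper resolves this in a footnote by defining ``better prediction'' as higher \emph{expected log-likelihood}: then $(4)\Rightarrow(5)$ is Gibbs' inequality $I(\z,c)\ge\E\left[\log \left(Q(c|\z)/p(c)\right)\right]$ for an arbitrary rule $Q$, and $(5)\Rightarrow(1)$ follows by applying the hypothesis to the specific rule $Q^*(c|\z)=p(c)q(\z|c)/q(\z)$, which forces $\E\left[\log\left(q(\z|c)/q(\z)\right)\right]\le 0$, while Gibbs gives $\ge 0$ with equality iff $q(\z|c)=q(\z)$. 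Your standing assumption that $p(c)>0$ for all $c\in\mathcal{C}$ is a fair point and is implicit in the paper as well.
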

\begin{proof}
    $1. \implies 2.$
    If $\z\indep c$, then for every $c,c'\in\mathcal{C}$, $q(\z|c) = q(\z) = q(\z|c')$.
    
    $2.\implies 3.$
    For $c\in\mathcal{C}$, by the definition of $q(\z|\neg\,c)$ we have
    \begin{equation}
        q(\z|\neg\,c) = \frac{\sum_{c'\in\mathcal{C},c'\ne c}p(c')q(\z|c')}{\sum_{c'\in\mathcal{C},c'\ne c} p(c')} = \frac{\sum_{c'\in\mathcal{C},c'\ne c}p(c')q(\z|c)}{\sum_{c'\in\mathcal{C},c'\ne c} p(c')} = q(\z|c)
    \end{equation}
    using condition 2. 
    
    $3. \implies 4.$
    We have by definition of the mutual information under distribution $q$
    \begin{align}
        I(\z,c) &= \E_{p(\x,c)q(\z|\x,c)}\left[\log \frac{p(c)q(\z|c)}{p(c)q(\z)}\right] \\
        \intertext{which can be written}
        &= \E_{p(\x,c)q(\z|\x,c)}\left[\log \frac{p(c)q(\z|c)}{p(c)[p(c)q(\z|c) + (1-p(c))q(\z|\neg c)]}\right] \\
        \intertext{applying condition 3. gives}
        &= \E_{p(\x,c)q(\z|\x,c)}\left[\log \frac{p(c)q(\z|c)}{p(c)[p(c)q(\z|c) + (1-p(c))q(\z|c)]}\right] \\
        &= \E_{p(\x,c)q(\z|\x,c)}\left[\log \frac{p(c)q(\z|c)}{p(c)q(\z|c)}\right] \\
        &=0.
    \end{align}
    
    $4. \implies 5.$\footnote{We interpret `better prediction' in condition 5. as achieving a higher expected log-likelihood.}
    Let $Q(c|\z)$ be some prediction rule for predicting $c$ using $\z$. By Gibbs' Inequality, we have
    \begin{equation}
        I(\z,c) \ge \E_{p(\x,c)q(\z|\x,c)}\left[\log \frac{Q(c|\z)}{p(c)}\right].
    \end{equation}
    Since $I(\z,c)=0$, we have
    \begin{equation}
        \E_{p(\x,c)q(\z|\x,c)}\left[\log {Q(c|\z)}\right] \le \E_{p(\x,c)}\left[\log {p(c)}\right].
    \end{equation}
    Observe that the left hand side above is the expected log-likelihood for the prediction rule $Q$, whilst the right hand side is the the log-likelihood for random guessing of $c$ using only its marginal distribution $p(c)$.
    We see that random guessing obtains a log-likelihood which is at least as good as that obtained using the rule $Q$.
    
    $5. \implies 1.$
    Consider the prediction rule
    \begin{equation}
        Q^*(c|\z) := \frac{p(c)q(\z|c)}{q(\z)}.
    \end{equation}
    By condition 5., we have
    \begin{equation}
        \E_{p(\x,c)q(\z|\x,c)}\left[\log {Q^*(c|\z)}\right] \le \E_{p(\x,c)}\left[\log {p(c)}\right].
    \end{equation}
    Hence,
    \begin{equation}
    \label{eq:proof_info}
        \E_{p(\x,c)q(\z|\x,c)}\left[\log \frac{p(c)q(\z|c)}{p(c)q(\z)}\right] \le 0.
    \end{equation}
    By Gibbs' Inequality, 
    \begin{equation}
        \E_{p(\x,c)q(\z|\x,c)}\left[\log \frac{p(c)q(\z|c)}{p(c)q(\z)}\right] \ge 0
    \end{equation}
    with equality if and only if $p(c)q(\z|c) = p(c)q(\z)$. By \eqref{eq:proof_info}, equality does hold, so $p(c)q(\z|c) = p(c)q(\z)$ meaning $\z \indep c$ under distribution $q$.
\end{proof}

\section{CoMP misalignment penalty}
\label{sec:proofs}
We restate and prove Theorem~\ref{thm:kl}.

\kl*

\begin{proof}
    First, by linearity of the expectation we have
\begin{align}
\begin{split}
    &\E_{\prod_{i=1}^B p(\x_i,c_i)q(\z_i|\x_i,c_i)}\left[\frac{1}{B}\sum_{i=1}^B  \log\left(\frac{1}{|I_{c_i}|}\sum_{j\in I_{c_i}} q(\z_i|\x_j,c_i) \right) - \log\left(\frac{1}{|I_{\neg c_i}|} \sum_{j\in I_{\neg c_i}} q(\z_i|\x_j, c_j) \right) \right] \\ &\  = \E_{\prod_{i=1}^B p(\x_i,c_i)q(\z_i|\x_i,c_i)}\left[\log\left(\frac{1}{|I_{c_1}|}\sum_{j\in I_{c_1}} q(\z_1|\x_j,c_i) \right) - \log\left(\frac{1}{|I_{\neg c_1}|} \sum_{j\in I_{\neg c_1}} q(\z_1|\x_j, c_j) \right) \right].
\end{split}
\end{align}
Focusing on the latter term, Jensen's Inequality gives
\begin{align}
&\E_{\prod_{i=1}^B p(\x_i,c_i)q(\z_i|\x_i,c_i)}\left[ - \log\left(\frac{1}{|I_{\neg c_1}|} \sum_{j\in I_{\neg c_1}} q(\z_1|\x_j, c_j) \right) \right] \\
&\ \ge\E_{p(\x_1,c_1)q(\z_1|\x_1,c_1)}\left[ - \log\left(\E_{\prod_{i>1}^B p(\x_i,c_i)q(\z_i|\x_i,c_i)}\left[\frac{1}{|I_{\neg c_1}|} \sum_{j\in I_{\neg c_1}} q(\z_1|\x_j, c_j) \right]\right) \right] \\
&\ =\E_{p(\x_1,c_1)q(\z_1|\x_1,c_1)}\left[ - \log q(\z_1|\neg\, c_1) \right].
\end{align}
For the other term, we take our inspiration from recent work on experimental design \cite{foster2020unified}.
We have
\begin{align}
&\E_{\prod_{i=1}^B p(\x_i,c_i)q(\z_i|\x_i,c_i)}\left[\log\left(\frac{1}{|I_{c_1}|}\sum_{j\in I_{c_1}} q(\z_1|\x_j,c_i) \right)  \right] \\
&\ = \E_{\prod_{i=1}^B p(\x_i,c_i)q(\z_i|\x_i,c_i)}\left[\log q(\z_1|c_1)  +\log\left(\frac{\frac{1}{|I_{c_1}|}\sum_{j\in I_{c_1}} q(\z_1|\x_j,c_i)}{q(\z_1|c_1)} \right)  \right] \\
&\ = \E_{\prod_{i=1}^B p(\x_i,c_i)q(\z_i|\x_i,c_i)}\left[\log q(\z_1|c_1)  \right] + \Delta
\end{align}
Then applying the tower rule with variable $c_1, \left|I_{c_1}\right|$ we have the difference term equal to
\begin{align}
\Delta &=  \E_{c_1,\left|I_{c_1}\right|}\left[\E_{\prod_{i=1}^{\left|I_{c_1}\right|} p(\x_i|c_1)   q(\z_1|\x_1,c_1)}\left[\log\left(\frac{\frac{1}{|I_{c_1}|}\sum_{j\in I_{c_1}} q(\z_1|\x_j,c_i)}{q(\z_1|c_1)} \right)  \right]\right]\\
&=\E_{c_1,\left|I_{c_1}\right|}\left[\E_{\prod_{i=1}^{\left|I_{c_1}\right|} p(\x_i|c_1)   q(\z_1|\x_1,c_1)}\left[\log\left(\frac{\prod_{i=1}^{\left|I_{c_1}\right|} p(\x_i|c_1) \frac{1}{|I_{c_1}|}\sum_{j\in I_{c_1}} q(\z_1|\x_j,c_i)}{\prod_{i=1}^{\left|I_{c_1}\right|} p(\x_i|c_1) q(\z_1|c_1)} \right)  \right]\right]
\end{align}
Now observe that $\z_1,...,\z_{\left|I_{c_1}\right|}$ are equal in distribution, so we can change the sampling distribution to be over 
\begin{equation}
    \prod_{i=1}^{\left|I_{c_1}\right|} p(\x_i|c_1) \frac{1}{|I_{c_1}|}\sum_{j\in I_{c_1}} q(\z_1|\x_j,c_i)
\end{equation}
which amounts to choosing at random which of the $\x_1,...,\x_{\left|I_{c_1}\right|}$ to sample $\z_1$ from. Finally, we observe that 
\begin{equation}
    \prod_{i=1}^{\left|I_{c_1}\right|} p(\x_i|c_1) q(\z_1|c_1)
\end{equation}
is a normalised distribution over $\x_1,...,\x_{\left|I_{c_1}\right|},\z_1$.
Thus we can write $\Delta$ as the following expected KL divergence
\begin{align}
    \Delta = \E_{c_1,\left|I_{c_1}\right|}\left[\KL \left[\prod_{i=1}^{\left|I_{c_1}\right|} p(\x_i|c_1) \frac{1}{|I_{c_1}|}\sum_{j\in I_{c_1}} q(\z_1|\x_j,c_i) \middle \| \prod_{i=1}^{\left|I_{c_1}\right|} p(\x_i|c_1) q(\z_1|c_1) \right] \right].
\end{align}
Since the KL divergence is non-negative, we have shown that $\Delta \ge 0$. Therefore
\begin{align}
\E_{\prod_{i=1}^B p(\x_i,c_i)q(\z_i|\x_i,c_i)}\left[\log\left(\frac{1}{|I_{c_1}|}\sum_{j\in I_{c_1}} q(\z_1|\x_j,c_i) \right)  \right] \ge  \E_{ p(\x_1,c_1)q(\z_1|\x_1,c_1)}\left[\log q(\z_1|c_1)  \right].
\end{align}

Putting these two results together, we have shown that
\begin{align}
    &\E_{\prod_{i=1}^B p(\x_i,c_i)q(\z_i|\x_i,c_i)}\left[\frac{1}{B}\sum_{i=1}^B  \log\left(\frac{1}{|I_{c_i}|}\sum_{j\in I_{c_i}} q(\z_i|\x_j,c_i) \right) - \log\left(\frac{1}{|I_{\neg c_i}|} \sum_{j\in I_{\neg c_i}} q(\z_i|\x_j, c_j) \right) \right] \\ & \qquad  \ge \E_{\prod_{i=1}^B p(\x_i,c_i)q(\z_i|\x_i,c_i)}\left[\log\frac{q(\z_1|c_1)}{q(\z_1|\neg\, c_1)} \right] \\
    &\qquad = \E_{ p(\x_1,c_1)q(\z_1|\x_1,c_1)}\left[\log\frac{q(\z_1|c_1)}{q(\z_1|\neg\, c_1)} \right] \\
    &\qquad = \sum_{c\in\mathcal{C}} p(c_1)\E_{ p(\x_1|c_1)q(\z_1|\x_1,c_1)}\left[\log\frac{q(\z_1|c_1)}{q(\z_1|\neg\, c_1)} \right] \\
    &\qquad = \sum_{c\in\mathcal{C}} p(c_1)\E_{ q(\z_1|c_1)}\left[\log\frac{q(\z_1|c_1)}{q(\z_1|\neg\, c_1)} \right] \\
    &\qquad = \sum_{c\in\mathcal{C}} p(c)\KL[q(\z|c)\|q(\z|\neg\,c)].
\end{align}
Finally, as $B\to\infty$, the Strong Law of Large Numbers implies that 
\begin{align}
    \log\left(\frac{1}{|I_{c_1}|}\sum_{j\in I_{c_1}} q(\z_1|\x_j,c_i) \right) &\to q(\z_i|c_i) \text{ a.s.}, \\
    \log\left(\frac{1}{|I_{\neg c_i}|} \sum_{j\in I_{\neg c_i}} q(\z_i|\x_j, c_j) \right) &\to 
    q(\z_i|\neg\, c_i) \text{ a.s.},
\end{align}
so (under mild technical assumptions) we conclude that the bound becomes tight in this limit. This completes the proof.
\end{proof}

\subsection{Combining with the $\beta$-VAE}
Note that the CoMP penalty may also be combined with the $\beta$-VAE objective \citep{higgins2016beta}, giving the overall objective for a training batch
\begin{align}\begin{split}
    \label{eq:total_training_obj_beta}
    \mathcal{L}^\text{CoMP}_{B,\beta} = \frac{1}{B}\sum_{i=1}^B &\left[\log p_\theta(\x_i|\z_i,c_i) + \beta \log\frac{p(\z_i)}{q_\phi(\z_i|\x_i,c_i)} -\gamma \log\left(\frac{\frac{1}{|I_{c_i}|}\sum_{j\in I_{c_i}} q_\phi(\z_i|\x_j,c_i)}{\frac{1}{|I_{\neg c_i}|} \sum_{j\in I_{\neg c_i}} q_\phi(\z_i|\x_j, c_j)} \right)\right].
\end{split}\end{align}

\section{Analysing CoMP gradients}
\label{sec:app:gradients}
We attempt to understand how the CoMP penalty differs from existing penalties in the literature.
Specifically, we compare CoMP using a Gaussian posterior family with MMD using a Radial Basis Kernel \cite{vert2004primer}. 
To analyse MMD and CoMP gradients, we focus on the two specific cases that highlight the similarities between these methods, revealing the remaining differences.
Specifically, we consider MMD with a simple unnormalised Radial Basis Kernel \cite{vert2004primer}
\begin{equation}
    k(\z,\z') = e^{-\|\z - \z'\|^2},
\end{equation}
and a Gaussian variational posterior family with fixed covariance matrix $\tfrac{1}{2}I$
\begin{equation}
    q(\z|\x,c) \propto e^{-\|\z - \bm{\mu}_\z(\x,c)\|^2}.
\end{equation}
We also assume just two conditions $|\mathcal{C}|=2$. We show that both methods can be interpreted as applying a penalty to each element $\z_i,c_i$ of the training batch.
Full derivations are given in the following section.
For the MMD penalty for $\z_i,c_i$, the gradient takes the form
\begin{equation}
    \nabla_{\z_i} \mathcal{P}_\text{MMD}(\z_i,c_i) = \frac{2}{\left|I_{c_i}\right|^2} \sum_{j\in I_{c_i}} e^{-\|\z_i - \z_j\|^2} (\z_j - \z_i) - \frac{4}{\left|I_{\neg c_i}\right|\left|I_{c_i}\right|} \sum_{j\in I_{\neg c_i}} e^{-\|\z_i - \z_j\|^2} (\z_j - \z_i),
\end{equation}
whilst the CoMP penalty gradient takes the form
\begin{equation}
    \nabla_{\z_i} \mathcal{P}_\text{CoMP}(\z_i,c_i) = \frac{2\sum_{j\in I_{c_i}} e^{-\|\z_i - \bm{\mu}_{\z_j}\|^2} (\bm{\mu}_{\z_j} - \z_i)}{B\sum_{j\in I_{c_i}} e^{-\|\z_i - \bm{\mu}_{\z_j}\|^2}} - \frac{2\sum_{j\in I_{\neg c_i}} e^{-\|\z_i - \bm{\mu}_{\z_j}\|^2} (\bm{\mu}_{\z_j} - \z_i)}{B\sum_{j\in I_{\neg c_i}} e^{-\|\z_i - \bm{\mu}_{\z_j}\|^2}}
\end{equation}
where $\bm{\mu}_{\z_j}$ is the variational mean for $\z_j$.
One important feature of the MMD gradients is that, if $\|\z_i-\z_j\|^2$ is large for all $j\ne i$, for instance when the point $\z_i$ is part of an isolated cluster, then the gradient to update the representation $\z_i$ will be small.
So if $\z_i$ is already very isolated from the distribution $q(\z|\neg\, c_i)$, then the gradients bringing it closer to points with condition $\neg \, c_i$ will be small.
In comparison to the MMD gradient, it can be seen that gradients for CoMP are \emph{self-normalised}.
This means that the gradient through $\z_i$ will be large, even when $\z_i$ is very far away from any points with condition $\neg \, c_i$. This, in turn, suggests that that CoMP is likely to be preferable to MMD when we have a number of isolated clusters or interesting global structure in latent space, something which often occurs with biological data.
The CoMP approach also bears a resemblance to nearest-neighbour approaches \cite{li2018implicit}. Indeed, for a Gaussian posterior as $\sigma\to 0$, the $\neg\, c_i$ term of the gradient places all its weight on the nearest element of the batch under condition $\neg\, c_i$.

\subsection{Derivations}
\label{sec:app:derivations}
For an MMD penalty, the simplest form of the Kernel Two-sample Test statistic \cite{gretton2012kernel} with batch size $B$ can be written as follows
\begin{align}
    \mathcal{P}_\text{MMD} &= \sum_{i=1}^B \mathcal{P}_\text{MMD}(\z_i,c_i)  \\
    &= \sum_{i=1}^B \left( \frac{1}{\left|I_{c_i}\right|^2} \sum_{j\in I_{c_i}} e^{-\|\z_i - \z_j\|^2} - \frac{1}{\left|I_{\neg c_i}\right|\left|I_{c_i}\right|} \sum_{j\in I_{\neg c_i}} e^{-\|\z_i - \z_j\|^2} \right),
\end{align}
taking gradients with respect to $\z_i$ gives us
\begin{equation}
    \nabla_{\z_i} \mathcal{P}_\text{MMD}(\z_i,c_i) = \frac{2}{\left|I_{c_i}\right|^2} \sum_{j\in I_{c_i}} e^{-\|\z_i - \z_j\|^2} (\z_j - \z_i) - \frac{2}{\left|I_{\neg c_i}\right|\left|I_{c_i}\right|} \sum_{j\in I_{\neg c_i}} e^{-\|\z_i - \z_j\|^2} (\z_j - \z_i),
\end{equation}
the gradients of the total penalty are
\begin{equation}
    \nabla_{\z_i} \mathcal{P}_\text{MMD} = \frac{4}{\left|I_{c_i}\right|^2} \sum_{j\in I_{c_i}} e^{-\|\z_i - \z_j\|^2} (\z_j - \z_i) - \frac{4}{\left|I_{\neg c_i}\right|\left|I_{c_i}\right|} \sum_{j\in I_{\neg c_i}} e^{-\|\z_i - \z_j\|^2} (\z_j - \z_i).
\end{equation}
The CoMP penalty (ignoring normalising constants) is
\begin{align}
    \mathcal{P}_\text{CoMP} &= \sum_{i=1}^B \mathcal{P}_\text{CoMP}(\z_i,c_i)  \\
    &= \frac{1}{B} \sum_{i=1}^B \log\left( \frac{1}{\left|I_{c_i}\right|} \sum_{j\in I_{c_i}} e^{-\|\z_i - \bm{\mu}_{\z_j}\|^2} \right) - \log \left(\frac{1}{\left|I_{\neg c_i}\right|} \sum_{j\in I_{\neg c_i}} e^{-\|\z_i - \bm{\mu}_{\z_j}\|^2} \right),
\end{align}
if we take the gradient with respect to $\z_i$ we obtain
\begin{equation}
    \nabla_{\z_i} \mathcal{P}_\text{CoMP}(\z_i,c_i) = \frac{2\sum_{j\in I_{c_i}} e^{-\|\z_i - \bm{\mu}_{\z_j}\|^2} (\bm{\mu}_{\z_j} - \z_i)}{B\sum_{j\in I_{c_i}} e^{-\|\z_i - \bm{\mu}_{\z_j}\|^2}} - \frac{2\sum_{j\in I_{\neg c_i}} e^{-\|\z_i - \bm{\mu}_{\z_j}\|^2} (\bm{\mu}_{\z_j} - \z_i)}{B\sum_{j\in I_{\neg c_i}} e^{-\|\z_i - \bm{\mu}_{\z_j}\|^2}}
\end{equation}
where $\bm{\mu}_{\z_j} = \bm{\mu}(\x_i,c_i)$ is the variational mean for $\z_j$.
The gradient of the full penalty with respect to $\bm{\mu}_{\z_i}$, noting $\z_i = \bm{\mu}_{\z_i} + \bm{\epsilon}_i$, is
\begin{equation}
\begin{split}
    \nabla_{\bm{\mu}_{\z_i}} \mathcal{P}_\text{CoMP} =& \frac{2\sum_{j\in I_{c_i}} e^{-\|\z_i - \bm{\mu}_{\z_j}\|^2} (\bm{\mu}_{\z_j} - \z_i)}{B\sum_{j\in I_{c_i}} e^{-\|\z_i - \bm{\mu}_{\z_j}\|^2}} - \frac{2\sum_{j\in I_{\neg c_i}} e^{-\|\z_i - \bm{\mu}_{\z_j}\|^2} (\bm{\mu}_{\z_j} - \z_i)}{B\sum_{j\in I_{\neg c_i}} e^{-\|\z_i - \bm{\mu}_{\z_j}\|^2}} \\
    &+2\sum_{j\in I_{c_i}}\frac{ e^{-\|\z_j - \bm{\mu}_{\z_i}\|^2} ({\z_j} - \bm{\mu}_{\z_i})}{B\sum_{k\in I_{c_i}} e^{-\|\z_j - \bm{\mu}_{\z_k}\|^2}} - 2\sum_{j\in I_{\neg c_i}}\frac{ e^{-\|\z_j - \bm{\mu}_{\z_i}\|^2} ({\z_j} - \bm{\mu}_{\z_i})}{B\sum_{k\in I_{\neg c_j}} e^{-\|\z_j - \bm{\mu}_{\z_k}\|^2}}.
\end{split}
\end{equation}

Finally, to see the connection with nearest neighbour methods, we repeat this analysis with Gaussian posterior with fixed variance $\sigma^2$. The gradient term is then
\begin{equation}
\begin{split}
    \nabla_{\z_i} \mathcal{P}_\text{CoMP}(\z_i,c_i) &= \frac{\frac{1}{\sigma^2}\sum_{j\in I_{c_i}} e^{-\frac{1}{2\sigma^2}\|\z_i - \bm{\mu}_{\z_j}\|^2} (\bm{\mu}_{\z_j} - \z_i)}{B\sum_{j\in I_{c_i}} e^{-\frac{1}{2\sigma^2}\|\z_i - \bm{\mu}_{\z_j}\|^2}} \\
    &\qquad - \frac{\frac{1}{\sigma^2}\sum_{j\in I_{\neg c_i}} e^{-\frac{1}{2\sigma^2}\|\z_i - \bm{\mu}_{\z_j}\|^2} (\bm{\mu}_{\z_j} - \z_i)}{B\sum_{j\in I_{\neg c_i}} e^{-\frac{1}{2\sigma^2}\|\z_i - \bm{\mu}_{\z_j}\|^2}}.
\end{split}
\end{equation}
As $\sigma \to 0$, we have
\begin{equation}
    \frac{e^{-\frac{1}{2\sigma^2}\|\z_i - \bm{\mu}_{\z_k}\|^2}}{\sum_{j\in I_{\neg c_i}} e^{-\frac{1}{2\sigma^2}\|\z_i - \bm{\mu}_{\z_j}\|^2}} \to \delta_{k \text{nn}_i}
\end{equation}
where $\text{nn}_i$ is the index of the nearest neighbour to $\z_i$ among the set $\{\z_j: j \in I_{\neg c_i}\}$, i.e.
\begin{equation}
    \text{nn}_i =  \text{argmin}_{j\in I_{\neg c_i}} \|\z_i - \z_j\|
\end{equation}
indicating that the gradient between $\z_i$ and $\z_{\text{nn}_i}$ becomes the dominant term in the limit.

\section{Theory}
\label{sec:app:theory}
We have introduced CoMP, a new approach to enforcing the constraint $\z \indep c$ in a CVAE model. 
However, it remains to be shown that training a CVAE with CoMP on unpaired data, i.e. on independent samples $(\x_i,c_i)_{i=1}^\infty$ from the data generating process, leads to valid counterfactual predictions.

In the language of the Causal Hierarchy Theorem \citep{pearl2000models,bareinboim2020pearl}, our unpaired data may be treated as interventional data, at level 2 of the hierarchy. (This is accurate in biological applications in which data under non-control conditions may be produced by actively intervening on samples with a drug, gene knock-out, etc. It is also formally correct in Fig.~\ref{fig:pgm} in which $c$ has no parents.)
Counterfactuals exist on level 3 of the hierarchy, and therefore additional assumptions or constraints are necessary to infer them from our data.
To illustrate the problem of \emph{counterfactual identifiability}, we begin with a negative result for counterfactual inference with the CVAE framework.

\begin{proposition}
\label{prop:negative}
    Consider the following generative CVAE models for data $\x,c$ with two conditions $c\in\{0, 1\}$.
    \begin{enumerate}
        \item $\z \sim N(0, I) \qquad \x = A_c\z +\bm{\varepsilon}$,
        \item $\z \sim N(0, I)\qquad \x = \begin{cases} A_0\z+\bm{\varepsilon} \text{ if }c=0 \\ A_1R\z+\bm{\varepsilon} \text{ if }c=1, \end{cases}$
    \end{enumerate}
    where $A_0,A_1$ are matrices, $R$ is a non-trivial rotation matrix and $\bm{\varepsilon}$ is Gaussian observation noise independent of $\z$. Assume $A_1R \ne A_1$.
    Then models 1. and 2. fit unpaired training data $(\x_i,c_i)_{i=1}^\infty$ equally well, but give different counterfactual predictions.
\end{proposition}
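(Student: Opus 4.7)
The plan is to exhibit two CVAE generative models that induce exactly the same interventional distribution $p(\x,c)$, and hence the same optimal fit to unpaired data, yet assign different values to the Pearl-style counterfactual query. The key mechanism is the rotational symmetry of $N(0,I)$: applying the rotation $R$ to the latent leaves its marginal unchanged, so it cannot be detected at the data level, but it becomes visible once we perform abduction in one condition and prediction in the other.

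First I would show that the training-data distributions agree. Both models use the same decoder $\x = A_0\z+\bm{\varepsilon}$ for $c=0$, so $p(\x\mid c=0)$ is identical, namely $N(0,A_0 A_0^{\top}+\Sigma_\varepsilon)$. For $c=1$, Model 1 gives $p(\x\mid c=1)=N(0,A_1 A_1^{\top}+\Sigma_\varepsilon)$, while Model 2 gives $N(0, A_1 R R^{\top} A_1^{\top}+\Sigma_\varepsilon)$. Since $R$ is a rotation, $RR^{\top}=I$, and the two Gaussians coincide. Combined with an identical prior on $c$, this shows that the two models induce the same $p(\x,c)$ and therefore attain the same maximum likelihood on any unpaired sample $(\x_i,c_i)_{i=1}^\infty$.

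Next I would compute the counterfactual distribution $p(\x_{c=1}\mid \x_0, c_0=0)$ in each model via abduction, action, and prediction. The abduction step is identical because the $c=0$ decoder is shared: the posterior $p(\z\mid \x_0,c=0)$ is the same Gaussian in both models, with posterior mean $\bm{m}(\x_0)$ that depends linearly on $\x_0$ through $A_0$ and $\Sigma_\varepsilon$. The prediction step differs: Model 1 yields a counterfactual Gaussian with mean $A_1\bm{m}(\x_0)$, while Model 2 yields one with mean $A_1 R\,\bm{m}(\x_0)$. Since by hypothesis $A_1 R\neq A_1$, there exists $\z^{\ast}$ with $A_1 R \z^{\ast}\neq A_1 \z^{\ast}$, and under mild non-degeneracy of $A_0$ (so that the map $\x_0\mapsto\bm{m}(\x_0)$ has image containing $\z^{\ast}$) one can pick an $\x_0$ witnessing a strict difference between the two counterfactual means.

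The main obstacle is a small technicality around the abduction step: the proof requires that the linear posterior-mean operator associated with the $c=0$ decoder has image large enough to hit a vector $\z^{\ast}$ separating $A_1$ and $A_1 R$. This holds in the natural non-degenerate regime (e.g.\ $A_0$ of full column rank, $\Sigma_\varepsilon$ positive definite); if one wants to remain fully general, one can instead argue in expectation over $\x_0\sim p(\x\mid c=0)$, where $\bm{m}(\x_0)$ has non-singular covariance and hence cannot lie entirely in the kernel of $A_1(R-I)$. Beyond that, the argument is a direct application of the abduction–action–prediction recipe and the invariance $R\z\overset{d}{=}\z$, which together encode precisely the general principle, discussed after the proposition, that linear symmetries of the true latent distribution make counterfactuals non-identifiable in the CVAE framework.
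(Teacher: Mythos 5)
Your proposal is correct and follows essentially the same route as the paper's proof: equality of fit via the rotational invariance $R\z \overset{d}{=} \z$ of the isotropic Gaussian prior (you verify it through $RR^{\top}=I$ in the explicit Gaussian covariances, the paper via a change of variables in the likelihood integral), and divergence of counterfactuals because abduction under $c=0$ is shared while prediction uses $A_1$ versus $A_1R$. If anything you are more careful than the paper on the last step, since the paper simply asserts the existence of a witnessing $\x$ whereas you explicitly note the non-degeneracy needed for the posterior-mean map to reach a point separating $A_1$ from $A_1R$.
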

\begin{proof}
    For the two models to fit the training data equally well, we need class-conditional likelihoods $p(\x|c)$ to be equal for each $c$. For $c=0$, the models are equal. For $c=1$, we use the fact that $R\z \overset{d}{=}\z$ since $\z$ is an isotropic Gaussian. This gives
    \begin{equation}
        p_\text{model 2.}(\x|c=1) = \E_{\z \sim N(0,I)}[p_{A_1}(\x|R\z)] = \E_{\z \sim N(0,I)}[p_{A_1}(\x|\z)] = p_\text{model 1.}(\x|c=1)
    \end{equation}
    where $p_A(\x|\z) = p_{\bm{\varepsilon}}(\x-A\z)$. 
    Thus, both models fit the training data equally well.
    To see that the models give different counterfactual predictions, we have
    \begin{align}
        \hat{p}_\text{model 1.}(\x_{c=1}|\x,c=0) &= \int p_\text{model 1.}(\z|\x,c=0) p_{A_1}(\x|\z)\ d\z \\
        \begin{split}
        \hat{p}_\text{model 2.}(\x_{c=1}|\x,c=0) &= \int p_\text{model 2.}(\z|\x,c=0) p_{A_1}(\x|R\z)\ d\z  \\
        &= \int p_\text{model 1.}(\z|\x,c=0) p_{A_1}(\x|R\z)\ d\z .
        \end{split}
    \end{align}
    Since $A_1R \ne A_1$, we see that there exist $\x$ for which $\hat{p}_\text{model 1.}(\x_{c=1}|\x,c=0) \ne \hat{p}_\text{model 2.}(\x_{c=1}|\x,c=0)$. Thus, counterfactual predictions under the two models are different.
\end{proof}

One way to understand Proposition~\ref{prop:negative} is that the $N(0,I)$ prior has a high degree of symmetry, which means there is a high degree of indeterminacy in $\z$. The relevance of the rotational symmetry of the $N(0,I)$ distribution to VAE models has previously been discussed by \citet{mathieu2019disentangling,rolinek2019variational}, here we focus on the CVAE and the consequences for counterfactual inference.

This intuition also indicates why Proposition~\ref{prop:negative} is not the end of the story. In our practical applications in Sec.~\ref{sec:experiments}, we find that the latent space distribution $q(\z)$ is not, in fact, an isotropic Gaussian. Note that, in this case, the example in the Proposition breaks down. If we consider model 2. with decoder $A_1R$ and $\z \sim q(\z)$, then the predictive distribution $p_\text{model 2.}(\x|c=1)$ will be different from $p_\text{model 1.}(\x|c=1)$.
If instead we consider a different latent distribution $R^{-1} \circ q(\z)$ which arises when we apply $R^{-1}$ to $q(\z)$, then $p_\text{model 2.}(\x|c=1)$ will be correct \emph{but} we will no longer have $\z \indep c$.

Experimentally, we also find that counterfactual inference is stable between different training seeds, and that it accords extremely well with counterfactuals that are estimated using held-out cell type information.
This indicates that there is a mismatch between the basic theory and application.
Our approach to theoretically analyse CoMP is therefore inspired by our practical observation that $q(\z)$ is rarely equal to an isotropic Gaussian in practice. 
To make the theory more realistic, we relax the assumption that, in the true data generating distribution, the latent distribution is $N(0, I)$.

\subsection{Counterfactual identifiability for CoMP and other CVAE methods}
We now assume that the true data generating distribution has $\z \sim r(\z)$, a non-Gaussian distribution. The assumption of non-Gaussianity permits us to make the connection to the theory of independent component analysis (ICA) \citep{hyvarinen2001independent}, in which the non-Gaussianity assumption is of critical importance. ICA has successfully been applied to causal discovery \citep{shimizu2006linear} and inference \citep{moneta2013causal}.
To simplify the exposition, we initially focus on a generative model with a noiseless linear decoder as follows
\begin{equation}
    \z \sim r(\z) \qquad \x = A_c\z
\end{equation}
and we further make the ICA assumption that
\begin{equation}
    \z = B\s
\end{equation}
where the components $s_i$ of $\s$ are independent and non-Gaussian of unit variance.

We begin by translating the standard linear ICA theory into our setting. The standard ICA setting applies to data from a single condition. For condition $c=0$, we have $\x=A_0B\s$. The theory of ICA tells us that $\s$ and $A_0B$ are likelihood identifiable, up to permutation and negation.
\begin{theorem}[\citet{comon1994independent}]
\label{thm:comon}
Assume $\x = D\s$ where $s_i$ are mutually independent and non-Gaussian with variance 1. Then $D$ is likelihood identifiable up to the following
\begin{itemize}
     \item we can apply a \emph{permutation} to the columns of $D$ by right multiplication by a permutation matrix $P$,
     \item we can \emph{negate} columns of $D$ by right multiplication by a matrix $N$ with diagonals entries either $1$ or $-1$ and $0$ elsewhere.
\end{itemize}
\end{theorem}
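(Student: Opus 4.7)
The plan is to prove the classical identifiability result for noiseless linear ICA by reducing the problem to the Darmois--Skitovich characterization of the Gaussian. Suppose two generative pairs $(D,\s)$ and $(D',\s')$ give the same distribution for $\x$, with the coordinates of $\s$ and of $\s'$ each mutually independent, non-Gaussian, and of unit variance. The goal is to show $D' = DPN$ for some permutation matrix $P$ and diagonal sign matrix $N$.

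First, I would reduce the distributional identity to a mixing identity between sources. Assuming $D$ and $D'$ are invertible (the square case; the non-square case is reduced to this by pre-whitening $\x$ and working on the effective range), the equality $D\s \overset{d}{=} D'\s'$ gives $\s' \overset{d}{=} M\s$ where $M := D'^{-1}D$ is invertible. So it suffices to show that any invertible $M$ which sends a vector of independent non-Gaussian unit-variance coordinates into another such vector is of the form $PN$.

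Second, I would apply Darmois--Skitovich pairwise across components. For any two distinct indices $i \ne k$, the random variables $s'_i = \sum_j M_{ij} s_j$ and $s'_k = \sum_j M_{kj} s_j$ are independent by the hypothesis on $\s'$. The Darmois--Skitovich theorem states that if two linear combinations of mutually independent random variables are independent, then any $s_j$ appearing with nonzero coefficient in both combinations must be Gaussian. Since every $s_j$ is non-Gaussian, this forces $M_{ij} M_{kj} = 0$ for every $j$ and every pair $i \ne k$: each column of $M$ has at most one nonzero entry. Invertibility then rules out zero columns, so the support pattern of $M$ is exactly a permutation, giving $M = PN$ with $N$ diagonal. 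The unit-variance constraints on both $\s$ and $\s'$ immediately give $N_{jj}^2 = 1$, hence $N_{jj} \in \{\pm 1\}$. Translating back yields $D' = D M^{-1} = D P^{-1} N^{-1}$, which is precisely identifiability of $D$ up to column permutation and column negation.

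The main obstacle is the Darmois--Skitovich theorem itself: this is the deep analytic engine of the proof, typically established via characteristic functions together with Marcinkiewicz's theorem on entire characteristic functions of finite order, and it is the step that genuinely uses non-Gaussianity (indeed, if any $s_j$ were Gaussian the argument breaks and rotational indeterminacy reappears, mirroring the failure mode already flagged in Proposition~\ref{prop:negative}). A secondary subtlety worth handling carefully is the non-square or rank-deficient regime: there one should first whiten $\x$ so that the remaining mixing is orthogonal on the image of $D$, and then run the above pairwise argument on the whitened representation before lifting the conclusion back to $D$.
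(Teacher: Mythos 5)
The paper does not actually prove this statement: it is imported as a known result from \citet{comon1994independent}, and the only accompanying text explains why the permutation and negation indeterminacies are unavoidable (since $P^{-1}\s$ and $N^{-1}\s$ again satisfy the hypotheses). Your sketch is a correct reconstruction of the classical proof, and it follows essentially the route of the cited source: reduce $D\s \overset{d}{=} D'\s'$ to a single mixing matrix $M = D'^{-1}D$, apply Darmois--Skitovich pairwise to conclude $M_{ij}M_{kj}=0$ for $i \ne k$, use invertibility to upgrade ``at most one nonzero per column'' to a monomial matrix $PN$, and use the unit-variance normalisation to force the nonzero entries to be $\pm 1$. Treating Darmois--Skitovich as a black box is no weaker than what the paper does, since the paper treats the entire theorem as a black box. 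One minor wording point: $s'_i$ is not literally $\sum_j M_{ij}s_j$; rather the vector $M\s$ is equal in distribution to $\s'$, hence its components are mutually independent, which is exactly what the Darmois--Skitovich step requires.
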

It is clear why these two indeterminacies exist: given $\x = D\s$ we also have $\x = DPP^{-1}\s$ and $\x = DNN^{-1}\s$, and $P^{-1}\s$ and $N^{-1}\s$ satisfy all the conditions of the theorem.

Our existing assumptions are almost enough to guarantee counterfactual identifiability. 
To gain intuition, suppose that $B=I$ and that $A_0,A_1$ are fully identifiable (setting aside the indeterminacies for a moment). Then we would have
\begin{equation}
    \x_{c=1}|(\x,c=0) = A_1A_0^{-1}\x.
\end{equation}
Although $A_0,A_1$ are not fully identifiable, we cannot apply arbitrary permutations and negations to $A_0$ and $A_1$ separately and still have the condition $\z \indep c$ hold. Suppose we make the substitutions
\begin{equation}
    A_0 \mapsto A_0P_0 \qquad A_1 \mapsto A_1P_1
\end{equation}
then the implied distributions for $\z$ must also be changed as
\begin{equation}
    r(\z|c=0) \mapsto P_0^{-1}\circ r(\z|c=0) \qquad r(\z|c=1) \mapsto P_1^{-1}\circ r(\z|c=1).
\end{equation}
If $P_0=P_1$, then the distributions remain equal, but the counterfactual predictions are also unchanged since
\begin{equation}
    (A_1P_1)(A_0P_0)^{-1} = A_1P_1P_0^{-1}A_0^{-1} = A_1A_0^{-1}.
\end{equation}
On the other hand, if $P_0 \ne P_1$ then applying different permutations would lead to two different distributions for $r(\z|c=0)$ and $r(\z|c=1)$, violating $\z \indep c$. There is one more case to be eliminated. That is the case where $P_0$ and $P_1$ are different, but due to a symmetry of the distribution $r(\z)$, the condition $\z \indep c$ still holds. The following assumption excludes this possibility and formalises our assumptions for $r(\z)$.

\begin{assumption}
\label{assume:s}
Assume that $\z \sim r(\z)$ can be expressed as $\z = B\s$ where the components $s_i$ of $\s$ are independent and non-Gaussian of unit variance. Furthermore, assume that for every permutation matrix $P$ and negation matrix $N$ such that $PN$ is not the identity we have
\begin{equation}
    PN\s \overset{d}{\ne} \s.
\end{equation}
\end{assumption}
This assumption is enough to prove the following theorem.

\counterfactualidentifiability*
\begin{proof}
    We have the model $\z \sim r(\z)$ and $\x = A_c\z$.
    The true counterfactual predictions are 
    \begin{equation}
    \x_{c=1}|(\x,c=0) = A_1A_0^{-1}\x,
    \end{equation}
    hence it is enough to show that $A_1A_0^{-1}$ is likelihood identifiable.
    Since $\x|c = A_c\z = A_cB\s = D_c\s$ and 
    \begin{equation}
        D_1D_0^{-1} = (A_1B)(A_0B)^{-1} = A_1A_0^{-1}
    \end{equation}
    it is enough to show that $D_1D_0^{-1}$ is likelihood identifiable.
    By Theorem~\ref{thm:comon}, $D_c$ are identifiable from unpaired data up to permutation and negation.
    Suppose we are able to identify $D_cP_cN_c$. (Note: permutation and negation matrices together form a group, so $P_cN_c$ is a general composition of permutations and negations.)
    Since $\s \indep c$ in the identified model, we must have
    \begin{equation}
        (P_0N_0)^{-1}\s \overset{d}{=} (P_1N_1)^{-1}\s
    \end{equation}
    which further implies
    \begin{equation}
        (P_1N_1)(P_0N_0)^{-1}\s \overset{d}{=} \s.
    \end{equation}
    It is possible to express $(P_1N_1)(P_0N_0)^{-1}$ as a product $PN$ of a permutation and a negation. Applying Assumption~\ref{assume:s}, we must have 
    \begin{equation}
        (P_1N_1)(P_0N_0)^{-1} = I.
    \end{equation}
    Hence
    \begin{equation}
        D_1P_1N_1(D_0P_0N_0)^{-1} = D_1(P_1N_1)(P_0N_0)^{-1}D_0^{-1} = D_1ID_0^{-1} = D_1D_0^{-1}.
    \end{equation}
    Thus $A_1A_0^{-1}$ is identifiable.
\end{proof}

\paragraph{It is unnecessary to compute the independent components}
An important aspect of the Theorem is that it is \emph{not} necessary to compute the independent components $\s$ and the matrices $D_c$. For the problem of counterfactual identifiability, it is sufficient that such matrices exist. For computation, we can rely on $A_0$ and $A_1$. 

\paragraph{Extensions of the proof}
Our proof directly extends to the case for more than two conditions.
It is possible to adapt the proof by considering the existing theory for Noisy ICA \citep{hyvarinen1999fast} and nonlinear ICA \citep{hyvarinen2001independent}. The latter has been explored in the context of VAE models by \citet{khemakhem2020variational}, and presents interesting challenges compared to the linear theory.

Theorem~\ref{thm:counterfactualidentifiability} is a key theorem because it provides circumstances under which a CVAE type model can identify counterfactuals. The requirement $\z \indep c$ can be seen as a significant weakening of the assumption $\z \sim N(0, I)$, whilst Assumption~\ref{assume:s} introduces necessary constraints on $\z$ for identifiability to hold.
 
\subsection{Consistency of CoMP under prior misspecification}
We have seen that CVAE-type models, including CoMP, can identify counterfactuals, but we now assume that $\z$ be of some unknown non-Gaussian distribution rather than an isotropic Gaussian.
This is at odds with the CoMP training objective, which uses a Gaussian prior.
Rather than altering the objective and learning a non-Gaussian prior (which is one valid approach), we instead rely on our experience, which shows that the CoMP training objective does not lead to $q_\phi(\z)$ being a Gaussian.

We will show that, under certain conditions, training with the unaltered CoMP objective when the true data generating distribution has $\z \sim r(\z)$ allows us to find the correct decoder network, and further the marginal distribution of the encoder $q_\phi(\z)$ matches $r(\z)$.

\consistency*
\begin{proof}
We begin by writing out the CoMP training objective with a data batch of size $n$
\begin{align}\begin{split}
    \label{eq:total_training_obj_theory}
    \mathcal{L}^\text{CoMP}_n = \frac{1}{n}\sum_{i=1}^n &\left[ \log\frac{p_\theta(\x_i|\z_i,c_i)p(\z_i)}{q_\phi(\z_i|\x_i,c_i)} -\gamma \log\left(\frac{\frac{1}{|I_{c_i}|}\sum_{j\in I_{c_i}} q_\phi(\z_i|\x_j,c_i)}{\frac{1}{|I_{\neg c_i}|} \sum_{j\in I_{\neg c_i}} q_\phi(\z_i|\x_j, c_j)} \right)\right].
\end{split}\end{align}
Suppose that the true data generating distribution is
\begin{equation}
    \z \sim r(\z) \qquad \x|c \sim p_{\theta^*}(\x|\z,c).
\end{equation}
We define
\begin{equation}
    p_{r,\theta}(\x|c) = \E_{r(\z)}[p_\theta(\x|\z,c)]
\end{equation}
and
\begin{equation}
    r_{\theta}(\z|\x,c) = \frac{r(\z)p_\theta(\x|\z,c)}{p_{r,\theta}(\x|c)}.
\end{equation}
Then we can rewrite the ELBO part of the objective as
\begin{align}
    \frac{1}{n}\sum_{i=1}^n\log\frac{p_\theta(\x_i|\z_i,c_i)p(\z_i)}{q_\phi(\z_i|\x_i,c_i)}     &= \frac{1}{n}\sum_{i=1}^n\log\frac{p_\theta(\x_i|\z_i,c_i)p(\z_i)r(\z_i)p_{r,\theta}(\x_i|c_i)}{q_\phi(\z_i|\x_i,c_i)r(\z_i)p_{r,\theta}(\x_i|c_i)}\\
    &= \frac{1}{n}\sum_{i=1}^n \log p_{r,\theta}(\x_i|c_i) - \log \frac{q_\phi(\z_i|\x_i,c_i)p_{r,\theta}(\x_i,c_i)}{r(\z_i)p_\theta(\x_i|\z_i,c_i)}-\log\frac{r(\z_i)}{p(\z_i)}\\
    \intertext{take the expectation over $\z_i \sim q_\phi(\z|\x_i,c_i)$}
    &= \frac{1}{n}\sum_{i=1}^n \log p_{r,\theta}(\x_i|c_i) - \KL[q_\phi(\z_i|\x_i,c_i)\|r_{\theta}(\z_i|\x_i,c_i)] - \E_{q_\phi(\z_i|\x_i,c_i)}\left[\log \frac{r(\z_i)}{p(\z_i)} \right].
\end{align}
Consider the augmented objective in which we introduce a coefficient $\zeta$ on the final term, and we reintroduce the CoMP misalignment penalty
\begin{align}
\begin{split}
    \E[\mathcal{L}_n^\text{CoMP}(\theta,\phi)] =& \frac{1}{n}\sum_{i=1}^n \log p_{r,\theta}(\x_i|c_i) - \KL[q_\phi(\z_i|\x_i,c_i)\|r_{\theta}(\z_i|\x_i,c_i)] \\
    &- \zeta\frac{1}{n}\sum_{i=1}^n\E_{q_\phi(\z_i|\x_i,c_i)}\left[\log \frac{r(\z_i)}{p(\z_i)} \right]\\
    &-\gamma \frac{1}{n}\sum_{i=1}^n\E_{q_\phi(\z_i|\x_i,c_i)}\left[\log\left(\frac{\frac{1}{|I_{c_i}|}\sum_{j\in I_{c_i}} q_\phi(\z_i|\x_j,c_i)}{\frac{1}{|I_{\neg c_i}|} \sum_{j\in I_{\neg c_i}} q_\phi(\z_i|\x_j, c_j)} \right)\right].
\end{split}
\end{align}
Treating $\zeta,\gamma$ as Lagrange multipliers, this is equivalent to the following constrainted optimisation problem
\begin{align}
    \label{eq:constrained}
    \max_{\theta,\phi}\ &  \frac{1}{n}\sum_{i=1}^n \log p_{r,\theta}(\x_i|c_i) - \KL[q_\phi(\z_i|\x_i,c_i)\|r_{\theta}(\z_i|\x_i,c_i)] \\
    \text{subject to }& \frac{1}{n}\sum_{i=1}^n\E_{q_\phi(\z_i|\x_i,c_i)}\left[\log \frac{r(\z_i)}{p(\z_i)} \right] \le K_\zeta \text{ and}\\
    &\frac{1}{n}\sum_{i=1}^n\E_{q_\phi(\z_i|\x_i,c_i)}\left[\log\left(\frac{\frac{1}{|I_{c_i}|}\sum_{j\in I_{c_i}} q_\phi(\z_i|\x_j,c_i)}{\frac{1}{|I_{\neg c_i}|} \sum_{j\in I_{\neg c_i}} q_\phi(\z_i|\x_j, c_j)} \right)\right] \le L_\gamma.
\end{align}
Neglecting the constraints for a moment, if we maximise the main objective with respect to $\phi$ with a sufficiently expressive encoder, we will recover $q_\phi(\z|\x,c) = r_\theta(\z|\x,c)$ in the limit $n\to\infty$.
We are then left with the maximum likelihood problem
\begin{equation}
\max_{\theta}\  \frac{1}{n}\sum_{i=1}^n \log p_{r,\theta}(\x_i|c_i),
\end{equation}
thus $\theta$ will recover a likelihood-maximising decoder under the \emph{misspecified} prior $r(\z)$.

This solution will be valid if the constraints are satisfied at that solution.
Note that in the limit $n\to\infty$, Theorem~\ref{thm:kl} shows that the CoMP misalignment penalty becomes 
\begin{equation}
    \sum_{c\in\mathcal{C}} p(c) \KL\left[q_\phi(\z|c) || q_\phi(\z|\neg\,c) \right].
\end{equation}
Since $q_\phi(\z|\x,c) = r_\theta(\z|\x,c)$, we have $q_\phi(\z|c) = r(\z)$ for every $c$, hence the misalignment penalty is equal to 0 and the constraint is satisfied.
In this limit, we also have
\begin{equation}
    \frac{1}{n}\sum_{i=1}^n\E_{q_\phi(\z_i|\x_i,c_i)}\left[\log \frac{r(\z_i)}{p(\z_i)} \right] \to \KL[r(\z)\|p(\z)].
\end{equation}
Therefore, returning to the original problem with $\zeta=1$, provided that $\KL[r(\z)\|p(\z)] \le K_1$, then the constraints are satisfied under unconstrained optimisation in \eqref{eq:constrained}, meaning that $\theta$ tends to a maximum likelihood solution of $\E_{\x,c}[\log p_{r,\theta}(\x|c)]$.
\end{proof}

The result of Theorem~\ref{thm:consistency} is directly connected to our previous discussion about ICA, since ICA is a solution to the maximum likelihood problem \citep{hyvarinen2001independent}.
However, the consistency theorem is much more general, since it applies to a much wider range of models, and does not assume a linear decoder.
Theorem~\ref{thm:consistency} also justifies our distinction in Theorem~\ref{thm:counterfactualidentifiability} between $\z$ and $\s$. In trying to satisfy the condition $\KL[r(\z)\|p(\z)] \le K_1$, we are allowed to apply any linear transformation to $\z$. If any linear transformation of $\z$ satisfies $\KL[r(\z)\|p(\z)] \le K_1$, then consistency holds true.

\section{Experimental details}
\label{sec:experiment_details}

The code to reproduce our experiments is available at \url{https://github.com/BenevolentAI/CoMP}.

\subsection{Dataset details and data processing}

\paragraph{\tcgaccle} This dataset, as used in the experiments in \textit{Cellinger} \cite{Celligner}\footnote{\url{www.nature.com/articles/s41467-020-20294-x\#data-availability}}, consists of bulk expression profiles for tumours ($n=12,236$) and cancer cell-lines ($n=1,249$) across 39 different cancer types.
The tumour samples are taken from The Cancer Genome Atlas (TCGA) \cite{tcga}\footnote{\url{www.cancer.gov/about-nci/organization/ccg/research/structural-genomics/tcga}} and Therapeutically Applicable Research To Generate Effective Treatments (TARGET) \cite{target}\footnote{\url{ocg.cancer.gov/programs/target}} and were compiled by the Treehouse Childhood Cancer Initiative at the UC Santa Cruz Genomics Institute \cite{goldman2020visualizing}\footnote{\url{https://treehousegenomics.soe.ucsc.edu/public-data/previous-compendia.html\#tumor_v10_polyA}}.
The cell lines are from the Cancer Cell Line Encyclopedia (CCLE) \cite{ccle}\footnote{\url{portals.broadinstitute.org/ccle}}.
The condition variable is the tumour / cell line label. The expression data is restricted to the intersecting subset of 16,612 protein-coding genes and are TPM $\log_2$-transformed values.

In our experiments, as is common practice in omics data analysis (e.g. \citet{Celligner}), we pre-process the data by filtering out low-variance genes. Here, we select the 8,000 highest variance genes across cell-lines and tumours separately and take the union to give a final feature set of 9,468 genes.

For our calculation of $\mkbet_{k, \alpha}$ and $\tilde{s}_{k,c}$ metrics we only include cancer types with at least 400 samples (i.e. $4\times k$ for our choice of $k=100$) to ensure that the metric retains the ability to evaluate \emph{local} mixing. 15 cancer types pass this threshold, representing 82\% of all samples.

\paragraph{Stimulated / untreated single-cell PBMCs}
This dataset consists of single-cell expression profiles of 14,053 genes for peripheral blood mononuclear cells (PBMCs), various immune cell types pooled from eight lupus patient samples. 7,217 of the cells were stimulated with interferon (IFN)-$\beta$  while 6,359 were left untreated (control) \citep{kang2018multiplexed}. This dataset has been used in \citet{lotfollahi2019scgen} and \citet{lotfollahi2019conditional} previously. We obtained an annotated and pre-filtered dataset from \citet{lotfollahi2019conditional}\footnote{\url{https://github.com/theislab/trVAE\_reproducibility}} \footnote{\url{https://drive.google.com/drive/folders/1n1SLbXha4OH7j7zZ0zZAxrj_-2kczgl8}, filename: \texttt{kang\_count.h5ad}}
, which includes metadata on immune cell type labels along the condition label; stimulated or control.

The file was read into scanpy \cite{scanpy} and pre-processed using \texttt{sc.pp.normalize\_total(data, inplace=True)}, which normalises the data such that each cell has a total count equal to the median total count across all cells. The normalised counts were then $\log (x+1)$ transformed using the  scanpy function, \texttt{sc.pp.log1p(data)}. We selected the top 2,000 most variable genes using the scanpy function, \texttt{sc.pp.highly\_variable\_genes(data, flavor="seurat", n\_top\_genes=2000)}.

We obtained the top 50 differentially expressed (DE) genes between stimulated and control cells for each cell type by subsetting the data for each cell type and using scanpy's function \texttt{sc.tl.rank\_genes\_groups(cell\_type\_data, groupby="stim", n\_genes=50, method="wilcoxon")}, which ranks genes based on a Wilcoxon rank-sum test. For each cell type, we separated the top 50 DE genes into those that were up-regulated and down-regulated by IFN-$\beta$ stimulation.

\paragraph{Single-cell RNA-seq data integration}
This dataset consists of single-cell RNA count measurements for 33,694 genes in 21,463 PBMCs that were processed using different library preparation protocols; 3-prime V1 (4,809 cells), 3-prime V2 (8,380 cells) and 5-prime (7,697 cells). This dataset has been used in \citet{Harmony} previously.

We obtained the data in binary format (RDS) and associated cell sub-type metadata from \citet{Harmony}\footnote{\url{https://github.com/immunogenomics/harmony2019/tree/master/data/figure4}}. We selected the cells that were processed using 3-prime V2 and 5-prime library preparation methods and filtered out mitochondrial genes and those that had zero counts across all cells in the two library preparation methods. This resulted in single-cell expression data for 16,077 cells and 23,338 genes, after filtering.

Using R version 3.6.3 and Seurat 3.0.2, we normalised via \texttt{NormalizeData(x, normalization.method = "LogNormalize", scale.factor = 10000)} and identified the most variable genes using \texttt{FindVariableFeatures(x, selection.method = "vst", nfeatures = 2000)} for each library preparation protocol independently. Then, we applied Seurat's \texttt{SelectIntegrationFeatures} function to select the top 2000 genes that were repeatedly variable across library preparation protocols.

We applied Seurat's canonical correlation analysis (CCA) method \cite{stuart2019comprehensive} to integrate the data from the two library protocols (using Seurat's \texttt{FindIntegrationAnchors} and \texttt{IntegrateData} functions) and extracted the resulting embeddings. Using Harmony version 0.1.0 \cite{Harmony} we integrated the data from the two library protocols using the \texttt{RunHarmony} function and extracted the resulting embeddings.

\paragraph{UCI Adult Income}
This dataset is derived from the 1994 United States census bureau and contains information relating to education, marriage status, ethnicity, self-reported gender of census participants and a binary high / low income label (\$50,000 threshold). Data was downloaded from the UCI Machine Learning Repository \cite{Dua2019UCIIncome}.

\subsection{Evaluation metrics}

Let $\{(\mathbf{x}_i\, c_i, d_i)\}_{i=1}^N$ be the dataset $N$ samples with $c_i \in \{0, 1\}$ the binary condition variable and $d_i \in \{d^{(m)}\}_{m=1}^{M}$ an additional discrete random variable of interest not used in training. 

We start with some housekeeping definitions of sample index subsets of the full dataset $[1, N]$. Let $N_{i,k}$ be the index set of the $k$ nearest-neighbours of $z_i$. Let $I_{c}$ be the index set of samples has $c_j=c$, and $J_{d}$ for samples that has $d_j=d$. 

From \citet{kbet}, $\kbet_{k, \alpha}$ is the proportion of rejected null hypotheses from the set of separate $\chi^2$ independence tests, with significance threshold $\alpha$, on the $k$ nearest-neighbours of every sample. If we let $\kbet_{k, \alpha}^d$ be the metric calculated on the filtered sub-population with index set $J_d$, then we define a mean kBET metric as
\begin{equation}
    \mkbet_{k, \alpha} := \frac{1}{M}\sum_{m=1}^M  \kbet_{k, \alpha}^{d^{(m)}}.
\end{equation}

We also consider local Silhouette Coefficients \cite{rousseeuw1987silhouettes}
\begin{equation}\label{sil}
    s_{k, c} := \frac{1}{|I_c|}\sum_{i\in I_c}\frac{b_{i,k}-a_{i,k }}{\max(a_{i, k}, b_{i,k})}, \quad s_{k} := \frac{1}{|I_c \cup I_{\neg c}|}\sum_{i\in I_c \cup I_{\neg c}}\frac{b_{i,k}-a_{i,k }}{\max(a_{i, k}, b_{i,k})},
\end{equation}
where $a_{i, k}$ and $b_{i, k}$ are the mean Euclidean distances between $\mathbf{z}_i$ and all other sample points in the $k$ nearest-neighbour set that are of the same and different condition variable respectively; i.e.
\begin{equation}
    a_{i,k} \equiv \frac{1}{|N_{i,k}\cap I_{c_i}|} \sum_{j\in N_{i,k}\cap I_{c_i}} \lVert \mathbf{z}_i - \mathbf{z}_j \rVert,\qquad b_{i,k} \equiv \frac{1}{|N_{i,k} \cap I_{\neg c_i}|} \sum_{j\in N_{i,k} \cap I_{\neg c_i}} \lVert \mathbf{z}_i - \mathbf{z}_j \rVert. 
\end{equation}

Similar to the mean kBET metric, we can also define a mean Silhouette Coefficient $\tilde{s}_{k,c}$ as follows. We first define
\begin{equation}
    {s}_{k, c}^{d} := \frac{1}{|I_c \cap J_d|}\sum_{i\in I_c \cap J_d}\frac{b_{i,k,d}-a_{i,k,d}}{\max(a_{i, k,d}, b_{i,k,d})},
\end{equation}
with 
\begin{equation}
\begin{split}
    a_{i,k,d} &\equiv \frac{1}{|N_{i,k}\cap I_{c_i}\cap J_d|} \sum_{j\in N_{i,k}\cap I_{c_i}\cap J_d} \lVert \mathbf{z}_i - \mathbf{z}_j \rVert,\\
    b_{i,k,d} &\equiv \frac{1}{|N_{i,k} \cap I_{\neg c_i}\cap J_d|} \sum_{j\in N_{i,k} \cap I_{\neg c_i}\cap J_d} \lVert \mathbf{z}_i - \mathbf{z}_j \rVert. 
    \end{split}
\end{equation}
Then the mean local Silhouette Coefficient is
\begin{equation}
    \tilde{s}_{k,c} := \frac{1}{M}\sum_{m=1}^M s_{k,c}^{d^{(m)}},
\end{equation}
with $\tilde{s}_k$ defined analogously to $s_k$ in \eqref{sil}.
A well-mixed representation that keeps samples with identical $d_i$ together will have low values of $\mkbet_{k,\alpha}$ and $\tilde{s}_{k, c}$ close to zero. Higher values near 1 would indicate either an undesirable dependency between $\mathbf{z}$ and $c$ in the form of identifiable clusters around values of $c$, a censoring process that fails to preserve the clustering with respect to $d$, or a combination of both.

\subsection{\tcgaccle\  representations for individual cancer types}
As the cancer type labels are not used in training, there is the possibility that cell lines of one cancer type will cluster around tumours of a different type. Here we illustrate this risk by examining the subset of Prostate Cancer latent representations inferred by CoMP and trVAE, where the majority of tumours and cell lines for this cancer type can be found in a single group. As shown in Figure \ref{individualcancers}, trVAE has cell-lines from other cancer types erroneously placed within the prostate cancer cluster; CoMP, on the other hand, maintains a relatively high level of specificity with fewer non-prostate cancer cell lines present. On average across all cancer types, this favourable behaviour of CoMP is reflected in the low $\tilde{s}$ and \mkbet\  scores.

\begin{figure}[t]
  \centering
  \includegraphics[width=1\columnwidth]{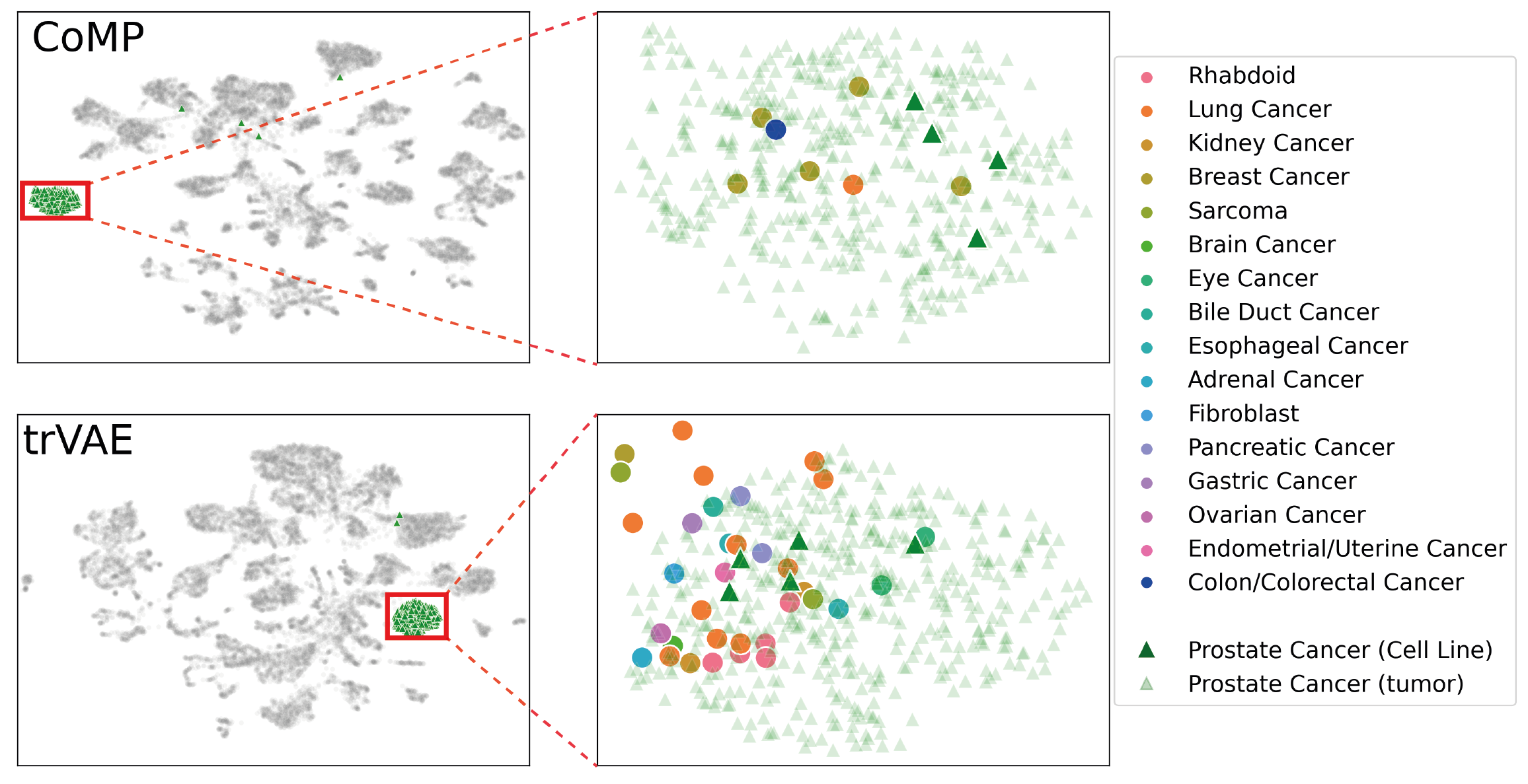}
  \caption{2D UMAP projection of the CoMP and trVAE posterior means of $\mathbf{z}_i$ from \tcgaccle\ data and the detailed Prostate Cancer tumour sample clusters.}
  \label{individualcancers}
\end{figure}

\subsection{Condition mixing metrics for stimulated / untreated single-cell PBMCs}
In this section we present additional results of our experiments on the stimulated / untreated single-cell PBMCs scRNA-seq data evaluating the condition mixing capabilities of CoMP. We focus on the two mixing metrics -- $s_k$ and $\kbet_{k, \alpha}$ -- and report both the mean values and their standard errors over 10 random model initialisations. We have the following three sets of experiments:
\paragraph{Benchmarking} In Tables \ref{tab:kang_kbet_metrics} and \ref{tab:kang_sil_metrics} we benchmark CoMP against the four other VAE models and show that CoMP outperforms the other models by significant margins on both metrics.
\paragraph{Cell type level evaluation} In Table \ref{tab:kang_cell_kbet_sil_metrics} we evaluate the mixing at a cell type level, where the strong mixing capabilities of CoMP is seen consistently across cell types. In particular, we highlight the good mixing of the CD14 Mono cell type by CoMP relative to the other penalised models.
\paragraph{CoMP penalty scale $\gamma$} In Tables \ref{tab:kang_kbet_metrics_gamma} to \ref{tab:kang_cell_kbet_sil_metrics_gamma} we explore the effect of varying the CoMP penalty scale $\gamma$ at both the population and cell type levels. Here we see that the optimum value is $\sim 1$.

\begin{table}
\centering
\caption{kBET metrics for stimulated / untreated single-cell PBMCs expression dataset with $k = 100$ and $\alpha = 0.1$. Here, $\kbet$ and $\mkbet$ refer to the mean $\kbet$ and mean $\mkbet$ across 10 random seeds for each model, respectively. SEM represents the standard error of the mean.}
\begin{tabular}{lcccc}
\toprule
 Model & $\kbet_{k,\alpha}$ & $\kbet_{k,\alpha} \pm \text{SEM}$ & $\mkbet_{k,\alpha}$ & $\mkbet_{k,\alpha} \pm \text{SEM}$ \\
\midrule
   VAE &             0.9788 &                  (0.9754, 0.9821) &              0.9443 &                   (0.9351, 0.9535) \\
  CVAE &             0.9056 &                  (0.8973, 0.9139) &              0.8202 &                   (0.8060, 0.8344) \\
  VFAE &             0.4753 &                  (0.4660, 0.4847) &              0.4067 &                   (0.3942, 0.4192) \\
 trVAE &             0.5082 &                  (0.4946, 0.5218) &              0.3819 &                   (0.3683, 0.3955) \\
  CoMP &             0.1211 &                  (0.0845, 0.1577) &              0.0681 &                   (0.0388, 0.0975) \\
\bottomrule
\end{tabular}
\label{tab:kang_kbet_metrics}
\end{table}

\begin{table}
\centering
\caption{Silhouette Coefficient metrics for stimulated / untreated single-cell PBMCs expression dataset with $k = 100$. Here, $s$ and $\tilde{s}$ refer to the mean $s$ and mean $\tilde{s}$ across 10 random seeds for each model, respectively. SEM represents the standard error of the mean.}
\begin{tabular}{lcccc}
\toprule
 Model &  $s_{k}$ & $s_{k} \pm \text{SEM}$ & $\tilde{s}_{k}$ & $\tilde{s}_{k} \pm \text{SEM}$ \\
\midrule
   VAE &   0.6354 &       (0.6303, 0.6404) &          0.5249 &               (0.5172, 0.5326) \\
  CVAE &   0.4872 &       (0.4805, 0.4939) &          0.3856 &               (0.3802, 0.3910) \\
  VFAE &   0.0501 &       (0.0457, 0.0544) &          0.0793 &               (0.0731, 0.0855) \\
 trVAE &   0.0651 &       (0.0596, 0.0705) &          0.0605 &               (0.0574, 0.0636) \\
  CoMP &  -0.0026 &     (-0.0032, -0.0020) &         -0.0013 &              (-0.0027, 0.0001) \\
\bottomrule
\end{tabular}
\label{tab:kang_sil_metrics}
\end{table}

\begin{table}
\centering
\caption{Cell type specific kBET and Silhouette Coefficient metrics for the stimulated / untreated single-cell PBMCs expression dataset summarised for 10 random seeds for each model. Metrics represent the mean value across the 10 random seeds for each model. Here, $k = 100$ and $\alpha = 0.1$.}
\vspace{4pt}
\begin{tabular}{llcccc}
\toprule
 Cell type &  Model & $\kbet_{k,\alpha}$ & $\kbet_{k,\alpha} \pm \text{SEM}$ &  $s_{k}$ & $s_{k} \pm \text{SEM}$ \\
\midrule
         B &    VAE &             0.9724 &                  (0.9683, 0.9765) &   0.5375 &       (0.5225, 0.5526) \\
         B &   CVAE &             0.9016 &                  (0.8964, 0.9068) &   0.2884 &       (0.2783, 0.2985) \\
         B &   VFAE &             0.3892 &                  (0.3357, 0.4426) &   0.0263 &       (0.0205, 0.0320) \\
         B &  trVAE &             0.2697 &                  (0.2243, 0.3151) &   0.0102 &       (0.0083, 0.0121) \\
         B &   CoMP &             0.0110 &                  (0.0002, 0.0217) &  -0.0040 &     (-0.0050, -0.0030) \\
 CD14 Mono &    VAE &             1.0000 &                  (1.0000, 1.0000) &   0.9388 &       (0.9356, 0.9420) \\
 CD14 Mono &   CVAE &             1.0000 &                  (1.0000, 1.0000) &   0.9373 &       (0.9317, 0.9428) \\
 CD14 Mono &   VFAE &             0.8192 &                  (0.8084, 0.8300) &   0.0548 &       (0.0486, 0.0610) \\
 CD14 Mono &  trVAE &             0.9360 &                  (0.9213, 0.9508) &   0.1579 &       (0.1414, 0.1743) \\
 CD14 Mono &   CoMP &             0.1709 &                  (0.0817, 0.2601) &   0.0003 &      (-0.0015, 0.0022) \\
 CD16 Mono &    VAE &             1.0000 &                  (1.0000, 1.0000) &   0.7462 &       (0.7168, 0.7756) \\
 CD16 Mono &   CVAE &             1.0000 &                  (1.0000, 1.0000) &   0.7455 &       (0.7204, 0.7706) \\
 CD16 Mono &   VFAE &             0.8796 &                  (0.8572, 0.9020) &   0.2328 &       (0.1860, 0.2796) \\
 CD16 Mono &  trVAE &             0.8059 &                  (0.7856, 0.8263) &   0.0535 &       (0.0493, 0.0576) \\
 CD16 Mono &   CoMP &             0.0947 &                  (0.0184, 0.1711) &   0.0005 &      (-0.0011, 0.0021) \\
     CD4 T &    VAE &             0.9964 &                  (0.9960, 0.9968) &   0.5069 &       (0.4932, 0.5206) \\
     CD4 T &   CVAE &             0.9104 &                  (0.9028, 0.9179) &   0.2103 &       (0.1956, 0.2250) \\
     CD4 T &   VFAE &             0.1460 &                  (0.1358, 0.1562) &   0.0035 &       (0.0030, 0.0039) \\
     CD4 T &  trVAE &             0.2236 &                  (0.1985, 0.2487) &   0.0042 &       (0.0036, 0.0047) \\
     CD4 T &   CoMP &             0.0538 &                  (0.0442, 0.0634) &  -0.0022 &     (-0.0028, -0.0016) \\
     CD8 T &    VAE &             0.9041 &                  (0.8633, 0.9448) &   0.2828 &       (0.2714, 0.2942) \\
     CD8 T &   CVAE &             0.4805 &                  (0.4080, 0.5529) &   0.0653 &       (0.0579, 0.0728) \\
     CD8 T &   VFAE &             0.0397 &                  (0.0307, 0.0486) &   0.0094 &       (0.0083, 0.0106) \\
     CD8 T &  trVAE &             0.0317 &                  (0.0224, 0.0409) &   0.0071 &       (0.0052, 0.0090) \\
     CD8 T &   CoMP &             0.0634 &                  (0.0437, 0.0830) &  -0.0000 &      (-0.0008, 0.0008) \\
        DC &    VAE &             1.0000 &                  (1.0000, 1.0000) &   0.6834 &       (0.6678, 0.6991) \\
        DC &   CVAE &             1.0000 &                  (1.0000, 1.0000) &   0.6723 &       (0.6598, 0.6847) \\
        DC &   VFAE &             0.7095 &                  (0.6715, 0.7476) &   0.2901 &       (0.2733, 0.3069) \\
        DC &  trVAE &             0.6286 &                  (0.5972, 0.6600) &   0.2339 &       (0.2225, 0.2453) \\
        DC &   CoMP &             0.0784 &                  (0.0329, 0.1239) &   0.0034 &      (-0.0027, 0.0096) \\
        NK &    VAE &             0.9645 &                  (0.9525, 0.9764) &   0.2609 &       (0.2358, 0.2860) \\
        NK &   CVAE &             0.8798 &                  (0.8682, 0.8914) &   0.1095 &       (0.0975, 0.1215) \\
        NK &   VFAE &             0.1548 &                  (0.1258, 0.1838) &   0.0093 &       (0.0072, 0.0114) \\
        NK &  trVAE &             0.1013 &                  (0.0514, 0.1512) &   0.0113 &       (0.0067, 0.0159) \\
        NK &   CoMP &             0.0721 &                  (0.0488, 0.0953) &  -0.0025 &     (-0.0035, -0.0015) \\
         T &    VAE &             0.7172 &                  (0.6377, 0.7967) &   0.2423 &       (0.2135, 0.2711) \\
         T &   CVAE &             0.3891 &                  (0.3315, 0.4466) &   0.0561 &       (0.0459, 0.0663) \\
         T &   VFAE &             0.1155 &                  (0.0934, 0.1376) &   0.0082 &       (0.0060, 0.0104) \\
         T &  trVAE &             0.0585 &                  (0.0354, 0.0815) &   0.0061 &       (0.0047, 0.0075) \\
         T &   CoMP &             0.0009 &                  (0.0003, 0.0016) &  -0.0062 &     (-0.0074, -0.0050) \\
\bottomrule
\end{tabular}
\label{tab:kang_cell_kbet_sil_metrics}
\end{table}

\begin{table}
\centering
\caption{Effect of varying $\gamma$ for the CoMP model on the kBET metrics for the stimulated / untreated single-cell PBMCs expression dataset. Here, $\kbet$ and $\mkbet$ refer to the mean $\kbet$ and mean $\mkbet$ across 10 random seeds for each value of $\gamma$, respectively. SEM represents the standard error of the mean. Here, $k = 100$ and $\alpha = 0.1$.}
\begin{tabular}{llcccc}
\toprule
Model &  $\gamma$ & $\kbet_{k,\alpha}$ & $\kbet_{k,\alpha} \pm \text{SEM}$ & $\mkbet_{k,\alpha}$ & $\mkbet_{k,\alpha} \pm \text{SEM}$ \\
\midrule
 CoMP &      0.25 &             0.2703 &                  (0.2482, 0.2924) &              0.1276 &                   (0.1085, 0.1467) \\
 CoMP &      0.50 &             0.1741 &                  (0.1438, 0.2045) &              0.0763 &                   (0.0543, 0.0983) \\
 CoMP &      1.00 &             0.1211 &                  (0.0845, 0.1577) &              0.0681 &                   (0.0388, 0.0975) \\
 CoMP &      5.00 &             0.4426 &                  (0.3889, 0.4963) &              0.4419 &                   (0.3827, 0.5011) \\
 CoMP &     10.00 &             0.5311 &                  (0.4456, 0.6167) &              0.5118 &                   (0.4135, 0.6100) \\
 CoMP &     15.00 &             0.4288 &                  (0.3511, 0.5065) &              0.4614 &                   (0.3738, 0.5489) \\
 CoMP &     20.00 &             0.5880 &                  (0.5101, 0.6660) &              0.6383 &                   (0.5547, 0.7219) \\
\bottomrule
\end{tabular}
\label{tab:kang_kbet_metrics_gamma}
\end{table}

\begin{table}
\centering
\caption{Effect of varying $\gamma$ for the CoMP model on the Silhouette Coefficient metrics for the stimulated / untreated single-cell PBMCs expression dataset. Here, $s$ and $\tilde{s}$ refer to the mean $s$ and mean $\tilde{s}$ across 10 random seeds for each value of $\gamma$, respectively. SEM represents the standard error of the mean. Here, $k = 100$.}
\begin{tabular}{llcccc}
\toprule
Model &  $\gamma$ &  $s_{k}$ & $s_{k} \pm \text{SEM}$ & $\tilde{s}_{k}$ & $\tilde{s}_{k} \pm \text{SEM}$ \\
\midrule
 CoMP &      0.25 &  -0.0024 &     (-0.0028, -0.0021) &         -0.0006 &              (-0.0018, 0.0006) \\
 CoMP &      0.50 &  -0.0029 &     (-0.0030, -0.0028) &         -0.0023 &             (-0.0031, -0.0015) \\
 CoMP &      1.00 &  -0.0026 &     (-0.0032, -0.0020) &         -0.0013 &              (-0.0027, 0.0001) \\
 CoMP &      5.00 &   0.0043 &       (0.0026, 0.0059) &          0.0209 &               (0.0145, 0.0274) \\
 CoMP &     10.00 &   0.0028 &       (0.0016, 0.0039) &          0.0523 &               (0.0300, 0.0746) \\
 CoMP &     15.00 &   0.0046 &       (0.0025, 0.0067) &          0.0750 &               (0.0484, 0.1016) \\
 CoMP &     20.00 &   0.0061 &       (0.0038, 0.0083) &          0.1319 &               (0.1053, 0.1584) \\
\bottomrule
\end{tabular}
\label{tab:kang_sil_metrics_gamma}
\end{table}

\begin{table}
\centering
\caption{Effect of varying $\gamma$ for cell type specific kBET and $s$ metrics for the stimulated / untreated single-cell PBMCs expression dataset. Metrics represent the mean value across the 10 random seeds. Here, $k = 100$ and $\alpha = 0.1$.}
\resizebox{0.7\textwidth}{!}{
\begin{tabular}{lllcccc}
\toprule
 Cell type & Model &  $\gamma$ & $\kbet_{k,\alpha}$ & $\kbet_{k,\alpha} \pm \text{SEM}$ &  $s_{k}$ & $s_{k} \pm \text{SEM}$ \\
\midrule
         B &  CoMP &      0.25 &             0.0061 &                  (0.0037, 0.0086) &  -0.0046 &     (-0.0053, -0.0038) \\
         B &  CoMP &      0.50 &             0.0030 &                  (0.0004, 0.0056) &  -0.0033 &     (-0.0039, -0.0028) \\
         B &  CoMP &      1.00 &             0.0110 &                  (0.0002, 0.0217) &  -0.0040 &     (-0.0050, -0.0030) \\
         B &  CoMP &      5.00 &             0.4860 &                  (0.3881, 0.5840) &   0.0094 &       (0.0073, 0.0115) \\
         B &  CoMP &     10.00 &             0.5321 &                  (0.4226, 0.6415) &   0.0195 &       (0.0118, 0.0273) \\
         B &  CoMP &     15.00 &             0.5229 &                  (0.4331, 0.6127) &   0.0167 &       (0.0109, 0.0224) \\
         B &  CoMP &     20.00 &             0.6135 &                  (0.5308, 0.6963) &   0.0729 &       (0.0384, 0.1075) \\
 CD14 Mono &  CoMP &      0.25 &             0.6868 &                  (0.6336, 0.7400) &   0.0060 &      (-0.0001, 0.0121) \\
 CD14 Mono &  CoMP &      0.50 &             0.3840 &                  (0.3001, 0.4680) &  -0.0007 &      (-0.0017, 0.0003) \\
 CD14 Mono &  CoMP &      1.00 &             0.1709 &                  (0.0817, 0.2601) &   0.0003 &      (-0.0015, 0.0022) \\
 CD14 Mono &  CoMP &      5.00 &             0.4530 &                  (0.3295, 0.5765) &   0.0254 &       (0.0092, 0.0416) \\
 CD14 Mono &  CoMP &     10.00 &             0.7073 &                  (0.6114, 0.8031) &   0.0745 &       (0.0419, 0.1072) \\
 CD14 Mono &  CoMP &     15.00 &             0.6991 &                  (0.5946, 0.8036) &   0.1240 &       (0.0795, 0.1685) \\
 CD14 Mono &  CoMP &     20.00 &             0.7889 &                  (0.6943, 0.8834) &   0.1429 &       (0.1012, 0.1846) \\
 CD16 Mono &  CoMP &      0.25 &             0.0963 &                  (0.0382, 0.1543) &   0.0031 &       (0.0018, 0.0045) \\
 CD16 Mono &  CoMP &      0.50 &             0.0589 &                  (0.0216, 0.0962) &   0.0003 &      (-0.0008, 0.0013) \\
 CD16 Mono &  CoMP &      1.00 &             0.0947 &                  (0.0184, 0.1711) &   0.0005 &      (-0.0011, 0.0021) \\
 CD16 Mono &  CoMP &      5.00 &             0.5897 &                  (0.4934, 0.6859) &   0.0421 &       (0.0248, 0.0593) \\
 CD16 Mono &  CoMP &     10.00 &             0.6113 &                  (0.4660, 0.7566) &   0.1741 &       (0.0946, 0.2537) \\
 CD16 Mono &  CoMP &     15.00 &             0.5950 &                  (0.4552, 0.7348) &   0.2778 &       (0.1705, 0.3850) \\
 CD16 Mono &  CoMP &     20.00 &             0.8420 &                  (0.7381, 0.9460) &   0.4705 &       (0.3782, 0.5628) \\
     CD4 T &  CoMP &      0.25 &             0.0642 &                  (0.0543, 0.0742) &  -0.0027 &     (-0.0032, -0.0022) \\
     CD4 T &  CoMP &      0.50 &             0.0401 &                  (0.0344, 0.0458) &  -0.0027 &     (-0.0031, -0.0023) \\
     CD4 T &  CoMP &      1.00 &             0.0538 &                  (0.0442, 0.0634) &  -0.0022 &     (-0.0028, -0.0016) \\
     CD4 T &  CoMP &      5.00 &             0.3631 &                  (0.2966, 0.4296) &   0.0036 &       (0.0024, 0.0048) \\
     CD4 T &  CoMP &     10.00 &             0.4058 &                  (0.3020, 0.5096) &   0.0075 &       (0.0039, 0.0111) \\
     CD4 T &  CoMP &     15.00 &             0.3287 &                  (0.2466, 0.4108) &   0.0080 &       (0.0043, 0.0117) \\
     CD4 T &  CoMP &     20.00 &             0.5127 &                  (0.4140, 0.6115) &   0.0561 &       (0.0192, 0.0929) \\
     CD8 T &  CoMP &      0.25 &             0.0081 &                  (0.0051, 0.0111) &  -0.0015 &     (-0.0026, -0.0004) \\
     CD8 T &  CoMP &      0.50 &             0.0216 &                  (0.0129, 0.0304) &  -0.0021 &     (-0.0032, -0.0010) \\
     CD8 T &  CoMP &      1.00 &             0.0634 &                  (0.0437, 0.0830) &  -0.0000 &      (-0.0008, 0.0008) \\
     CD8 T &  CoMP &      5.00 &             0.4287 &                  (0.3508, 0.5067) &   0.0266 &       (0.0167, 0.0365) \\
     CD8 T &  CoMP &     10.00 &             0.4289 &                  (0.3251, 0.5326) &   0.0115 &       (0.0073, 0.0156) \\
     CD8 T &  CoMP &     15.00 &             0.2733 &                  (0.1927, 0.3540) &   0.0071 &       (0.0051, 0.0090) \\
     CD8 T &  CoMP &     20.00 &             0.4913 &                  (0.3731, 0.6095) &   0.0495 &       (0.0127, 0.0863) \\
        DC &  CoMP &      0.25 &             0.1379 &                  (0.0972, 0.1786) &   0.0056 &       (0.0026, 0.0085) \\
        DC &  CoMP &      0.50 &             0.0739 &                  (0.0394, 0.1085) &   0.0009 &      (-0.0029, 0.0047) \\
        DC &  CoMP &      1.00 &             0.0784 &                  (0.0329, 0.1239) &   0.0034 &      (-0.0027, 0.0096) \\
        DC &  CoMP &      5.00 &             0.2339 &                  (0.1546, 0.3132) &   0.0461 &       (0.0230, 0.0693) \\
        DC &  CoMP &     10.00 &             0.4642 &                  (0.3143, 0.6141) &   0.1064 &       (0.0433, 0.1695) \\
        DC &  CoMP &     15.00 &             0.6008 &                  (0.4650, 0.7367) &   0.1502 &       (0.0832, 0.2172) \\
        DC &  CoMP &     20.00 &             0.7962 &                  (0.6907, 0.9017) &   0.1718 &       (0.1117, 0.2319) \\
        NK &  CoMP &      0.25 &             0.0158 &                  (0.0075, 0.0241) &  -0.0043 &     (-0.0049, -0.0036) \\
        NK &  CoMP &      0.50 &             0.0233 &                  (0.0045, 0.0420) &  -0.0048 &     (-0.0056, -0.0041) \\
        NK &  CoMP &      1.00 &             0.0721 &                  (0.0488, 0.0953) &  -0.0025 &     (-0.0035, -0.0015) \\
        NK &  CoMP &      5.00 &             0.6378 &                  (0.5259, 0.7497) &   0.0058 &       (0.0018, 0.0097) \\
        NK &  CoMP &     10.00 &             0.5422 &                  (0.4403, 0.6440) &   0.0135 &       (0.0094, 0.0177) \\
        NK &  CoMP &     15.00 &             0.4105 &                  (0.3194, 0.5016) &   0.0134 &       (0.0083, 0.0185) \\
        NK &  CoMP &     20.00 &             0.5637 &                  (0.4606, 0.6667) &   0.0372 &       (0.0152, 0.0592) \\
         T &  CoMP &      0.25 &             0.0052 &                  (0.0025, 0.0079) &  -0.0066 &     (-0.0076, -0.0057) \\
         T &  CoMP &      0.50 &             0.0054 &                  (0.0021, 0.0086) &  -0.0059 &     (-0.0066, -0.0052) \\
         T &  CoMP &      1.00 &             0.0009 &                  (0.0003, 0.0016) &  -0.0062 &     (-0.0074, -0.0050) \\
         T &  CoMP &      5.00 &             0.3430 &                  (0.2440, 0.4420) &   0.0087 &       (0.0041, 0.0134) \\
         T &  CoMP &     10.00 &             0.4024 &                  (0.3032, 0.5015) &   0.0113 &       (0.0067, 0.0159) \\
         T &  CoMP &     15.00 &             0.2605 &                  (0.1846, 0.3364) &   0.0030 &       (0.0005, 0.0055) \\
         T &  CoMP &     20.00 &             0.4979 &                  (0.3883, 0.6076) &   0.0542 &       (0.0224, 0.0860) \\
\bottomrule
\end{tabular}
}
\label{tab:kang_cell_kbet_sil_metrics_gamma}
\end{table}

\subsection{Counterfactual prediction of stimulated / untreated single-cell PBMCs expression dataset (IFN-$\beta$ stimulation)}
In this section we present the full results on the counterfactual prediction of single-cell PBMC expression data under IFN-$\beta$ stimulation. In Tables \ref{tab:kang_cf_pearson} and \ref{tab:kang_cf_mse}, we present the mean and standard error of the Pearson correlation coefficient and MSE metrics respectively for CoMP and the other four VAE models. We present our results for each cell type separately. As is consistent with the summary presented in Figures \ref{fig:kang_cf} and \ref{fig:kang_cf_all}, we see that CoMP produces highly accurate counterfactual reconstructions. Indeed, this can be seen in the scatter plots showing the mean expression of (actual) stimulated cells against the mean of counterfactually stimulated control cells (Figure \ref{fig:kang_cf_scatter}). Here, we see that the other VAE models tend to underestimate the expression of genes that are up-regulated by IFN-$\beta$ stimulation and overestimate the expression of genes that are down-regulated. However, this is not as evident with CoMP.

Similar to the mixing metrics, we evaluate the effect of varying the penalty scale $\gamma$. As we see in Tables \ref{tab:kang_cf_pearson_gamma} and \ref{tab:kang_cf_mse_gamma}, the optimal value is $\approx 1$.

\begin{table}
\centering
\caption{Counterfactual reconstruction by cell type: Pearson correlation coefficient metrics for all genes ($r_{all}$) and the top 50 DE genes ($r_{DE}$). Metrics represent the mean across 10 random seeds for each model. SEM represents standard error of the mean.}
\begin{tabular}{llcccc}
\toprule
 Cell type &  Model & $r_{\text{all}}$ & $r_{\text{all}} \pm \text{SEM}$ & $r_{\text{DE}}$ & $r_{\text{DE}} \pm \text{SEM}$ \\
\midrule
         B &    VAE &           0.8854 &                (0.8850, 0.8857) &          0.8170 &               (0.8165, 0.8175) \\
         B &   CVAE &           0.9499 &                (0.9481, 0.9516) &          0.9153 &               (0.9125, 0.9181) \\
         B &   VFAE &           0.9908 &                (0.9901, 0.9915) &          0.9880 &               (0.9866, 0.9893) \\
         B &  trVAE &           0.9877 &                (0.9868, 0.9886) &          0.9833 &               (0.9817, 0.9849) \\
         B &   CoMP &           0.9986 &                (0.9984, 0.9988) &          0.9985 &               (0.9982, 0.9987) \\
 CD14 Mono &    VAE &           0.7488 &                (0.7485, 0.7491) &          0.4896 &               (0.4891, 0.4900) \\
 CD14 Mono &   CVAE &           0.7529 &                (0.7520, 0.7538) &          0.4958 &               (0.4938, 0.4977) \\
 CD14 Mono &   VFAE &           0.9954 &                (0.9951, 0.9958) &          0.9928 &               (0.9921, 0.9935) \\
 CD14 Mono &  trVAE &           0.9830 &                (0.9804, 0.9856) &          0.9650 &               (0.9586, 0.9714) \\
 CD14 Mono &   CoMP &           0.9954 &                (0.9915, 0.9992) &          0.9920 &               (0.9848, 0.9993) \\
 CD16 Mono &    VAE &           0.8304 &                (0.8301, 0.8307) &          0.7135 &               (0.7131, 0.7140) \\
 CD16 Mono &   CVAE &           0.8351 &                (0.8341, 0.8360) &          0.7223 &               (0.7203, 0.7243) \\
 CD16 Mono &   VFAE &           0.9912 &                (0.9909, 0.9915) &          0.9910 &               (0.9904, 0.9916) \\
 CD16 Mono &  trVAE &           0.9881 &                (0.9873, 0.9889) &          0.9821 &               (0.9802, 0.9839) \\
 CD16 Mono &   CoMP &           0.9990 &                (0.9985, 0.9994) &          0.9989 &               (0.9986, 0.9993) \\
     CD4 T &    VAE &           0.8975 &                (0.8971, 0.8978) &          0.8366 &               (0.8360, 0.8372) \\
     CD4 T &   CVAE &           0.9697 &                (0.9682, 0.9712) &          0.9514 &               (0.9492, 0.9537) \\
     CD4 T &   VFAE &           0.9977 &                (0.9975, 0.9979) &          0.9983 &               (0.9982, 0.9985) \\
     CD4 T &  trVAE &           0.9915 &                (0.9908, 0.9922) &          0.9905 &               (0.9893, 0.9918) \\
     CD4 T &   CoMP &           0.9990 &                (0.9990, 0.9991) &          0.9988 &               (0.9987, 0.9989) \\
     CD8 T &    VAE &           0.9108 &                (0.9104, 0.9112) &          0.8719 &               (0.8713, 0.8724) \\
     CD8 T &   CVAE &           0.9726 &                (0.9715, 0.9736) &          0.9613 &               (0.9598, 0.9628) \\
     CD8 T &   VFAE &           0.9923 &                (0.9920, 0.9927) &          0.9935 &               (0.9931, 0.9939) \\
     CD8 T &  trVAE &           0.9828 &                (0.9810, 0.9846) &          0.9808 &               (0.9781, 0.9836) \\
     CD8 T &   CoMP &           0.9927 &                (0.9917, 0.9937) &          0.9950 &               (0.9945, 0.9955) \\
        DC &    VAE &           0.8156 &                (0.8153, 0.8159) &          0.5809 &               (0.5802, 0.5816) \\
        DC &   CVAE &           0.8213 &                (0.8205, 0.8221) &          0.5943 &               (0.5925, 0.5961) \\
        DC &   VFAE &           0.9885 &                (0.9879, 0.9892) &          0.9894 &               (0.9887, 0.9901) \\
        DC &  trVAE &           0.9743 &                (0.9702, 0.9783) &          0.9502 &               (0.9396, 0.9608) \\
        DC &   CoMP &           0.9946 &                (0.9925, 0.9966) &          0.9931 &               (0.9899, 0.9962) \\
        NK &    VAE &           0.8918 &                (0.8910, 0.8926) &          0.8304 &               (0.8292, 0.8316) \\
        NK &   CVAE &           0.9539 &                (0.9520, 0.9558) &          0.9269 &               (0.9237, 0.9301) \\
        NK &   VFAE &           0.9870 &                (0.9865, 0.9874) &          0.9864 &               (0.9855, 0.9873) \\
        NK &  trVAE &           0.9393 &                (0.9259, 0.9526) &          0.9290 &               (0.9113, 0.9466) \\
        NK &   CoMP &           0.9917 &                (0.9904, 0.9929) &          0.9899 &               (0.9881, 0.9917) \\
         T &    VAE &           0.8848 &                (0.8843, 0.8853) &          0.7469 &               (0.7457, 0.7480) \\
         T &   CVAE &           0.9516 &                (0.9500, 0.9533) &          0.8960 &               (0.8926, 0.8994) \\
         T &   VFAE &           0.9849 &                (0.9841, 0.9856) &          0.9763 &               (0.9750, 0.9777) \\
         T &  trVAE &           0.9567 &                (0.9498, 0.9637) &          0.9368 &               (0.9294, 0.9443) \\
         T &   CoMP &           0.9941 &                (0.9936, 0.9946) &          0.9934 &               (0.9928, 0.9940) \\
\bottomrule
\end{tabular}
\label{tab:kang_cf_pearson}
\end{table}

\begin{table}
\centering
\caption{Counterfactual reconstruction by cell type: Mean squared error metrics for all genes ($\text{MSE}_{\text{all}}$) and the top 50 DE genes ($\text{MSE}_{\text{DE}}$). Metrics represent the mean across 10 random seeds for each model. SEM represents standard error of the mean.}
\begin{tabular}{llcccc}
\toprule
 Cell type &  Model & $\text{MSE}_{\text{all}}$ & $\text{MSE}_{\text{all}} \pm \text{SEM}$ & $\text{MSE}_{\text{DE}}$ & $\text{MSE}_{\text{DE}} \pm \text{SEM}$ \\
\midrule
         B &    VAE &                    0.0085 &                         (0.0085, 0.0085) &                   0.3230 &                        (0.3221, 0.3239) \\
         B &   CVAE &                    0.0038 &                         (0.0037, 0.0039) &                   0.1398 &                        (0.1347, 0.1448) \\
         B &   VFAE &                    0.0008 &                         (0.0007, 0.0008) &                   0.0199 &                        (0.0178, 0.0220) \\
         B &  trVAE &                    0.0010 &                         (0.0009, 0.0010) &                   0.0276 &                        (0.0250, 0.0301) \\
         B &   CoMP &                    0.0001 &                         (0.0001, 0.0001) &                   0.0024 &                        (0.0020, 0.0028) \\
 CD14 Mono &    VAE &                    0.0483 &                         (0.0483, 0.0484) &                   1.7942 &                        (1.7923, 1.7962) \\
 CD14 Mono &   CVAE &                    0.0476 &                         (0.0475, 0.0478) &                   1.7624 &                        (1.7563, 1.7684) \\
 CD14 Mono &   VFAE &                    0.0014 &                         (0.0013, 0.0015) &                   0.0343 &                        (0.0314, 0.0371) \\
 CD14 Mono &  trVAE &                    0.0044 &                         (0.0038, 0.0051) &                   0.1422 &                        (0.1190, 0.1654) \\
 CD14 Mono &   CoMP &                    0.0011 &                         (0.0002, 0.0019) &                   0.0245 &                        (0.0023, 0.0468) \\
 CD16 Mono &    VAE &                    0.0301 &                         (0.0301, 0.0302) &                   1.1255 &                        (1.1234, 1.1276) \\
 CD16 Mono &   CVAE &                    0.0294 &                         (0.0293, 0.0295) &                   1.0933 &                        (1.0878, 1.0989) \\
 CD16 Mono &   VFAE &                    0.0017 &                         (0.0017, 0.0018) &                   0.0223 &                        (0.0204, 0.0242) \\
 CD16 Mono &  trVAE &                    0.0029 &                         (0.0027, 0.0031) &                   0.0861 &                        (0.0790, 0.0932) \\
 CD16 Mono &   CoMP &                    0.0002 &                         (0.0001, 0.0003) &                   0.0031 &                        (0.0017, 0.0044) \\
     CD4 T &    VAE &                    0.0060 &                         (0.0059, 0.0060) &                   0.2274 &                        (0.2266, 0.2282) \\
     CD4 T &   CVAE &                    0.0018 &                         (0.0017, 0.0019) &                   0.0677 &                        (0.0644, 0.0709) \\
     CD4 T &   VFAE &                    0.0001 &                         (0.0001, 0.0002) &                   0.0021 &                        (0.0018, 0.0023) \\
     CD4 T &  trVAE &                    0.0005 &                         (0.0005, 0.0006) &                   0.0126 &                        (0.0107, 0.0144) \\
     CD4 T &   CoMP &                    0.0001 &                         (0.0001, 0.0001) &                   0.0015 &                        (0.0014, 0.0016) \\
     CD8 T &    VAE &                    0.0058 &                         (0.0058, 0.0058) &                   0.2187 &                        (0.2178, 0.2196) \\
     CD8 T &   CVAE &                    0.0019 &                         (0.0019, 0.0020) &                   0.0684 &                        (0.0659, 0.0709) \\
     CD8 T &   VFAE &                    0.0005 &                         (0.0005, 0.0006) &                   0.0098 &                        (0.0093, 0.0103) \\
     CD8 T &  trVAE &                    0.0012 &                         (0.0011, 0.0013) &                   0.0332 &                        (0.0284, 0.0381) \\
     CD8 T &   CoMP &                    0.0005 &                         (0.0004, 0.0006) &                   0.0074 &                        (0.0066, 0.0082) \\
        DC &    VAE &                    0.0332 &                         (0.0331, 0.0332) &                   1.2308 &                        (1.2292, 1.2324) \\
        DC &   CVAE &                    0.0322 &                         (0.0321, 0.0324) &                   1.1887 &                        (1.1834, 1.1939) \\
        DC &   VFAE &                    0.0024 &                         (0.0023, 0.0025) &                   0.0303 &                        (0.0287, 0.0318) \\
        DC &  trVAE &                    0.0056 &                         (0.0048, 0.0064) &                   0.1758 &                        (0.1436, 0.2081) \\
        DC &   CoMP &                    0.0011 &                         (0.0007, 0.0016) &                   0.0161 &                        (0.0083, 0.0239) \\
        NK &    VAE &                    0.0091 &                         (0.0091, 0.0092) &                   0.3395 &                        (0.3370, 0.3420) \\
        NK &   CVAE &                    0.0043 &                         (0.0041, 0.0044) &                   0.1535 &                        (0.1477, 0.1593) \\
        NK &   VFAE &                    0.0014 &                         (0.0013, 0.0014) &                   0.0345 &                        (0.0327, 0.0362) \\
        NK &  trVAE &                    0.0053 &                         (0.0042, 0.0064) &                   0.1455 &                        (0.1100, 0.1810) \\
        NK &   CoMP &                    0.0008 &                         (0.0007, 0.0010) &                   0.0204 &                        (0.0169, 0.0238) \\
         T &    VAE &                    0.0077 &                         (0.0077, 0.0077) &                   0.2799 &                        (0.2786, 0.2811) \\
         T &   CVAE &                    0.0033 &                         (0.0032, 0.0034) &                   0.1126 &                        (0.1088, 0.1164) \\
         T &   VFAE &                    0.0011 &                         (0.0010, 0.0011) &                   0.0237 &                        (0.0223, 0.0252) \\
         T &  trVAE &                    0.0030 &                         (0.0025, 0.0034) &                   0.0674 &                        (0.0583, 0.0764) \\
         T &   CoMP &                    0.0004 &                         (0.0004, 0.0005) &                   0.0066 &                        (0.0060, 0.0073) \\
\bottomrule
\end{tabular}
\label{tab:kang_cf_mse}
\end{table}

\begin{table}
\centering
\caption{Effect of $\gamma$ on counterfactual data reconstruction: Mean Pearson correlation coefficient for all and DE genes across 10 random seeds.}
\resizebox{0.7\textwidth}{!}{
\begin{tabular}{lllcccc}
\toprule
 Cell type & Model &  $\gamma$ & $r_{\text{all}}$ & $r_{\text{all}} \pm \text{SEM}$ & $r_{\text{DE}}$ & $r_{\text{DE}} \pm \text{SEM}$ \\
\midrule
         B &  CoMP &      0.25 &           0.9987 &                (0.9986, 0.9988) &          0.9986 &               (0.9984, 0.9987) \\
         B &  CoMP &      0.50 &           0.9987 &                (0.9985, 0.9988) &          0.9985 &               (0.9983, 0.9988) \\
         B &  CoMP &      1.00 &           0.9986 &                (0.9984, 0.9988) &          0.9985 &               (0.9982, 0.9987) \\
         B &  CoMP &      5.00 &           0.9577 &                (0.9432, 0.9722) &          0.9623 &               (0.9510, 0.9736) \\
         B &  CoMP &     10.00 &           0.9233 &                (0.9023, 0.9443) &          0.9397 &               (0.9239, 0.9555) \\
         B &  CoMP &     15.00 &           0.9106 &                (0.8925, 0.9287) &          0.9299 &               (0.9163, 0.9435) \\
         B &  CoMP &     20.00 &           0.8974 &                (0.8841, 0.9107) &          0.9080 &               (0.8966, 0.9195) \\
 CD14 Mono &  CoMP &      0.25 &           0.9906 &                (0.9866, 0.9945) &          0.9828 &               (0.9746, 0.9909) \\
 CD14 Mono &  CoMP &      0.50 &           0.9948 &                (0.9917, 0.9980) &          0.9910 &               (0.9844, 0.9977) \\
 CD14 Mono &  CoMP &      1.00 &           0.9954 &                (0.9915, 0.9992) &          0.9920 &               (0.9848, 0.9993) \\
 CD14 Mono &  CoMP &      5.00 &           0.9892 &                (0.9831, 0.9954) &          0.9823 &               (0.9723, 0.9922) \\
 CD14 Mono &  CoMP &     10.00 &           0.9536 &                (0.9316, 0.9757) &          0.9439 &               (0.9217, 0.9662) \\
 CD14 Mono &  CoMP &     15.00 &           0.9224 &                (0.8933, 0.9515) &          0.9174 &               (0.8886, 0.9463) \\
 CD14 Mono &  CoMP &     20.00 &           0.9034 &                (0.8737, 0.9332) &          0.8966 &               (0.8673, 0.9258) \\
 CD16 Mono &  CoMP &      0.25 &           0.9983 &                (0.9977, 0.9989) &          0.9982 &               (0.9976, 0.9988) \\
 CD16 Mono &  CoMP &      0.50 &           0.9989 &                (0.9985, 0.9992) &          0.9988 &               (0.9985, 0.9991) \\
 CD16 Mono &  CoMP &      1.00 &           0.9990 &                (0.9985, 0.9994) &          0.9989 &               (0.9986, 0.9993) \\
 CD16 Mono &  CoMP &      5.00 &           0.9847 &                (0.9805, 0.9890) &          0.9801 &               (0.9753, 0.9850) \\
 CD16 Mono &  CoMP &     10.00 &           0.9420 &                (0.9168, 0.9672) &          0.9503 &               (0.9313, 0.9693) \\
 CD16 Mono &  CoMP &     15.00 &           0.9017 &                (0.8702, 0.9332) &          0.9222 &               (0.8999, 0.9444) \\
 CD16 Mono &  CoMP &     20.00 &           0.8563 &                (0.8289, 0.8838) &          0.8850 &               (0.8668, 0.9031) \\
     CD4 T &  CoMP &      0.25 &           0.9989 &                (0.9989, 0.9990) &          0.9987 &               (0.9986, 0.9988) \\
     CD4 T &  CoMP &      0.50 &           0.9991 &                (0.9990, 0.9991) &          0.9989 &               (0.9988, 0.9990) \\
     CD4 T &  CoMP &      1.00 &           0.9990 &                (0.9990, 0.9991) &          0.9988 &               (0.9987, 0.9989) \\
     CD4 T &  CoMP &      5.00 &           0.9925 &                (0.9899, 0.9951) &          0.9901 &               (0.9863, 0.9939) \\
     CD4 T &  CoMP &     10.00 &           0.9948 &                (0.9933, 0.9962) &          0.9970 &               (0.9954, 0.9985) \\
     CD4 T &  CoMP &     15.00 &           0.9944 &                (0.9931, 0.9958) &          0.9979 &               (0.9975, 0.9983) \\
     CD4 T &  CoMP &     20.00 &           0.9782 &                (0.9689, 0.9875) &          0.9738 &               (0.9584, 0.9892) \\
     CD8 T &  CoMP &      0.25 &           0.9963 &                (0.9961, 0.9964) &          0.9965 &               (0.9964, 0.9966) \\
     CD8 T &  CoMP &      0.50 &           0.9955 &                (0.9951, 0.9960) &          0.9962 &               (0.9959, 0.9965) \\
     CD8 T &  CoMP &      1.00 &           0.9927 &                (0.9917, 0.9937) &          0.9950 &               (0.9945, 0.9955) \\
     CD8 T &  CoMP &      5.00 &           0.9666 &                (0.9626, 0.9705) &          0.9790 &               (0.9765, 0.9814) \\
     CD8 T &  CoMP &     10.00 &           0.9559 &                (0.9499, 0.9620) &          0.9757 &               (0.9727, 0.9787) \\
     CD8 T &  CoMP &     15.00 &           0.9528 &                (0.9468, 0.9589) &          0.9745 &               (0.9715, 0.9774) \\
     CD8 T &  CoMP &     20.00 &           0.9455 &                (0.9397, 0.9512) &          0.9605 &               (0.9516, 0.9694) \\
        DC &  CoMP &      0.25 &           0.9959 &                (0.9955, 0.9962) &          0.9945 &               (0.9942, 0.9949) \\
        DC &  CoMP &      0.50 &           0.9966 &                (0.9963, 0.9970) &          0.9956 &               (0.9949, 0.9962) \\
        DC &  CoMP &      1.00 &           0.9946 &                (0.9925, 0.9966) &          0.9931 &               (0.9899, 0.9962) \\
        DC &  CoMP &      5.00 &           0.9694 &                (0.9576, 0.9811) &          0.9671 &               (0.9528, 0.9814) \\
        DC &  CoMP &     10.00 &           0.9219 &                (0.8966, 0.9472) &          0.9265 &               (0.9031, 0.9499) \\
        DC &  CoMP &     15.00 &           0.8867 &                (0.8549, 0.9184) &          0.8955 &               (0.8686, 0.9224) \\
        DC &  CoMP &     20.00 &           0.8547 &                (0.8288, 0.8806) &          0.8676 &               (0.8428, 0.8924) \\
        NK &  CoMP &      0.25 &           0.9962 &                (0.9959, 0.9964) &          0.9955 &               (0.9951, 0.9959) \\
        NK &  CoMP &      0.50 &           0.9949 &                (0.9942, 0.9957) &          0.9938 &               (0.9927, 0.9950) \\
        NK &  CoMP &      1.00 &           0.9917 &                (0.9904, 0.9929) &          0.9899 &               (0.9881, 0.9917) \\
        NK &  CoMP &      5.00 &           0.9567 &                (0.9491, 0.9643) &          0.9399 &               (0.9296, 0.9501) \\
        NK &  CoMP &     10.00 &           0.8916 &                (0.8654, 0.9178) &          0.8670 &               (0.8361, 0.8979) \\
        NK &  CoMP &     15.00 &           0.8780 &                (0.8501, 0.9060) &          0.8518 &               (0.8187, 0.8850) \\
        NK &  CoMP &     20.00 &           0.8767 &                (0.8517, 0.9018) &          0.8477 &               (0.8189, 0.8765) \\
         T &  CoMP &      0.25 &           0.9951 &                (0.9947, 0.9954) &          0.9945 &               (0.9940, 0.9950) \\
         T &  CoMP &      0.50 &           0.9950 &                (0.9947, 0.9952) &          0.9945 &               (0.9940, 0.9949) \\
         T &  CoMP &      1.00 &           0.9941 &                (0.9936, 0.9946) &          0.9934 &               (0.9928, 0.9940) \\
         T &  CoMP &      5.00 &           0.9682 &                (0.9584, 0.9779) &          0.9690 &               (0.9627, 0.9753) \\
         T &  CoMP &     10.00 &           0.9462 &                (0.9336, 0.9588) &          0.9595 &               (0.9519, 0.9672) \\
         T &  CoMP &     15.00 &           0.9402 &                (0.9277, 0.9527) &          0.9569 &               (0.9495, 0.9642) \\
         T &  CoMP &     20.00 &           0.9239 &                (0.9136, 0.9342) &          0.9254 &               (0.9092, 0.9416) \\
\bottomrule
\end{tabular}
}
\label{tab:kang_cf_pearson_gamma}
\end{table}

\begin{table}
\centering
\caption{Effect of $\gamma$ on counterfactual data reconstruction: Mean of the mean squared error (MSE) for all and DE genes across 10 random seeds.}
\resizebox{0.7\textwidth}{!}{
\begin{tabular}{lllcccc}
\toprule
 Cell type & Model &  $\gamma$ & $\text{MSE}_{\text{all}}$ & $\text{MSE}_{\text{all}} \pm \text{SEM}$ & $\text{MSE}_{\text{DE}}$ & $\text{MSE}_{\text{DE}} \pm \text{SEM}$ \\
\midrule
         B &  CoMP &      0.25 &                    0.0001 &                         (0.0001, 0.0001) &                   0.0023 &                        (0.0021, 0.0025) \\
         B &  CoMP &      0.50 &                    0.0001 &                         (0.0001, 0.0001) &                   0.0023 &                        (0.0019, 0.0026) \\
         B &  CoMP &      1.00 &                    0.0001 &                         (0.0001, 0.0001) &                   0.0024 &                        (0.0020, 0.0028) \\
         B &  CoMP &      5.00 &                    0.0032 &                         (0.0022, 0.0043) &                   0.0672 &                        (0.0479, 0.0864) \\
         B &  CoMP &     10.00 &                    0.0056 &                         (0.0041, 0.0071) &                   0.1046 &                        (0.0768, 0.1325) \\
         B &  CoMP &     15.00 &                    0.0065 &                         (0.0053, 0.0078) &                   0.1201 &                        (0.0960, 0.1442) \\
         B &  CoMP &     20.00 &                    0.0077 &                         (0.0067, 0.0086) &                   0.1666 &                        (0.1442, 0.1891) \\
 CD14 Mono &  CoMP &      0.25 &                    0.0023 &                         (0.0014, 0.0032) &                   0.0576 &                        (0.0309, 0.0843) \\
 CD14 Mono &  CoMP &      0.50 &                    0.0012 &                         (0.0005, 0.0020) &                   0.0276 &                        (0.0068, 0.0483) \\
 CD14 Mono &  CoMP &      1.00 &                    0.0011 &                         (0.0002, 0.0019) &                   0.0245 &                        (0.0023, 0.0468) \\
 CD14 Mono &  CoMP &      5.00 &                    0.0034 &                         (0.0013, 0.0056) &                   0.0935 &                        (0.0330, 0.1541) \\
 CD14 Mono &  CoMP &     10.00 &                    0.0123 &                         (0.0066, 0.0180) &                   0.3157 &                        (0.1724, 0.4591) \\
 CD14 Mono &  CoMP &     15.00 &                    0.0196 &                         (0.0124, 0.0268) &                   0.4825 &                        (0.3056, 0.6594) \\
 CD14 Mono &  CoMP &     20.00 &                    0.0245 &                         (0.0173, 0.0316) &                   0.6022 &                        (0.4302, 0.7742) \\
 CD16 Mono &  CoMP &      0.25 &                    0.0003 &                         (0.0002, 0.0005) &                   0.0054 &                        (0.0034, 0.0074) \\
 CD16 Mono &  CoMP &      0.50 &                    0.0002 &                         (0.0001, 0.0003) &                   0.0036 &                        (0.0024, 0.0048) \\
 CD16 Mono &  CoMP &      1.00 &                    0.0002 &                         (0.0001, 0.0003) &                   0.0031 &                        (0.0017, 0.0044) \\
 CD16 Mono &  CoMP &      5.00 &                    0.0040 &                         (0.0024, 0.0056) &                   0.0876 &                        (0.0451, 0.1300) \\
 CD16 Mono &  CoMP &     10.00 &                    0.0120 &                         (0.0070, 0.0170) &                   0.2751 &                        (0.1522, 0.3979) \\
 CD16 Mono &  CoMP &     15.00 &                    0.0194 &                         (0.0134, 0.0254) &                   0.4529 &                        (0.3058, 0.6001) \\
 CD16 Mono &  CoMP &     20.00 &                    0.0284 &                         (0.0233, 0.0335) &                   0.6688 &                        (0.5421, 0.7955) \\
     CD4 T &  CoMP &      0.25 &                    0.0001 &                         (0.0001, 0.0001) &                   0.0015 &                        (0.0014, 0.0017) \\
     CD4 T &  CoMP &      0.50 &                    0.0001 &                         (0.0001, 0.0001) &                   0.0013 &                        (0.0013, 0.0014) \\
     CD4 T &  CoMP &      1.00 &                    0.0001 &                         (0.0001, 0.0001) &                   0.0015 &                        (0.0014, 0.0016) \\
     CD4 T &  CoMP &      5.00 &                    0.0005 &                         (0.0003, 0.0006) &                   0.0133 &                        (0.0081, 0.0185) \\
     CD4 T &  CoMP &     10.00 &                    0.0003 &                         (0.0002, 0.0004) &                   0.0040 &                        (0.0019, 0.0062) \\
     CD4 T &  CoMP &     15.00 &                    0.0003 &                         (0.0003, 0.0004) &                   0.0027 &                        (0.0022, 0.0032) \\
     CD4 T &  CoMP &     20.00 &                    0.0013 &                         (0.0008, 0.0019) &                   0.0381 &                        (0.0155, 0.0607) \\
     CD8 T &  CoMP &      0.25 &                    0.0003 &                         (0.0003, 0.0003) &                   0.0049 &                        (0.0046, 0.0051) \\
     CD8 T &  CoMP &      0.50 &                    0.0003 &                         (0.0003, 0.0003) &                   0.0052 &                        (0.0048, 0.0056) \\
     CD8 T &  CoMP &      1.00 &                    0.0005 &                         (0.0004, 0.0006) &                   0.0074 &                        (0.0066, 0.0082) \\
     CD8 T &  CoMP &      5.00 &                    0.0023 &                         (0.0020, 0.0025) &                   0.0329 &                        (0.0290, 0.0369) \\
     CD8 T &  CoMP &     10.00 &                    0.0030 &                         (0.0026, 0.0033) &                   0.0366 &                        (0.0322, 0.0411) \\
     CD8 T &  CoMP &     15.00 &                    0.0032 &                         (0.0028, 0.0035) &                   0.0387 &                        (0.0344, 0.0431) \\
     CD8 T &  CoMP &     20.00 &                    0.0038 &                         (0.0034, 0.0042) &                   0.0717 &                        (0.0512, 0.0921) \\
        DC &  CoMP &      0.25 &                    0.0009 &                         (0.0008, 0.0009) &                   0.0135 &                        (0.0122, 0.0149) \\
        DC &  CoMP &      0.50 &                    0.0007 &                         (0.0006, 0.0008) &                   0.0102 &                        (0.0082, 0.0123) \\
        DC &  CoMP &      1.00 &                    0.0011 &                         (0.0007, 0.0016) &                   0.0161 &                        (0.0083, 0.0239) \\
        DC &  CoMP &      5.00 &                    0.0075 &                         (0.0044, 0.0106) &                   0.1189 &                        (0.0580, 0.1798) \\
        DC &  CoMP &     10.00 &                    0.0165 &                         (0.0113, 0.0217) &                   0.2549 &                        (0.1607, 0.3490) \\
        DC &  CoMP &     15.00 &                    0.0228 &                         (0.0169, 0.0288) &                   0.3663 &                        (0.2603, 0.4723) \\
        DC &  CoMP &     20.00 &                    0.0299 &                         (0.0250, 0.0347) &                   0.4809 &                        (0.3844, 0.5773) \\
        NK &  CoMP &      0.25 &                    0.0004 &                         (0.0004, 0.0004) &                   0.0089 &                        (0.0080, 0.0097) \\
        NK &  CoMP &      0.50 &                    0.0005 &                         (0.0004, 0.0006) &                   0.0122 &                        (0.0099, 0.0146) \\
        NK &  CoMP &      1.00 &                    0.0008 &                         (0.0007, 0.0010) &                   0.0204 &                        (0.0169, 0.0238) \\
        NK &  CoMP &      5.00 &                    0.0041 &                         (0.0035, 0.0048) &                   0.1169 &                        (0.0988, 0.1350) \\
        NK &  CoMP &     10.00 &                    0.0090 &                         (0.0069, 0.0110) &                   0.2419 &                        (0.1869, 0.2969) \\
        NK &  CoMP &     15.00 &                    0.0100 &                         (0.0078, 0.0122) &                   0.2687 &                        (0.2101, 0.3273) \\
        NK &  CoMP &     20.00 &                    0.0103 &                         (0.0084, 0.0122) &                   0.2874 &                        (0.2376, 0.3371) \\
         T &  CoMP &      0.25 &                    0.0004 &                         (0.0003, 0.0004) &                   0.0053 &                        (0.0048, 0.0058) \\
         T &  CoMP &      0.50 &                    0.0004 &                         (0.0003, 0.0004) &                   0.0053 &                        (0.0049, 0.0058) \\
         T &  CoMP &      1.00 &                    0.0004 &                         (0.0004, 0.0005) &                   0.0066 &                        (0.0060, 0.0073) \\
         T &  CoMP &      5.00 &                    0.0022 &                         (0.0016, 0.0029) &                   0.0409 &                        (0.0311, 0.0507) \\
         T &  CoMP &     10.00 &                    0.0037 &                         (0.0029, 0.0046) &                   0.0579 &                        (0.0454, 0.0703) \\
         T &  CoMP &     15.00 &                    0.0041 &                         (0.0033, 0.0050) &                   0.0624 &                        (0.0501, 0.0747) \\
         T &  CoMP &     20.00 &                    0.0053 &                         (0.0046, 0.0060) &                   0.1005 &                        (0.0829, 0.1181) \\
\bottomrule
\end{tabular}
}
\label{tab:kang_cf_mse_gamma}
\end{table}

\begin{figure}[t]
  \centering
  \includegraphics[width=1\columnwidth]{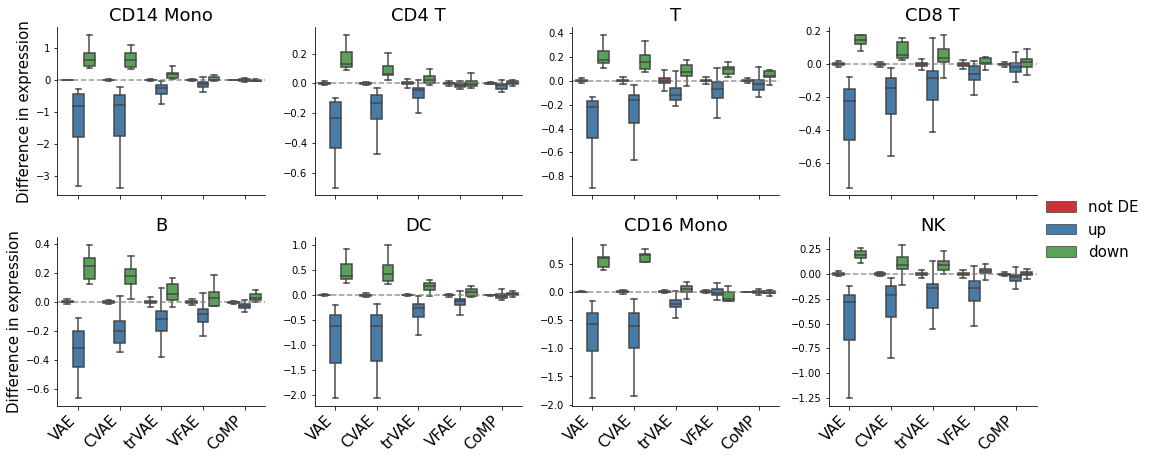}
  \caption{The difference in gene expression values for 1950 non-differentially expressed genes (red) and the top 50 differentially expressed genes (up-regulated: blue, down-regulated: green) between IFN-$\beta$ stimulated cells and counterfactually stimulated control cells for each cell type. The difference in expression for a gene is the gene mean expression across stimulated cells of a cell type minus the mean reconstructed gene expression for counterfactually stimulated control cells of the same cell type.}
  \label{fig:kang_cf_all}
\end{figure}

\begin{figure}[t]
  \centering
  \includegraphics[width=0.85\columnwidth]{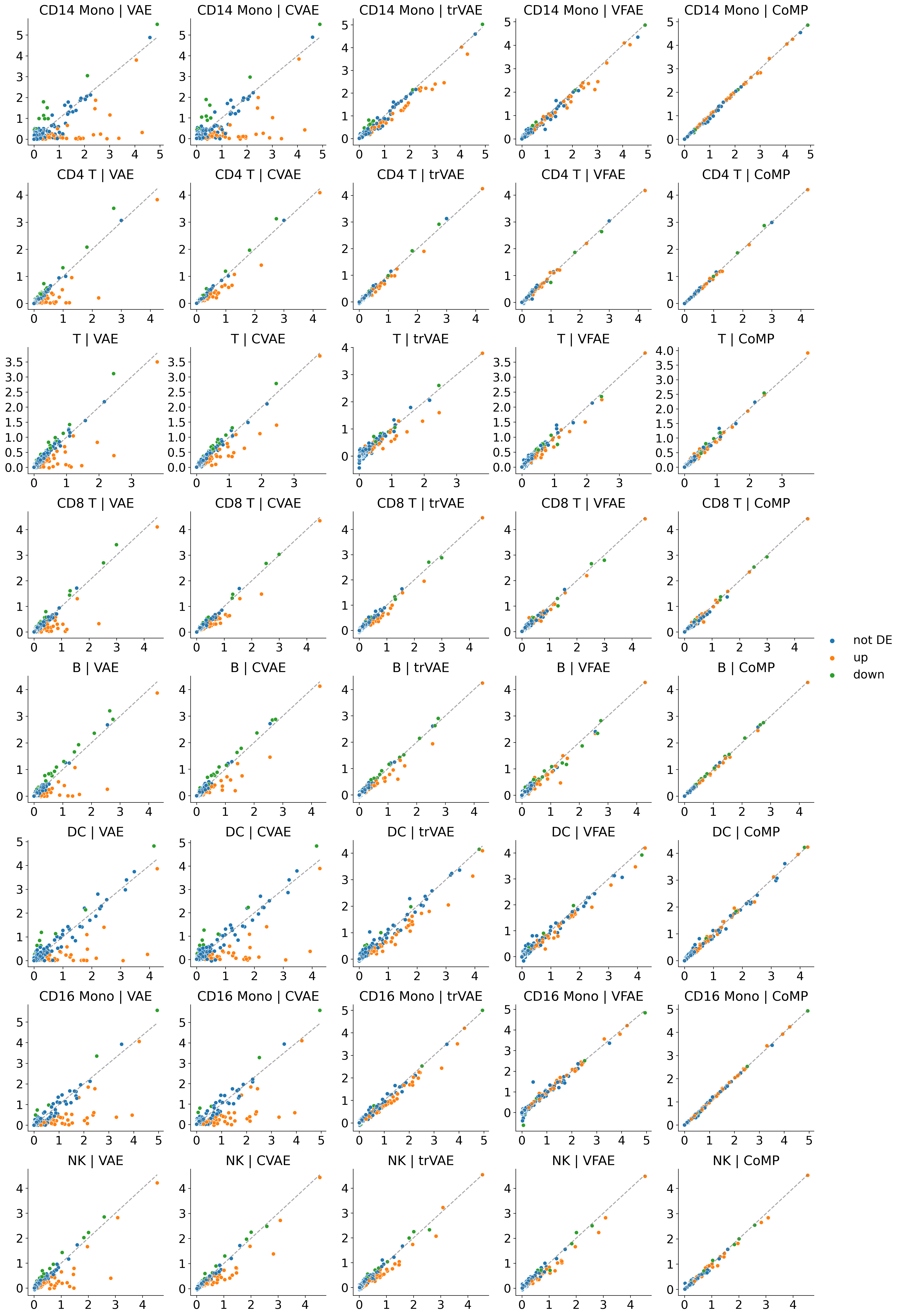}
  \caption{Mean gene expression of actual stimulated cells against the mean gene expression of counterfactually stimulated control cells for each cell type and model.}
  \label{fig:kang_cf_scatter}
\end{figure}

\clearpage
\subsection{Batch mixing metrics for scRNA-seq data integration}
\label{sec:batch_effect_details}
In this section we present additional details of our experiments for the scRNA-seq data integration experiment, evaluating the mixing of cells from two batches (different library preparation protocols) in the latent space for CoMP over 19 random seeds, compared with Seurat (CCA) \cite{stuart2019comprehensive} and Harmony \cite{Harmony}.  The following cell sub-types were considered in this analysis: naive B cells, memory B cells, CD14 monocytes, CD16 monocytes, naive CD4+ T cells, effector CD8+ T cells, naive CD8+ T cells, memory CD8+ T cells, regulatory T cells, activated dendritic cells, plasmacytoid dendritic cells, and natural killer cells. Megakaryocytes and Hematopoietic stem cells were excluded from the analysis due to the very low numbers of cells within the dataset, which made computing the metrics for these cell sub-types infeasible. Note that cell sub-types were predefined and provided in the associated metadata from \citet{Harmony}.
In Table~\ref{tab:batch_correction}, we present the results of CoMP across a number of training seeds, showing that it has reliable performance against baselines with minimal standard error.

\begin{table}[t]
\small
\centering
\caption{scRNA-seq data integration experiment results, with $k=100$, $c=\text{Protocol}$, and $\alpha=0.05$. $s_{k,c}$ and $\tilde{s}_{k,c}$ are the two Silhouette Coefficient variants (see Section~\ref{sec:experiments}). The top scores are in \textbf{bold}. For CoMP, results represent the mean across 19 random initialisations $\pm$ standard error of the mean. $\tilde{s}$ and $\mkbet$ represent mean metrics computed over considered cell sub-types (see \ref{sec:batch_effect_details} for details of considered cell sub-types).}
\label{tab:batch_correction}
\begin{tabular}{cccccc}
& $s$ & ${\kbet}$ & $\tilde{s}$ & $\mkbet$
\\ \toprule
Seurat CCA & 0.0176 & 0.436 & 0.022 & 0.356 \\ 
Harmony & 0.0158 & 0.318 & 0.013 & 0.245 \\ 
\textit{CoMP} & $\bm{0.0007 \pm 0.002}$ & $\bm{0.171 \pm 0.003}$ & $\bm{0.0029 \pm 0.0004}$ & $\bm{0.132 \pm 0.002}$ \\
\bottomrule
\end{tabular}
\end{table}

\subsection{Implementation details and hyperparameters}
The encoder and decoders are parameterised by multi-layer fully-connected networks. Following the trVAE implementation \cite{lotfollahi2019conditional}, we implement a multi-scale Gaussian kernel for both trVAE and VFAE benchmark models, except on the Adult Income dataset where a single scale kernel was used to match the original implementation. The details of the model architectures and hyperparameters used in CoMP, VFAE and trVAE across three sets of experiments are given in Tables \ref{hparam1}--\ref{hparam2}. The networks in all experiments were trained with a 90/10 train/validation split, metrics were calculated on the entire dataset.

We use the means of the posteriors as the encoded representation $\z$ for each sample. For the  Adult Income experiments this differs from 
\citet{louizos2015variational}, where the representations are sampled from the posterior before classification. We found the noise from sampling would mask the inclusion of predictive information about gender in the encoded means from the VFAE, as can be seen in the difference in accuracy between [VFAE-s] and [VFAE-m] in Table~\ref{tab:income-results}.

\subsection{Model training resources}
Experiments were performed on NVIDIA Tesla V100 GPUs. Each training run of CoMP for a single hyperparameter configuration on the \tcgaccle\  dataset (our largest dataset) on a single GPU takes 2--3 hours. Running times for the other models are broadly similar.

\subsection{CO$_2$ emissions}
Experiments were conducted using private infrastructure, which has an estimated carbon efficiency of 0.188 kgCO$_2$eq/kWh.
An estimated cumulative 1900 hours of computation were performed on hardware of type Tesla V100.
Total emissions are estimated to be 107 kgCO$_2$eq. 
Estimations were conducted using the \href{https://mlco2.github.io/impact#compute}{Machine Learning Impact calculator} presented in \citet{lacoste2019quantifying}.

\begin{table}[b]
\centering
\caption{CoMP architecture and hyperparameters for the \tcgaccle\  dataset.}
\label{hparam1}
\begin{tabular}{lrll}
\toprule
               Layer & Output Dim &                       Inputs &                   Notes \\
\midrule
      \textbf{Input} &       9468 &                              &                         \\
 \textbf{Conditions} &          2 &                              &                         \\
    \textbf{Encoder} &            &                              &                         \\
               FC\_1 &        512 &          [Input, Conditions] &  BatchNorm1D, LeakyReLU \\
               FC\_2 &        512 &                        FC\_1 &  BatchNorm1D, LeakyReLU \\
               FC\_3 &        512 &                        FC\_2 &  BatchNorm1D, LeakyReLU \\
             Z\_mean &         16 &                        FC\_3 &                         \\
                   Z &         16 &               [Z\_mean, 0.1] &                Normal() \\
    \textbf{Decoder} &            &                              &                         \\
               FC\_1 &        512 &                            Z &  BatchNorm1D, LeakyReLU \\
                FC-2 &        512 &                        FC\_1 &  BatchNorm1D, LeakyReLU \\
                FC-3 &        512 &                         FC-2 &  BatchNorm1D, LeakyReLU \\
         \^{X}\_mean &       9468 &                         FC-3 &                         \\
        \^{X}\_scale &          1 &                         FC-3 &                         \\
               \^{X} &       9468 &  [\^{X}\_mean, \^{X}\_scale] &                Normal() \\
    \textbf{Penalty} &            &                              &                         \\
        CoMP penalty &            &              [Z, Conditions] &                         \\
\midrule
           Optimiser &       Adam &                              &                         \\
       Learning rate &      1e-4 &                              &                         \\
          Batch size &       5500 &                              &                         \\
              Epochs &       4000 &                              &                         \\
             $\beta$ &      1e-7 &                              &                         \\
            $\gamma$ &        0.5 &                              &                         \\
     LeakyReLU slope &       0.01 &                              &                         \\
\bottomrule
\end{tabular}
\end{table}

\begin{table}
\centering
\caption{VFAE architecture and hyperparameters for the \tcgaccle\  dataset.}
\begin{tabular}{lrll}
\toprule
               Layer & Output Dim &                       Inputs &                   Notes \\
\midrule
      \textbf{Input} &       9468 &                              &                         \\
 \textbf{Conditions} &          2 &                              &                         \\
    \textbf{Encoder} &            &                              &                         \\
               FC\_1 &        512 &          [Input, Conditions] &  BatchNorm1D, LeakyReLU \\
               FC\_2 &        512 &                        FC\_1 &  BatchNorm1D, LeakyReLU \\
               FC\_3 &        512 &                        FC\_2 &  BatchNorm1D, LeakyReLU \\
             Z\_mean &         16 &                        FC\_3 &                         \\
            Z\_scale &         16 &                        FC\_3 &                         \\
                   Z &         16 &          [Z\_mean, Z\_scale] &                Normal() \\
    \textbf{Decoder} &            &                              &                         \\
               FC\_1 &        512 &                            Z &  BatchNorm1D, LeakyReLU \\
                FC-2 &        512 &                        FC\_1 &  BatchNorm1D, LeakyReLU \\
                FC-3 &        512 &                         FC-2 &  BatchNorm1D, LeakyReLU \\
         \^{X}\_mean &       9468 &                         FC-3 &                         \\
        \^{X}\_scale &          1 &                         FC-3 &                         \\
               \^{X} &       9468 &  [\^{X}\_mean, \^{X}\_scale] &                Normal() \\
    \textbf{Penalty} &            &                              &                         \\
                 MMD &            &            [FC1, Conditions] &  Multi-scale RBF kernel \\
\midrule
           Optimiser &       Adam &                              &                         \\
       Learning rate &      1e-03 &                              &                         \\
          Batch size &       5550 &                              &                         \\
              Epochs &       4000 &                              &                         \\
             $\beta$ &      1e-7 &                              &                         \\
            $\gamma$ &          4 &                              &                         \\
     LeakyReLU slope &       0.01 &                              &                         \\
\bottomrule
\end{tabular}
\end{table}

\begin{table}
\centering
\caption{trVAE architecture and hyperparameters for the \tcgaccle\  dataset.}
\begin{tabular}{lrll}
\toprule
               Layer & Output Dim &                       Inputs &                   Notes \\
\midrule
      \textbf{Input} &       9468 &                              &                         \\
 \textbf{Conditions} &          2 &                              &                         \\
    \textbf{Encoder} &            &                              &                         \\
               FC\_1 &        512 &          [Input, Conditions] &  BatchNorm1D, LeakyReLU \\
               FC\_2 &        512 &                        FC\_1 &  BatchNorm1D, LeakyReLU \\
               FC\_3 &        512 &                        FC\_2 &  BatchNorm1D, LeakyReLU \\
             Z\_mean &         16 &                        FC\_3 &                         \\
            Z\_scale &         16 &                        FC\_3 &                         \\
                   Z &         16 &          [Z\_mean, Z\_scale] &                Normal() \\
    \textbf{Decoder} &            &                              &                         \\
               FC\_1 &        512 &                            Z &  BatchNorm1D, LeakyReLU \\
                FC-2 &        512 &                        FC\_1 &  BatchNorm1D, LeakyReLU \\
                FC-3 &        512 &                         FC-2 &  BatchNorm1D, LeakyReLU \\
         \^{X}\_mean &       9468 &                         FC-3 &                         \\
        \^{X}\_scale &          1 &                         FC-3 &                         \\
               \^{X} &       9468 &  [\^{X}\_mean, \^{X}\_scale] &                Normal() \\
    \textbf{Penalty} &            &                              &                         \\
                 MMD &            &            [FC1, Conditions] &  Multi-scale RBF kernel \\
\midrule
           Optimiser &       Adam &                              &                         \\
       Learning rate &      3e-4 &                              &                         \\
          Batch size &       5550 &                              &                         \\
              Epochs &       4000 &                              &                         \\
             $\beta$ &      1e-7 &                              &                         \\
            $\gamma$ &         10 &                              &                         \\
     LeakyReLU slope &       0.01 &                              &                         \\
\bottomrule
\end{tabular}
\end{table}

\begin{table}
\centering
\caption{CoMP architecture and hyperparameters for the stimulated / untreated single-cell PBMCs dataset.}
\begin{tabular}{lrll}
\toprule
               Layer & Output Dim &                       Inputs &                   Notes \\
\midrule
      \textbf{Input} &       2000 &                              &                         \\
 \textbf{Conditions} &          2 &                              &                         \\
    \textbf{Encoder} &            &                              &                         \\
               FC\_1 &        512 &          [Input, Conditions] &  BatchNorm1D, LeakyReLU \\
               FC\_2 &        512 &                        FC\_1 &  BatchNorm1D, LeakyReLU \\
               FC\_3 &        512 &                        FC\_2 &  BatchNorm1D, LeakyReLU \\
             Z\_mean &         40 &                        FC\_3 &                         \\
                   Z &         40 &               [Z\_mean, 0.1] &                Normal() \\
    \textbf{Decoder} &            &                              &                         \\
               FC\_1 &        512 &                            Z &  BatchNorm1D, LeakyReLU \\
                FC-2 &        512 &                        FC\_1 &  BatchNorm1D, LeakyReLU \\
                FC-3 &        512 &                         FC-2 &  BatchNorm1D, LeakyReLU \\
         \^{X}\_mean &       2000 &                         FC-3 &                         \\
        \^{X}\_scale &          1 &                         FC-3 &                         \\
               \^{X} &       2000 &  [\^{X}\_mean, \^{X}\_scale] &                Normal() \\
    \textbf{Penalty} &            &                              &                         \\
        CoMP penalty &            &              [Z, Conditions] &                         \\
\midrule
          Optimiser &       Adam &                              &                         \\
       Learning rate &      1e-06 &                              &                         \\
          Batch size &        512 &                              &                         \\
              Epochs &      10000 &                              &                         \\
             $\beta$ &      1e-7 &                              &                         \\
            $\gamma$ &          1 &                              &                         \\
     LeakyReLU slope &       0.01 &                              &                         \\
\bottomrule
\end{tabular}
\end{table}

\begin{table}
\centering
\caption{VFAE architecture and hyperparameters for the stimulated / untreated single-cell PBMCs dataset.}
\begin{tabular}{lrll}
\toprule
               Layer & Output Dim &                       Inputs &                   Notes \\
\midrule
      \textbf{Input} &       2000 &                              &                         \\
 \textbf{Conditions} &          2 &                              &                         \\
    \textbf{Encoder} &            &                              &                         \\
               FC\_1 &        512 &          [Input, Conditions] &  BatchNorm1D, LeakyReLU \\
               FC\_2 &        512 &                        FC\_1 &  BatchNorm1D, LeakyReLU \\
               FC\_3 &        512 &                        FC\_2 &  BatchNorm1D, LeakyReLU \\
             Z\_mean &         40 &                        FC\_3 &                         \\
                   Z &         40 &               [Z\_mean, 0.1] &                Normal() \\
    \textbf{Decoder} &            &                              &                         \\
               FC\_1 &        512 &                            Z &  BatchNorm1D, LeakyReLU \\
                FC-2 &        512 &                        FC\_1 &  BatchNorm1D, LeakyReLU \\
                FC-3 &        512 &                         FC-2 &  BatchNorm1D, LeakyReLU \\
         \^{X}\_mean &       2000 &                         FC-3 &                         \\
        \^{X}\_scale &          1 &                         FC-3 &                         \\
               \^{X} &       2000 &  [\^{X}\_mean, \^{X}\_scale] &                Normal() \\
    \textbf{Penalty} &            &                              &                         \\
                 MMD &            &            [FC1, Conditions] &  Multi-scale RBF kernel \\
\midrule
           Optimiser &       Adam &                              &                         \\
       Learning rate &      1e-4 &                              &                         \\
          Batch size &        512 &                              &                         \\
              Epochs &      10000 &                              &                         \\
             $\beta$ &      1e-7 &                              &                         \\
            $\gamma$ &          1 &                              &                         \\
     LeakyReLU slope &       0.01 &                              &                         \\
\bottomrule
\end{tabular}
\end{table}

\begin{table}
\centering
\caption{trVAE architecture and hyperparameters for the single-cell PBMC dataset.}
\begin{tabular}{lrll}
\toprule
               Layer & Output Dim &                       Inputs &                   Notes \\
\midrule
      \textbf{Input} &       2000 &                              &                         \\
 \textbf{Conditions} &          2 &                              &                         \\
    \textbf{Encoder} &            &                              &                         \\
               FC\_1 &        512 &          [Input, Conditions] &  BatchNorm1D, LeakyReLU \\
               FC\_2 &        512 &                        FC\_1 &  BatchNorm1D, LeakyReLU \\
               FC\_3 &        512 &                        FC\_2 &  BatchNorm1D, LeakyReLU \\
             Z\_mean &         40 &                        FC\_3 &                         \\
                   Z &         40 &               [Z\_mean, 0.1] &                Normal() \\
    \textbf{Decoder} &            &                              &                         \\
               FC\_1 &        512 &                            Z &  BatchNorm1D, LeakyReLU \\
                FC-2 &        512 &                        FC\_1 &  BatchNorm1D, LeakyReLU \\
                FC-3 &        512 &                         FC-2 &  BatchNorm1D, LeakyReLU \\
         \^{X}\_mean &       2000 &                         FC-3 &                         \\
        \^{X}\_scale &          1 &                         FC-3 &                         \\
               \^{X} &       2000 &  [\^{X}\_mean, \^{X}\_scale] &                Normal() \\
    \textbf{Penalty} &            &                              &                         \\
                 MMD &            &            [FC1, Conditions] &  Multi-scale RBF kernel \\
\midrule
           Optimiser &       Adam &                              &                         \\
       Learning rate &      5e-4 &                              &                         \\
          Batch size &        512 &                              &                         \\
              Epochs &       6000 &                              &                         \\
             $\beta$ &      1e-7 &                              &                         \\
            $\gamma$ &         10 &                              &                         \\
     LeakyReLU slope &       0.01 &                              &                         \\
\bottomrule
\end{tabular}
\end{table}

\begin{table}
\centering
\caption{CoMP architecture and hyperparameters for the scRNA-seq data integration dataset.}
\begin{tabular}{lrll}
\toprule
               Layer & Output Dim &                       Inputs &                   Notes \\
\midrule
      \textbf{Input} &       2000 &                              &                         \\
 \textbf{Conditions} &          2 &                              &                         \\
    \textbf{Encoder} &            &                              &                         \\
               FC\_1 &        512 &          [Input, Conditions] &  BatchNorm1D, LeakyReLU \\
               FC\_2 &        512 &                        FC\_1 &  BatchNorm1D, LeakyReLU \\
               FC\_3 &        512 &                        FC\_2 &  BatchNorm1D, LeakyReLU \\
             Z\_mean &         40 &                        FC\_3 &                         \\
                   Z &         40 &               [Z\_mean, 0.1] &                Normal() \\
    \textbf{Decoder} &            &                              &                         \\
               FC\_1 &        512 &                            Z &  BatchNorm1D, LeakyReLU \\
                FC-2 &        512 &                        FC\_1 &  BatchNorm1D, LeakyReLU \\
                FC-3 &        512 &                         FC-2 &  BatchNorm1D, LeakyReLU \\
         \^{X}\_mean &       2000 &                         FC-3 &                         \\
        \^{X}\_scale &          1 &                         FC-3 &                         \\
               \^{X} &       2000 &  [\^{X}\_mean, \^{X}\_scale] &                Normal() \\
    \textbf{Penalty} &            &                              &                         \\
        CoMP penalty &            &              [Z, Conditions] &                         \\
\midrule
           Optimiser &       Adam &                              &                         \\
       Learning rate &      5e-06 &                              &                         \\
          Batch size &        512 &                              &                         \\
              Epochs &        350 &                              &                         \\
             $\beta$ &          1 &                              &                         \\
            $\gamma$ &       0.75 &                              &                         \\
     LeakyReLU slope &       0.01 &                              &                         \\
\bottomrule
\end{tabular}
\end{table}
\begin{table}
\centering
\caption{CoMP architecture and hyperparameters for the UCI Adult Income dataset.}
\begin{tabular}{lrll}
\toprule
               Layer & Output Dim &                       Inputs &                   Notes \\
\midrule
      \textbf{Input} &         82 &                              &                         \\
 \textbf{Conditions} &          2 &                              &                         \\
    \textbf{Encoder} &            &                              &                         \\
               FC\_1 &         64 &          [Input, Conditions] &  BatchNorm1D, LeakyReLU \\
               FC\_2 &         64 &                        FC\_1 &  BatchNorm1D, LeakyReLU \\
             Z\_mean &         16 &                        FC\_2 &                         \\
                   Z &         16 &               [Z\_mean, 0.1] &                Normal() \\
    \textbf{Decoder} &            &                              &                         \\
               FC\_1 &         64 &                            Z &  BatchNorm1D, LeakyReLU \\
                FC-2 &         64 &                        FC\_1 &  BatchNorm1D, LeakyReLU \\
         \^{X}\_mean &         82 &                         FC-2 &                         \\
        \^{X}\_scale &          1 &                         FC-2 &                         \\
               \^{X} &         82 &  [\^{X}\_mean, \^{X}\_scale] &                Normal() \\
    \textbf{Penalty} &            &                              &                         \\
        CoMP penalty &            &              [Z, Conditions] &                         \\
\midrule
           Optimiser &       Adam &                              &                         \\
       Learning rate &      1e-04 &                              &                         \\
          Batch size &       4096 &                              &                         \\
              Epochs &      10000 &                              &                         \\
             $\beta$ &          1 &                              &                         \\
            $\gamma$ &        0.5 &                              &                         \\
     LeakyReLU slope &       0.01 &                              &                         \\
\bottomrule
\end{tabular}
\end{table}

\begin{table}
\centering
\caption{VFAE architecture and hyperparameters for the UCI Adult Income dataset.}
\begin{tabular}{lrll}
\toprule
               Layer & Output Dim &                       Inputs &                   Notes \\
\midrule
      \textbf{Input} &         82 &                              &                         \\
 \textbf{Conditions} &          2 &                              &                         \\
    \textbf{Encoder} &            &                              &                         \\
               FC\_1 &         64 &          [Input, Conditions] &  BatchNorm1D, LeakyReLU \\
             Z\_mean &         16 &                        FC\_1 &                         \\
            Z\_scale &         16 &                        FC\_1 &                         \\
                   Z &         16 &          [Z\_mean, Z\_scale] &                Normal() \\
    \textbf{Decoder} &            &                              &                         \\
               FC\_1 &         64 &                            Z &  BatchNorm1D, LeakyReLU \\
         \^{X}\_mean &         82 &                        FC\_1 &                         \\
        \^{X}\_scale &          1 &                        FC\_1 &                         \\
               \^{X} &         82 &  [\^{X}\_mean, \^{X}\_scale] &                Normal() \\
    \textbf{Penalty} &            &                              &                         \\
                 MMD &            &              [Z, Conditions] &                         \\
\midrule
           Optimiser &       Adam &                              &                         \\
       Learning rate &      1e-04 &                              &                         \\
          Batch size &        512 &                              &                         \\
              Epochs &      10000 &                              &                         \\
             $\beta$ &          1 &                              &                         \\
            $\gamma$ &       1000 &                              &                         \\
           RBF scale &          2 &                              &                         \\
     LeakyReLU slope &       0.01 &                              &                         \\
\bottomrule
\end{tabular}
\end{table}

\begin{table}
\centering
\caption{trVAE architecture and hyperparameters for the UCI Adult Income dataset.}
\label{hparam2}
\begin{tabular}{lrll}
\toprule
               Layer & Output Dim &                       Inputs &                   Notes \\
\midrule
      \textbf{Input} &         82 &                              &                         \\
 \textbf{Conditions} &          2 &                              &                         \\
    \textbf{Encoder} &            &                              &                         \\
               FC\_1 &         32 &          [Input, Conditions] &  BatchNorm1D, LeakyReLU \\
               FC\_2 &         32 &                        FC\_1 &  BatchNorm1D, LeakyReLU \\
             Z\_mean &          8 &                        FC\_2 &                         \\
                   Z &          8 &               [Z\_mean, 0.1] &                Normal() \\
    \textbf{Decoder} &            &                              &                         \\
               FC\_1 &         32 &                            Z &  BatchNorm1D, LeakyReLU \\
                FC-2 &         32 &                        FC\_1 &  BatchNorm1D, LeakyReLU \\
         \^{X}\_mean &         82 &                         FC-2 &                         \\
        \^{X}\_scale &          1 &                         FC-2 &                         \\
               \^{X} &         82 &  [\^{X}\_mean, \^{X}\_scale] &                Normal() \\
    \textbf{Penalty} &            &                              &                         \\
                 MMD &            &            [FC1, Conditions] &  Multi-scale RBF kernel \\
\midrule
           Optimiser &       Adam &                              &                         \\
       Learning rate &      1e-04 &                              &                         \\
          Batch size &       4096 &                              &                         \\
              Epochs &      10000 &                              &                         \\
             $\beta$ &      0.001 &                              &                         \\
            $\gamma$ &         10 &                              &                         \\
     LeakyReLU slope &       0.01 &                              &                         \\
\bottomrule
\end{tabular}
\end{table}

\end{document}